\newcommand{\FF}{\mathcal{F}}
\newtheorem{theorem}{Theorem}[section]
\newtheorem{lemma}[theorem]{Lemma}
\newtheorem{proposition}[theorem]{Proposition}
\newtheorem{remark}[theorem]{Remark}
\newtheorem{definition}[theorem]{Definition}
\newtheorem{assumption}[theorem]{Assumption}
\newcommand{\E}{\mathbb{E}}
\renewcommand{\epsilon}{\varepsilon}
\renewcommand{\ln}{\log}
\DeclareSymbolFont{extraup}{U}{zavm}{m}{n}
\DeclareMathSymbol{\varheart}{\mathalpha}{extraup}{86}
\DeclareMathSymbol{\vardiamond}{\mathalpha}{extraup}{87}
\DeclareMathOperator*{\argmax}{arg\,max}
\DeclareMathOperator*{\argmin}{arg\,min}
\newcommand{\OLS}{$\mathrm{OptLS}$ }
\newcommand{\SquareCB}{$\mathrm{SquareCB}$ }
\newcommand{\delud}{d_{\mathrm{eluder}}}
\newcommand{\bardelud}{\bar{d}_{\mathrm{eluder}}}
\renewcommand{\epsilon}{\varepsilon}
\renewcommand{\ln}{\log}
\renewcommand{\epsilon}{\varepsilon}
\renewcommand{\ln}{\log}
\newcommand{\pushright}[1]{\ifmeasuring@#1\else\omit\hfill$\displaystyle#1$\fi\ignorespaces}
\newcommand{\pushleft}[1]{\ifmeasuring@#1\else\omit$\displaystyle#1$\hfill\fi\ignorespaces}
\title{Experiment Planning with Function Approximation}
\author{%
   Aldo Pacchiano\\
   Broad Institute \& Boston University\\
{\small\texttt{apacchia@broadinstitute.org}}\\
   \And
   Jonathan N. Lee\\
   {\small{Stanford University}}\\
   {\small\texttt{jnl@stanford.edu}}\\
   \And
   Emma Brunskill\\
   {\small{Stanford University}}\\
   {\small\texttt{ebrun@cs.stanford.edu}}
}
\begin{document}

\maketitle

\begin{abstract}
We study the problem of experiment planning with function approximation in contextual bandit problems. In settings where there is a significant overhead to deploying adaptive algorithms---for example, when the execution of the data collection policies is required to be distributed, or a human in the loop is needed to implement these policies---producing in advance a set of policies for data collection is paramount. We study the setting where a large dataset of contexts but not rewards is available and may be used by the learner to design an effective data collection strategy. Although when rewards are linear this problem has been well studied~\cite{zanette2021design}, results are still missing for more complex reward models. In this work we propose two experiment planning strategies compatible with function approximation. The first is an eluder planning and sampling procedure that can recover optimality guarantees depending on the eluder dimension~\cite{russo2013eluder} of the reward function class. For the second, we show that a uniform sampler achieves competitive optimality rates in the setting where the number of actions is small. We finalize our results introducing a statistical gap fleshing out the fundamental differences between planning and adaptive learning and provide results for planning with model selection.

\end{abstract}

\section{Introduction}
Data-driven decision-making algorithms have achieved impressive empirical success in various domains such as online personalization~\cite{agarwal2016making,tewari2017ads}, games~\cite{mnih2015human,silver2016mastering}, dialogue systems~\cite{li2016deep} and robotics~\cite{kober2013reinforcement,lillicrap2015continuous}. In many of these decision-making scenarios, it is often advantageous to consider contextual information when making decisions. This recognition has sparked a growing interest in studying adaptive learning algorithms in the setting of contextual bandits~\cite{langford2007epoch,li2010contextual,agrawal2013thompson} and reinforcement learning (RL)~\cite{sutton1992introduction}. Adaptive learning scenarios involve the deployment of data collection policies, where learners observe rewards or environment information and utilize this knowledge to shape subsequent data collection strategies. Nonetheless, the practical implementation of adaptive policies in real-world experiments currently presents significant challenges. First, there is significant infrastructure requirements and associated overheads which require skills and resources many organizations lack. For example, while there are end-user services that enable organizations to automatically send different text messages to different individuals, such services typically do not offer adaptive bandit algorithms. Second,  the resulting reward signal may be significantly delayed. As an example, the effect of personalized health screening reminders on a patient making a doctors appointments may take weeks. 
Therefore, while it is increasingly recognized that there exist other settings where context-specific policies are likely to be beneficial, including behavioural science\cite{bryan2021behavioural}, many organizations working in these settings may find it infeasible to run experiments with adaptive policies. To bridge this gap, there is a need to explore the deployment of non-adaptive or static strategies that can effectively collect data with minimal or no updates. Surprisingly, limited research has been conducted to investigate this particular setting, highlighting the importance of further exploration in this area.

In this work, we consider how to design a static experimental sampling strategy for contextual multi-armed bandit settings which, when executed, will yield a dataset from which we can compute a near-optimal contextual policy (one with small simple regret). Our framework closely adheres to the experiment planning problem formulation originally introduced by~\cite{zanette2021design}. The problem involves a learner interacting with a contextual bandit problem where the reward function is unknown. Our assumption is that the learner possesses a substantial offline dataset comprising of $m$ sample contexts, none of which include reward information. The learner's objective is to utilize this data to design a static policy sequence of length $T$ (where $m \geq T$) that enables the collection of valuable reward information when deployed in the real world. The ultimate goal is to produce an almost optimal policy using the data gathered during deployment. To address this scenario, the authors of~\cite{zanette2021design} propose a two-phase approach involving Planner and Sampler algorithms. In the planner phase, the learner employs the context data to generate a set of sampling policies, which are then utilized in the sampler phase. The primary focus of~\cite{zanette2021design} lies in the analysis of the linear case, assuming the reward function to be a linear function of the context vectors. Their key algorithmic contribution is the linear Planner Algorithm, which encompasses a reward-free instance of $\mathrm{LinUCB}$. The analysis presented in~\cite{zanette2021design} demonstrates that the required number of samples under the static sampler to achieve an $\epsilon$-optimal policy (also referred to as simple regret) scales as $\mathcal{O}\left( d^2/\epsilon^2 \right)$, where $d$ represents the dimension of the underlying space. This result matches the online minimax sample complexity for the simple regret problem~\cite{chu2011contextual,abbasi2011improved}.

The algorithm proposed by~\cite{zanette2021design} effectively constructs a static policy sequence of adequate span, utilizing the linear structure of the problem. However, in many problem settings, linearity alone may not be sufficient to capture the true nature of the underlying reward model. This limitation is particularly evident in scenarios such as genetic perturbation experimentation~\cite{pacchiano2022neural} or other similar contexts. In such cases, the availability of algorithms that do not rely on linearity becomes crucial. Unfortunately, extending the results and techniques presented in~\cite{zanette2021design} to the generic function approximation regime is not straightforward. 

Here we consider when the reward function is realized by an unknown function $f_\star$ belonging to a known function class $\mathcal{F}$ (which can be more complex than linear). 

\paragraph{Adaptive Learning with Function Approximation.} In adaptive learning settings where a learner can change the data collection policies as it observes rewards and has much more flexibility than in experiment planning scenarios, various adaptive learning procedures compatible with generic function approximation have been proposed for contextual bandit problems. Among these, two significant methods relevant to our discussion are the Optimistic Least Squares algorithm (\OLS) introduced by~\cite{russo2013eluder} and the SquareCB algorithm introduced by~\cite{foster2020beyond}. Both of these methods offer guarantees for cumulative regret. Specifically, the cumulative regret of \OLS scales as $\mathcal{O}(\sqrt{\delud  \log(|\mathcal{F}|) T }) $, while the cumulative regret of \SquareCB scales as $\mathcal{O}\left(\sqrt{|\mathcal{A}| \log(|\mathcal{F}|) T } \right)$, where $\mathcal{A}$ corresponds to the set of actions. The eluder dimension\footnote{We formally introduce this quantity in Section~\ref{section::planning_eluder_dimension}. Here we use a simpler notation to avoid confusion.} ($\delud$) is a statistical complexity measure introduced by~\cite{russo2013eluder}, which enables deriving guarantees for \emph{adaptive} learning algorithms based on the principle of optimism in the face of uncertainty in contextual bandits and reinforcement learning~\cite{li2022understanding,jin2021bellman,osband2014model,chan2021parallelizing}. By employing an online-to-batch conversion strategy, these two methods imply that the number of samples required to achieve $\epsilon$-simple regret using an adaptive algorithm is at most $\mathcal{O}\left( \delud\log(|\mathcal{F}|)/\epsilon^2 \right)$ and $\mathcal{O}\left( |\mathcal{A}| \log(|\mathcal{F}|)/\epsilon^2\right)$ respectively.

\paragraph{Contributions.} In this paper, we address  the function approximation setting within the contextual bandit (static) experiment planning problem. Although in experiment planning the data collection policies have to be produced before interacting with the environment, we establish that surprisingly the adaptive $\epsilon$-simple regret rates achieved by \OLS and \SquareCB are also attainable by a static experiment sampling strategy. To achieve this, we introduce and analyze two experimental planning algorithms. The $\mathrm{PlannerEluder}$ algorithm (see Section~\ref{section::planning_eluder_dimension}) utilizes confidence intervals derived from least squares optimization to construct a static set of policies to be executed during the sampling phase. When the algorithm has access to a sufficient number of offline samples to generate $T = \Omega\left( \delud\log(|\mathcal{F}|)/\epsilon^2 \right)$ static policies, the learner can produce an $\epsilon$-optimal policy denoted as $\widehat{\pi}_T$ thus matching the online-to-batch $\epsilon$-simple regret rates of \OLS. Since the eluder dimension of linear function classes of dimension $d$ is -up to logarithmic factors- of order $\mathcal{O}(d)$, our results recover the linear experiment planning rates of~\cite{zanette2021design}. Additionally, in Section~\ref{section::uniform_sampling}, we demonstrate that collecting $T = \Omega\left( |\mathcal{A}|\log(|\mathcal{F}|)/\epsilon^2 \right)$ uniform samples is sufficient to obtain an $\epsilon$-optimal policy $\widehat{\pi}_T$. This matches the online-to-batch $\epsilon$-simple regret rates of \SquareCB. These two results yield conclusions similar to those known in the linear setting~\cite{zanette2021design}, but for general function classes.

These results prompt two important questions: (1) are existing adaptive cumulative regret algorithms already optimal for simple regret minimization, and (2) is static experiment planning sufficient to match the simple regret guarantees of adaptive learning  for realizable contextual bandits?

In Section~\ref{section::gap_experiment_planning_adaptive}, we provide a negative answer to both questions. We present a certain structured class of bandit problems where a different adaptive algorithm can require significantly less samples than that implied by the bounds from \OLS and \SquareCB. Our result is for the  \emph{realizable} setting~\cite{agarwal2012contextual,
foster2018practical, foster2020beyond, simchi2021bypassing,chan2021parallelizing}, where the true reward function $f_\star$ belongs to a known function class $\mathcal{F}$, where the true complexity of simple regret problems in adaptive learning scenarios has not be known known in this setting.  This result complements recent results~\cite{li2022instance} that online-to-batch strategies are suboptimal for minimizing simple regret in the agnostic contextual bandits setting.\footnote{ In the agnostic case the learner is instead presented with an arbitrary policy class $\Pi$ that may or may not be related to the mean reward function $f_\star$. In this setting a recent study by \cite{li2022instance} introduces an algorithm that achieves sharp rates characterized in terms of a quantity $\rho_\Pi$. }

For the second question, our  statistical lower bound shows that for this class of problems, %
a significantly smaller number of  samples are needed to find an $\epsilon$-optimal policy when using  adaptive learning, than when using a static policy. This result complements related work in reinforcement learning with realizable linear state-action value function approximation\cite{zanette2021exponential}.

Finally, we address the problem of model selection when the learner is presented with a family of reward function classes $\{\mathcal{F}_i\}_{i=1}^M$ and is guaranteed that the true reward function $f_\star$ belongs to $\mathcal{F}_{i_\star}$, where the index $i_\star$ is unknown. We demonstrate that as long as $T = \Omega\left( |\mathcal{A}|\log(\max(M,|\mathcal{F}_{i\star}|))/\epsilon^2 \right)$ uniform samples are available, it is possible to construct an $\epsilon$-optimal policy denoted as $\widehat{\pi}_T$. Further details regarding these results can be found in Section~\ref{section::model_selection}.

\section{Further Related Work}
\paragraph{Best Arm Identification and Design of Experiments.} 
Previous work  to efficiently produce an almost optimal policy for non-contextual linear and multi-armed bandits settings is based on  best-arm identification procedures~\cite{fiez2019sequential,jedra2020optimal,degenne2020gamification,soare2014best,tao2018best,xu2018fully}. These are strategies that react adaptively to the collected rewards and achieve instance dependent sample complexity. Other papers in the design of experiments literature~\cite{kiefer1960equivalence,esfandiari2021regret,lattimore2020learning,ruan2021linear} introduce methods for designing a non-adaptive policy that can be used to find an optimal arm with high probability in non-contextual scenarios.

\paragraph{Safe Optimal Design.} When safety concerns are imposed in the exploration policy, works such as~\cite{zhu2022safe} and~\cite{wan2022safe} have studied the problem of producing a safe and efficient static policy used to collect data that can then be used for policy evaluation in the MAB and linear settings. This is in contrast with the unconstrained contextual function approximation scenario we study.

\paragraph{Reward Free Reinforcement Learning.} In Reinforcement Learning settings a related line of research~\cite{jin2020reward,wang2020reward,chen2022statistical} considers the problem of producing enough exploratory data to allow for policy optimization for all functions in a reward class. These works allow for the learner to interact with the world for a number of steps, enough to collect data that will allow for zero-shot estimation of the optimal policy within a reward family. This stands in contrast with the experiment planning setup, where the objective is to produce a sequence of policies for static deployment at test time with the objective of learning the unknown reward function.

\section{Problem Definition}

We study a two-phase interaction between a learner and its environment consisting of a \emph{Planning} and a \emph{Sampling} phase. During the planning phase the learner has access to $m \geq T$ i.i.d. offline context samples\footnote{This assumption is based on the fact that gathering offline context samples can be significantly less costly compared to conducting multiple rounds of experimental deployment.} from a context distribution $\mathcal{P}$ supported on a context space $\mathcal{X}$ as well as knowledge of a reward function class $\mathcal{F}$ where each element $f \in \mathcal{F}$ has domain $\mathcal{X} \times \mathcal{A}$ where $\mathcal{A}$ is the action set. During the sampling phase the learner interacts with a contextual bandit problem for $T$ steps where at each time-step $t$ a context $x_t \sim \mathcal{P}$ is revealed to the learner, the learner takes an action $a_t$ and receives a reward $r_t$. We adopt the commonly used realizability assumption, which is widely employed in the contextual bandit literature~\cite{agarwal2012contextual,
foster2018practical, foster2020beyond, simchi2021bypassing,chan2021parallelizing}.
\begin{assumption}[Realizability]\label{assumption::realizability}
There exists a function $f_\star \in \mathcal{F}$ such that $r_t = f_\star( x_t, a_t) + \xi_t$ where $f_\star \in \mathcal{F}$ and $\xi_t$ is a conditionally zero mean $1-$subgaussian random variable for all $t \in [T]$ during the sampling phase. 
\end{assumption} 
During the planning phase the learner is required to build a static policy sequence $\{ \pi_i\}_{i=1}^T$ (a policy is a mapping from $\mathcal{X}$ to $\Delta_{\mathcal{A}}$, the set of distributions over actions) that, without knowledge of the reward, can be deployed to collect data during the sampling phase. In contrast with adaptive learning procedures, in the experiment planning problem the policies in $\{ \pi_i\}_{i=1}^T$ have to be fully specified during the planning phase and cannot depend on the learner's observed reward values during the sampling phase. The learner's objective is to use the reward data collected during the sampling phase to produce an $\epsilon-$optimal policy $\widehat{\pi}_T$ such that, 
\begin{equation*}
  \mathbb{E}_{x \sim \mathcal{P}, a \sim \widehat{\pi}_T(x)}\left[ f_\star(x,a)\right] +\epsilon \geq \max_{\pi} \mathbb{E}_{x \sim \mathcal{P}, a \sim \pi(x)}\left[ f_\star(x,a)\right].
\end{equation*}
for a suboptimality parameter $\epsilon > 0$. Because we have assumed realizability of the reward function, the optimal policy among all possible policies $\pi_\star = \argmax_{\pi} \mathbb{E}_{x \sim \mathcal{P}, a \sim \pi(x)}\left[ f_\star(x,a)\right]$ can be written as  $\pi_\star( x) = \argmax_{a \in \mathcal{A}} f_\star(x,a)$ for all $x \in \mathcal{X}$.

Throughout this work we make the following boundedness assumptions,

\begin{assumption}[Bounded Function Range]\label{assumption::bounded_range}
The range of $\mathcal{F}$ is bounded, i.e. for all $x, a \in \mathcal{X} \times \mathcal{A}$, $|f(x,a) |\leq B$ for some constant $B > 0$.
\end{assumption}

\begin{assumption}[Bounded Noise]\label{assumption::bounded_noise}
The noise variables are bounded $|\xi_t |\leq \bar{B}$ for all $t \in [T]$ for some constant $\bar{B} > 0$.
\end{assumption}
Assumptions~\ref{assumption::bounded_range} and~\ref{assumption::bounded_noise} imply $| r_t | \leq B+\bar{B}$ for all $t \in [T]$.
\section{Planning Under the Eluder Dimension}\label{section::planning_eluder_dimension}
The eluder dimension is a sequential notion of complexity for function classes that was originally introduced by~\cite{russo2013eluder}. The eluder dimension can be used to bound the regret of optimistic least squares algorithms in contextual bandits~\cite{russo2013eluder,chan2021parallelizing} and reinforcement learning~\cite{osband2014model}. Informally speaking the eluder dimension is the length of the longest sequence of points one must observe to accurately estimate the function value at any other point. We use the eluder dimension definition from~\cite{russo2013eluder}. %

\begin{definition}($\epsilon-$dependence) Let $\mathcal{G}$ be a scalar function class with domain $\mathcal{Z}$ and $\epsilon > 0$. An element $z \in \mathcal{Z}$ is $\epsilon-$dependent on $\{z_1, \cdots, z_n\} \subseteq \mathcal{Z}$ w.r.t. $\mathcal{G}$ if any pair of functions $g, g' \in \mathcal{G}$ satisfying $\sqrt{ \sum_{i=1}^n (g(z_i) - g'(z_i))^2 } \leq \epsilon$ also satisfies $g(z) - g'(z) \leq \epsilon$. Furthermore,  $z \in \mathcal{Z}$ is $\epsilon-$independent of $\{z_1, \cdots, z_n\}$ w.r.t. $\mathcal{G}$ if it is not $\epsilon-$dependent on $\{z_1, \cdots, z_n\}$.
\end{definition}

\begin{definition}($\epsilon$-eluder)
The $\epsilon-$non monotone eluder dimension $\widebar{\delud}(\mathcal{G},\epsilon)$ of $\mathcal{G}$ is the length of the longest sequence of elements in $\mathcal{Z}$ such that every element is $\epsilon-$independent of its predecessors. Moreover, we define the $\epsilon-$eluder dimension $\delud(\mathcal{G},\epsilon)$ as $\delud(\mathcal{G},\epsilon) = \max_{\epsilon' \geq \epsilon} \widebar{\delud}(\mathcal{G},\epsilon)$. 
\end{definition}
By definition the $\epsilon-$eluder dimension increases as $\epsilon$ is driven down to zero. Let $\mathcal{D}$ be a dataset of $\mathcal{X}$, $\mathcal{A}$ pairs. For any $f, f'$ with support $\mathcal{X} \times\mathcal{A}$ we use the notation $\| f-f'\|_{\mathcal{D}}$ to denote the data norm of the difference between functions $f$ and $f'$:
\begin{equation*}
\| f-f'\|_{\mathcal{D}} = \sqrt{ \sum_{(x,a) \in \mathcal{D} }\left( f(x,a) - f'(x,a)\right)^2}.
\end{equation*}
\vspace{-.5cm}
\begin{algorithm}[H]
    \caption{$\mathrm{EluderPlanner}$}\label{alg:EluderExperimentPlanning}
    \begin{algorithmic}[1]
        \STATE Input: $m \geq T$ samples $\{x_\ell \sim \mathcal{P}\}_{\ell=1}^m $, function class $\mathcal{F}$, confidence radius function $\beta_{\mathcal{F}} : t, \delta \rightarrow \mathbb{R}$. 
    \vspace{-.3cm}
        \STATE Initialize data buffer $\mathcal{D}_1 = \emptyset$
        \FOR{$t=1$  \ldots  $T$}
        \STATE For any $x \in \mathcal{X}$ define the uncertainty radius of action $a \in \mathcal{A}$ as,
        \begin{equation*}
            \omega(x,a, \mathcal{D}_t) = \max_{f, f' \in \mathcal{F} \text{ s.t. } \| f - f' \|_{\mathcal{D}_t \leq 4\beta_{\mathcal{F}}(t, \delta)} } f(x,a) - f'(x,a).
        \end{equation*}
        \STATE Define the policy $\pi_t$ as $\pi_t(x) = \argmax_{a \in \mathcal{A}} \omega(x,a, \mathcal{D}_t)$ for all $x \in\mathcal{X}$.
        \STATE Update $\mathcal{D}_{t+1} \leftarrow \mathcal{D}_{t} \cup \{ ( x_t, \pi_t(x_t))  \} $
        \ENDFOR
    \end{algorithmic}
\end{algorithm}
\begin{wrapfigure}{l}{0.5\textwidth}
\begin{minipage}{0.48\textwidth}
\vspace{-.8cm}
    \begin{algorithm}[H]
\caption{$\mathrm{Sampler}$}\label{alg:EluderExperimentDeployment}
    \begin{algorithmic}[1]
        \STATE Input: number of time-steps $T$. 
        \STATE Initialize data buffer $\widetilde{\mathcal{D}}_1 = \emptyset$
         \FOR{$t=1$  \ldots  $T$}
             \STATE Define deployment policy $\tilde \pi_t$ as $\pi_t$. 
        \STATE Observe context $\tilde x_t\sim \mathcal{P}$.
        \STATE Play action  $\tilde{a}_t = \tilde \pi_t(\tilde x_t)$ and receive reward $\tilde r_t$.
        \STATE Update $\widetilde{\mathcal{D}}_{t+1} \leftarrow \widetilde{\mathcal{D}}_t \cup \{ (\tilde{x}_t, \tilde{a}_t, \tilde{r}_t)\}$.
        \ENDFOR
    \end{algorithmic}
\end{algorithm}
\end{minipage}
\vspace{-.3cm}
\end{wrapfigure}
The $\mathrm{EluderPlanner}$ algorithm takes as input $m \geq T$ i.i.d. samples from the context distribution $\{ x_\ell \sim \mathcal{P}\}_{\ell=1}^m$ and a realizable function class $\mathcal{F}$ satisfying Assumption~\ref{assumption::realizability}. The learner iterates over $T$ out of these $m$ samples in a sequential manner. We use the name $x_t$ to denote the $t-$th input sample and call $\pi_t$ to the (deterministic) exploration policies produced upon processing samples $1, \cdots, t-1$. We use the notation $\mathcal{D}_t = \{ (x_\ell, \pi_{\ell}(x_\ell)) \}_{\ell=1}^{t-1}$ to denote the dataset composed of the first $t-1$ (ordered) samples from $\{ x_\ell \}_{\ell=1}^m$ and actions from policies $\{ \pi_\ell \}_{\ell=1}^{t-1}$. We adopt the convention that $\mathcal{D}_1 = \emptyset$. The output of the $\mathrm{EluderPlanner}$ algorithm is the sequence of policies $\{ \pi_t \}_{t=1}^T$. Policy $\pi_t$ is defined as $\pi_t(x) = \argmax_{a \in \mathcal{A}}  \omega(x,a, \mathcal{D}_t)$ such that $\omega(x, a, \mathcal{D}_t)$ is an uncertainty measure defined as,
\begin{equation}\label{equation::uncertainty_measure_definition_radius}
  \omega(x,a, \mathcal{D}_t) = \max_{f, f' \in \mathcal{F} \text{ s.t. } \| f - f' \|_{\mathcal{D}_t }\leq 4\beta_{\mathcal{F}}(t, \delta)}  f(x,a) - f'(x,a).
\end{equation}
Where the confidence radius function $\beta_\mathcal{F}(\delta, t)$ equals 
\begin{equation}\label{equation::confidence_radius_function}\beta_\mathcal{F}(\delta, t)  = \bar{C}(B+ \bar{B})\sqrt{ \log\left(\frac{|\mathcal{F}|t}{\delta}\right)}\end{equation}
for some universal constant $\bar{C} > 0$ defined in Proposition~\ref{proposition::conditions_beta_t}. This is a complexity radius implied by the guarantees of least squares regression (Lemma \ref{lemma::new_least_squares_estimator}) and constraints imposed by Lemma~\ref{lemma::radii_lemma}. See Appendix~\ref{section::conditions_beta_t} and Proposition~\ref{proposition::conditions_beta_t} for a detailed discussion about this definition.  
We call the combination of the $\mathrm{EluderPlanner}$ and $\mathrm{Sampler}$ as the \emph{eluder planning algorithm}.  Its performance guarantees are obtained by relating the regret of a constructed sequence of optimistic policies $\{ \widetilde{\pi}_t^{\mathrm{opt}} \}_{t=1}^T$ based on the data collected by $\{ \pi_t \}_{t=1}^T$ to the sum of uncertainty radii $\sum_{t=1}^T \omega( x_t,  \pi_t(x_t), \mathcal{D}_t)$ of the context-actions collected during the planning phase. Using optimism when constructing this policy sequence is important. In contrast with the sequence of greedy policies $\pi_t^{\mathrm{greedy}}(\cdot | x) = \argmax_{a \in \mathcal{A}} \widehat{f}_t(x,a)$, where $\widehat{f}_t = \argmin_{f \in \mathcal{F}} \sum_{(x,a,r) \in \widetilde{\mathcal{D}}_t} (f(x,a) - r)^2$ resulting from a least squares over $\widetilde{\mathcal{D}}_t$, the sequence of optimistic policies satisfies a regret bound. Thus, playing a uniform policy from the sequence $\{ \widetilde{\pi}_t^{\mathrm{opt}} \}_{t=1}^T$ satisfies a simple regret guarantee. The sum of uncertainty radii $\sum_{t=1}^T \omega( x_t,  \pi_t(x_t), \mathcal{D}_t)$ is bounded using a Lemma we borrow from~\cite{chan2021parallelizing}. 
\begin{lemma}\label{lemma::eluder_dimension_lemma}
[Lemma 3 from~\cite{chan2021parallelizing}]
Let $\mathcal{F}$ be a function class satisfying Assumption~\ref{assumption::bounded_range} with $\epsilon$-eluder dimension $\delud(\mathcal{F}, \epsilon)$. For all $T\in\mathbb{N}$ and any dataset sequence $\{\bar{D}_t\}_{t=1}^\infty$ with $\bar{D}_1 = \emptyset$ and $\bar{D}_t = \{ \{\bar{x}_\ell, \bar a_\ell\}\}_{\ell=1}^{t-1}$ of context-action pairs, the following inequality on the sum of the uncertainty radii holds,
\begin{equation*}
    \sum_{t=1}^T \omega(\bar{x}_t, \bar a_t, \bar{D}_t) \leq \mathcal{O}\left( \min\left( BT, B\delud(\mathcal{F},B/T) +\beta_\mathcal{F}(T, \delta) \sqrt{ \delud(\mathcal{F}, B/T)  T} \right) \right).
\end{equation*}
\end{lemma}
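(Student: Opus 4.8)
This is the standard Russo--Van~Roy potential/pigeonhole estimate for sums of confidence widths, and I would run it in two stages. First, the trivial bound: since every $f\in\mathcal{F}$ takes values in $[-B,B]$, any feasible pair in the definition of $\omega$ has $f(x,a)-f'(x,a)\le 2B$, so $\omega(\bar x_t,\bar a_t,\bar D_t)\le 2B$ and $\sum_{t\le T}\omega(\bar x_t,\bar a_t,\bar D_t)\le 2BT$, which is the $BT$ branch of the $\min$. The same observation lets me replace $\beta_{\mathcal{F}}(t,\delta)$ by $\beta_{\mathcal{F}}(T,\delta)$ throughout: $t\mapsto\beta_{\mathcal{F}}(t,\delta)$ is nondecreasing, and enlarging the constraint $\|f-f'\|_{\bar D_t}\le 4\beta_{\mathcal{F}}(t,\delta)$ only increases $\omega$; write $\beta\eqdef\beta_{\mathcal{F}}(T,\delta)$.

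\emph{Stage 1 (level-set bound).} For a fixed scale $\epsilon>0$ set $L_\epsilon=\{t\le T:\ \omega(\bar x_t,\bar a_t,\bar D_t)>\epsilon\}$; the crux is the claim
\[
|L_\epsilon|\ \le\ \Big(\tfrac{16\beta^2}{\epsilon^2}+1\Big)\,\delud(\mathcal{F},\epsilon).
\]
I would prove it by greedily bucketing the pairs $\{\bar z_t\eqdef(\bar x_t,\bar a_t):t\in L_\epsilon\}$, processed in increasing order of $t$, into disjoint subsequences $B_1,B_2,\dots$: when $\bar z_t$ arrives, append it to any current bucket $B_i$ of which it is $\epsilon$-independent, and otherwise open a new bucket. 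Two facts close the argument. (i) By construction each bucket is a sequence in which every element is $\epsilon$-independent of its predecessors, hence has length at most $\widebar{\delud}(\mathcal{F},\epsilon)\le\delud(\mathcal{F},\epsilon)$. (ii) A new bucket is opened at step $t$ only when $\bar z_t$ is $\epsilon$-dependent on \emph{all} current buckets $B_1,\dots,B_K$; since $\omega(\bar z_t,\bar D_t)>\epsilon$ there is a witness pair $f,f'\in\mathcal{F}$ with $f(\bar z_t)-f'(\bar z_t)>\epsilon$ and $\|f-f'\|_{\bar D_t}\le 4\beta$, and $\epsilon$-dependence on $B_i$ (the contrapositive of the definition, applied with this pair) forces $\sum_{z\in B_i}(f(z)-f'(z))^2>\epsilon^2$. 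Summing over the disjoint $B_i\subseteq\bar D_t$ gives $K\epsilon^2<\|f-f'\|_{\bar D_t}^2\le 16\beta^2$, so $K<16\beta^2/\epsilon^2$ and the total number of buckets is at most $16\beta^2/\epsilon^2+1$. Multiplying (i) and (ii) yields the claim.

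\emph{Stage 2 (integrate over scales).} Sort the widths $\omega_{(1)}\ge\dots\ge\omega_{(T)}$ and put $d\eqdef\delud(\mathcal{F},B/T)$. Because $\epsilon\mapsto\delud(\mathcal{F},\epsilon)$ is nonincreasing (the definition monotonizes $\widebar{\delud}$ exactly for this), Stage~1 gives, for every $k$ with $\omega_{(k)}>B/T$, that $k\le(16\beta^2/\omega_{(k)}^2+1)\,d$, i.e.\ $\omega_{(k)}\le 4\beta\sqrt{d/(k-d)}$ once $k>d$, while always $\omega_{(k)}\le 2B$. Splitting the sum at $B/T$: the terms with $\omega_{(k)}\le B/T$ contribute at most $T\cdot(B/T)=B$, and the terms with $\omega_{(k)}>B/T$ contribute at most $\sum_{k\le d}2B+\sum_{k>d}4\beta\sqrt{d/(k-d)}\le 2Bd+4\beta\sqrt{d}\sum_{j=1}^{T}j^{-1/2}\le 2Bd+8\beta\sqrt{dT}$. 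Hence $\sum_{t\le T}\omega(\bar x_t,\bar a_t,\bar D_t)\le 2B+2Bd+8\beta\sqrt{dT}=\mathcal{O}\big(B\,\delud(\mathcal{F},B/T)+\beta_{\mathcal{F}}(T,\delta)\sqrt{\delud(\mathcal{F},B/T)\,T}\big)$, and taking the minimum with $2BT$ gives the lemma.

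\emph{Main obstacle.} The only real work is Stage~1: I expect the delicate points to be handling the asymmetric, witness-based notion of $\epsilon$-dependence so that observations (i) and (ii) both go through, and recognizing that it is the disjointness of the buckets inside $\bar D_t$ that converts ``$\bar z_t$ is $\epsilon$-dependent on $K$ buckets'' into the numeric bound $K<16\beta^2/\epsilon^2$. Stage~2 is then bookkeeping, with the single subtlety that the cutoff $B/T$ must be chosen so the $1/\epsilon^2$-type tail of the level-set bound does not dominate, and that the monotonized eluder dimension is precisely what permits replacing the scale-dependent $\delud(\mathcal{F},\epsilon)$ by the single quantity $\delud(\mathcal{F},B/T)$.
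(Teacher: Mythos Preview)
Your proof is correct and is precisely the standard Russo--Van~Roy pigeonhole argument. Note, however, that the paper does not actually prove this lemma: it is quoted verbatim as ``Lemma~3 from~\cite{chan2021parallelizing}'' and used as a black box, so there is no in-paper proof to compare against. Your write-up is what one finds (up to constants) in the cited source and in~\cite{russo2013eluder}: the level-set/bucketing bound $|L_\epsilon|\le(16\beta^2/\epsilon^2+1)\,\delud(\mathcal{F},\epsilon)$ followed by sorting the widths and summing via the cutoff at $B/T$. The one place I would tighten for a formal proof is the passage from Stage~1 to the inequality $k\le(16\beta^2/\omega_{(k)}^2+1)d$: strictly, $|L_{\omega_{(k)}}|$ counts only widths strictly above $\omega_{(k)}$, so one should take $\epsilon\in(B/T,\omega_{(k)})$, use the monotonicity of $\delud$ to replace $\delud(\mathcal{F},\epsilon)$ by $d$, and then let $\epsilon\uparrow\omega_{(k)}$; you allude to this with ``the monotonized eluder dimension is precisely what permits\ldots'' but a reader might want to see the limit spelled out.
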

\paragraph{Extracting Policy $\widehat{\pi}_T$ from Data.}  At the end of the execution of the Sampler algorithm, the learner will have access to a sequence of datasets $\{\widetilde{D}_{t} \}_{t=1}^T$ of contexts, actions and reward triplets. The learner will use this sequence of datasets to compute the sequence of regression functions,
\begin{equation}\label{equation::sampler_ls}
\widehat{f}_t = \argmin_{f \in \mathcal{F}} \sum_{\ell=1}^{t-1} (f(\tilde{x}_\ell, \tilde{a}_\ell) - \tilde{r}_\ell)^2 .
\end{equation}
Standard Least Squares (LS) results (see Lemma~\ref{lemma::new_least_squares_estimator}) imply that $\| \widehat{f}_t - f_\star \|_{\tilde{\mathcal{D}}_t} \leq \beta(t, \delta)$ with high probability for all $t \in \mathbb{N}$. These confidence sets are used to define a sequence of optimistic (deterministic) policies $\{ \widetilde{\pi}_t^{\mathrm{opt}} \}_{t=1}^T$:
\begin{equation}\label{equation::definition_pi_opt}
\widetilde \pi_t^{\mathrm{opt}}(x) = \argmax_{a \in \mathcal{A}}  \max_{\tilde f \text{ s.t. }  \| \widehat{f}_t - f  \|_{\tilde{\mathcal{D}}_t}  \leq \beta_{\mathcal{F}}(t, \delta)   } \tilde{f}(x,a)
\end{equation}
The candidate optimal policy $\widehat{\pi}_T$ is then defined as $\widehat{\pi}_T = \mathrm{Uniform}( \widetilde \pi_1^{\mathrm{opt}}, \cdots, \widetilde \pi_{T}^{\mathrm{opt}})    $. The main result in this section is the following Theorem. 

\begin{restatable}{theorem}{maintheorem}\label{theorem::main_result_eluder}
Let $\epsilon > 0$. There exists a universal constant $c > 0$ such that if \begin{equation}\label{equation::T_inequality_main_eluder}T \geq c   \frac{\max^2(B, \bar{B},1)\delud(\mathcal{F}, B/T) \log\left( |\mathcal{F}|T/\delta\right)}{\epsilon^2}\end{equation} then with probability at least $1-\delta$ the eluder planning algorithm's policy $\widehat{\pi}_T$ is $\epsilon-$optimal . 
\end{restatable}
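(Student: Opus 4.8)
The plan is to bound the suboptimality of the uniform mixture $\widehat{\pi}_T = \mathrm{Uniform}(\widetilde\pi_1^{\mathrm{opt}},\dots,\widetilde\pi_T^{\mathrm{opt}})$ by the average planning uncertainty $\frac1T\sum_{t=1}^T\omega(x_t,\pi_t(x_t),\mathcal{D}_t)$ and then invoke the eluder bound of Lemma~\ref{lemma::eluder_dimension_lemma}. First I would introduce three events, each of probability at least $1-\delta/3$. (i) By the least squares guarantee (Lemma~\ref{lemma::new_least_squares_estimator}) along the sampling trajectory, $\|\widehat f_t - f_\star\|_{\widetilde{\mathcal{D}}_t}\le\beta_\mathcal{F}(t,\delta)$ for all $t\in[T]$, so $f_\star$ lies in every confidence set used to define $\widetilde\pi_t^{\mathrm{opt}}$. (ii) By Lemma~\ref{lemma::radii_lemma} with the constant $\bar C$ of Proposition~\ref{proposition::conditions_beta_t}, the empirical data norms on the planning set $\mathcal{D}_t$ and the sampling set $\widetilde{\mathcal{D}}_t$ are comparable uniformly over $\mathcal{F}$; concretely, whenever $\|f-f_\star\|_{\widetilde{\mathcal{D}}_t}\le 2\beta_\mathcal{F}(t,\delta)$ one also has $\|f-f_\star\|_{\mathcal{D}_t}\le 4\beta_\mathcal{F}(t,\delta)$. (iii) An Azuma--Hoeffding bound for the martingale differences $\mathbb{E}_{x\sim\mathcal{P}}[\omega(x,\pi_t(x),\mathcal{D}_t)]-\omega(x_t,\pi_t(x_t),\mathcal{D}_t)$ (each term lies in $[-2B,2B]$ since $|f-f'|\le 2B$, and $x_t$ is independent of $\mathcal{D}_t,\pi_t$, which depend only on $x_1,\dots,x_{t-1}$), yielding $\sum_{t=1}^T\mathbb{E}_{x}[\omega(x,\pi_t(x),\mathcal{D}_t)]\le\sum_{t=1}^T\omega(x_t,\pi_t(x_t),\mathcal{D}_t)+\mathcal{O}(B\sqrt{T\log(1/\delta)})$.

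Next I would perform the regret decomposition. Since $\widehat\pi_T$ is the uniform mixture, its suboptimality equals $\frac1T\sum_{t=1}^T\mathbb{E}_{x\sim\mathcal{P}}[f_\star(x,\pi_\star(x))-f_\star(x,\widetilde\pi_t^{\mathrm{opt}}(x))]$. Fix $t$ and $x$ and let $\tilde f$ attain the optimistic value defining $\widetilde\pi_t^{\mathrm{opt}}(x)$. On event (i), $f_\star$ is feasible in that inner maximization, so $\max_a f_\star(x,a)\le \tilde f(x,\widetilde\pi_t^{\mathrm{opt}}(x))$, hence $f_\star(x,\pi_\star(x))-f_\star(x,\widetilde\pi_t^{\mathrm{opt}}(x))\le \tilde f(x,\widetilde\pi_t^{\mathrm{opt}}(x))-f_\star(x,\widetilde\pi_t^{\mathrm{opt}}(x))$. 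Since $\|\tilde f-\widehat f_t\|_{\widetilde{\mathcal{D}}_t}\le\beta_\mathcal{F}(t,\delta)$ and $\|f_\star-\widehat f_t\|_{\widetilde{\mathcal{D}}_t}\le\beta_\mathcal{F}(t,\delta)$, the triangle inequality gives $\|\tilde f-f_\star\|_{\widetilde{\mathcal{D}}_t}\le 2\beta_\mathcal{F}(t,\delta)$, so by event (ii) $\|\tilde f-f_\star\|_{\mathcal{D}_t}\le 4\beta_\mathcal{F}(t,\delta)$, which means the difference is at most $\omega(x,\widetilde\pi_t^{\mathrm{opt}}(x),\mathcal{D}_t)$, and this is at most $\max_a\omega(x,a,\mathcal{D}_t)=\omega(x,\pi_t(x),\mathcal{D}_t)$ because $\pi_t$ is the planner's greedy-in-$\omega$ policy. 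Taking expectations over $x$, summing, and applying events (iii) and then Lemma~\ref{lemma::eluder_dimension_lemma} gives
\[ \text{suboptimality of }\widehat\pi_T \;\le\; \mathcal{O}\!\left(\frac{B\,\delud(\mathcal{F},B/T)}{T}+\frac{\beta_\mathcal{F}(T,\delta)\sqrt{\delud(\mathcal{F},B/T)}}{\sqrt{T}}+B\sqrt{\frac{\log(1/\delta)}{T}}\right). \]

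Finally, substituting $\beta_\mathcal{F}(T,\delta)=\bar C(B+\bar B)\sqrt{\log(|\mathcal{F}|T/\delta)}$, one checks that in the relevant regime the middle term dominates the other two, and requiring all three to be at most $\epsilon$ reduces to $T\ge c\,\max^2(B,\bar B,1)\,\delud(\mathcal{F},B/T)\log(|\mathcal{F}|T/\delta)/\epsilon^2$ for a universal constant $c$ absorbing $\bar C$ and the $\mathcal{O}(\cdot)$ constants; note that $\delud(\mathcal{F},B/T)$ appears as a function of $T$ on both sides, so no self-referential fixed point needs to be resolved.

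I expect step (ii) to be the main obstacle: it is the step that legitimizes transporting a confidence width measured on the sampling data $\widetilde{\mathcal{D}}_t$ to the planning uncertainty radius $\omega(\cdot,\cdot,\mathcal{D}_t)$, even though the two phases see independent context samples and only share the policy sequence $\{\pi_t\}$. It is precisely why the definition of $\omega$ carries the factor $4$ and why $\beta_\mathcal{F}$ includes a $\sqrt{\log(|\mathcal{F}|t/\delta)}$ term (a union bound over pairs in $\mathcal{F}$, controlled via a Bernstein-type comparison of the empirical norms to the common population quantity $\sum_{\ell<t}\mathbb{E}_{x\sim\mathcal{P}}[(f-f_\star)^2(x,\pi_\ell(x))]$, with the dependence on $B,\bar B$ folded into $\bar C$). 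The remaining ingredients — optimism, the eluder sum bound, and the online-to-batch martingale argument — are by now standard.
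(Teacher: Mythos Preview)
Your proposal is correct and follows the same skeleton as the paper: optimism, the radii-transfer lemma (Lemma~\ref{lemma::radii_lemma}), and the eluder sum bound (Lemma~\ref{lemma::eluder_dimension_lemma}). There is one minor but genuine difference worth noting. The paper first bounds the \emph{realized} cumulative regret of $\{\widetilde\pi_t^{\mathrm{opt}}\}$ on the sampler contexts $\tilde x_1,\dots,\tilde x_T$, then uses a Hoeffding step to transfer $\sum_t\omega(\tilde x_t,\pi_t(\tilde x_t),\mathcal{D}_t)$ to $\sum_t\omega(x_t,\pi_t(x_t),\mathcal{D}_t)$, and finally invokes a separate online-to-batch lemma (which itself contains two more Hoeffding bounds) to pass from realized to population regret. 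You instead bypass the sampler contexts $\tilde x_t$ entirely: since $\widehat\pi_T$ is the uniform mixture, its suboptimality is by definition $\frac{1}{T}\sum_t\mathbb{E}_{x\sim\mathcal{P}}[f_\star(x,\pi_\star(x))-f_\star(x,\widetilde\pi_t^{\mathrm{opt}}(x))]$, and the pointwise bound you derive on events (i)--(ii) gives $\le\frac{1}{T}\sum_t\mathbb{E}_x[\omega(x,\pi_t(x),\mathcal{D}_t)]$ directly, so only the single Hoeffding bound (iii) on the planner side is needed. This shaves off a couple of concentration steps without changing the final rate. Your identification of step~(ii) (the radii transfer between $\widetilde{\mathcal{D}}_t$ and $\mathcal{D}_t$) as the crux matches the paper exactly; this is precisely Lemma~\ref{lemma::radii_lemma}, proved via the Bernstein-type two-sided comparison to the common population norm you sketch, and it is indeed the reason for the factor $4$ in the definition of $\omega$ and for the $\log(|\mathcal{F}|t/\delta)$ in $\beta_\mathcal{F}$.
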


\begin{remark}
The results of this section hold for the structured bandits setting~\cite{jun2020crush,lattimore2014bounded}; that is, scenarios where $\mathcal{X} = \{\emptyset \}$. %
\end{remark}

Both sides of the inequality defining $T$ in Theorem~\ref{theorem::main_result_eluder} depend on $T$. Provided $\delud(\mathcal{F}, B/T)$ is sublinear as a function of $T$, there exists a finite value of $T$ satisfying Equation~\ref{equation::T_inequality_main_eluder}.  When $\delud(\mathcal{F}, \bar{\epsilon}) = \bardelud \log\left( 1/\bar{\epsilon}\right)$ for some $\bardelud$ and for all $\bar{\epsilon} > 0$ setting $T \geq  c  \frac{\max^2(B, \bar{B},1)\delud(\mathcal{F}, B\epsilon) \log\left( |\mathcal{F}|/\epsilon\delta\right)}{\epsilon^2}$ is enough to guarantee the conditions of Theorem~\ref{theorem::main_result_eluder} are satisfied. Examples~4 and~5 from~\cite{russo2013eluder} show the eluder dimension of linear or generalized linear function classes can be written in such way. In these cases $\bardelud $ is a constant multiple of the underlying dimension of the space. 

Using standard techniques, the results of Theorem~\ref{theorem::main_result_eluder} can be adapted to the case when the function class $\mathcal{F}$ is infinite but admits a metrization and has a covering number. In this case the sample complexity will scale not with $\log( | \mathcal{F}|)$ but instead with the logarithm of the covering number of $\mathcal{F}$. For example, in the case of linear functions, the logarithm of the covering number of $\mathcal{F}$ under the $\ell_2$ norm will scale as $d$, the ambient dimension of the space up to logarithmic factors of $T$ (see for example Lemma D.1 of~\cite{du2021bilinear}). Plugging this into the sample guarantees of Theorem~\ref{theorem::main_result_eluder} recovers the $\widetilde{\mathcal{O}}\left(d^2/\epsilon^2\right)$ sample complexity for linear experiment planning from~\cite{zanette2021design}. 

\subsection{Proof Sketch of Theorem~\ref{theorem::main_result_eluder}}\label{section::proof_sketch}

Let $\tilde{x}_1, \cdots, \tilde{x}_T$ be the sampler's context sequence. The proof works by showing that with probability at least $1-\delta$ the regret of the $\{\widetilde{\pi}_t^{\mathrm{opt}}(\cdot)\}_{t=1}^T$ sequence satisfies the regret bound
\begin{equation*}
\sum_{t=1}^T \max_{a \in \mathcal{A} } f_\star(\widetilde{x}_t, a)  - f_\star(\widetilde{x}_t, \widetilde{\pi}_t^{\mathrm{opt}}(\widetilde{x}_t) ) \leq \mathcal{O}\left((B+\bar{B}) \sqrt{d_{\mathrm{eluder}}(\mathcal{F}, B/T) T \log(|\mathcal{F}|T/\delta) } \right)
\end{equation*} 
A key part of the proof involves relating the balls defined by the data norms of the planner datasets $\{\mathcal{D}_t\}_{t=1}^T$ and those induced by the data norms of the sampler datasets $\{\widetilde{\mathcal{D}}_t\}_{t=1}^T$. The precise statement of this relationship is provided in Lemma~\ref{lemma::radii_lemma}, and its proof can be found in Appendix~\ref{section::proof_of_radii_lemma}.
\begin{restatable}{lemma}{lemmaradiilemma}\label{lemma::radii_lemma}
With probability at least $1-\frac{\delta}{12}$, %
\begin{equation}\label{equation::function_diff_containment}
\{ (f,f') \text{ s.t. } \| f-f'\|_{\widetilde{\mathcal{D}}_t } \leq 2\beta_\mathcal{F}(t, \delta)  \} \subseteq \{ (f,f') \text{ s.t. } \| f-f'\|_{\mathcal{D}_t } \leq 4\beta_\mathcal{F}(t, \delta)\}
\end{equation}
Simultaneously for all $t \in [T]$. Where $\{ D_t\}_{t=1}^T$ is the dataset sequence resulting of the execution of $\mathrm{EluderPlanner}$ while $\{\widetilde{D}_t\}_{t=1}^T$ is the dataset sequence resulting of the execution of the $\mathrm{Sampler}$ and $\beta_\mathcal{F}(\delta, t)$ is the confidence radius function defined in Equation~\ref{equation::confidence_radius_function}.%
\end{restatable}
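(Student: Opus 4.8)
\emph{Proof plan for Lemma~\ref{lemma::radii_lemma}.} The idea is that the two data norms $\|f-f'\|_{\mathcal{D}_t}$ and $\|f-f'\|_{\widetilde{\mathcal{D}}_t}$ are both empirical sums whose terms share a common population mean, so that mean can serve as a bridge between them. Fix a pair $f, f' \in \mathcal{F}$, and for $\ell \in [T]$ set $g_\ell(x) = \bigl(f(x,\pi_\ell(x)) - f'(x,\pi_\ell(x))\bigr)^2$, which by Assumption~\ref{assumption::bounded_range} lies in $[0,4B^2]$, and $\mu_\ell = \mathbb{E}_{x\sim\mathcal{P}}[g_\ell(x)]$. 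The structural observation driving everything is that $\pi_\ell$ is measurable with respect to $\sigma(x_1,\dots,x_{\ell-1})$ (it is computed from $\mathcal{D}_\ell$ and hence does not depend on $x_\ell$), so $\|f-f'\|_{\mathcal{D}_t}^2 = \sum_{\ell=1}^{t-1} g_\ell(x_\ell)$ has a martingale decomposition: the $\ell$-th term has conditional mean $\mu_\ell$ given the past. Symmetrically, conditioning on the entire execution of $\mathrm{EluderPlanner}$ freezes all the policies $\pi_1,\dots,\pi_T$, so the sampler's contexts $\tilde x_1,\dots,\tilde x_T$ are fresh i.i.d.\ draws from $\mathcal{P}$ and $\|f-f'\|_{\widetilde{\mathcal{D}}_t}^2 = \sum_{\ell=1}^{t-1} g_\ell(\tilde x_\ell)$ is a sum of independent terms with the \emph{same} conditional means $\mu_\ell$. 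Both norms therefore concentrate around $S_t \eqdef \sum_{\ell=1}^{t-1}\mu_\ell$.

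First I would apply a variance-adaptive (Freedman/Bernstein-type) concentration inequality to the martingale $\{g_\ell(x_\ell)-\mu_\ell\}$ to get, with high probability and uniformly in $t$, an upper bound of the form $\|f-f'\|_{\mathcal{D}_t}^2 \le \tfrac54 S_t + c_1 B^2 \log(|\mathcal{F}|t/\delta)$, using $\mathrm{Var}(g_\ell) \le 4B^2 \mu_\ell$ together with $\sqrt{B^2 S_t \log} \le \tfrac14 S_t + cB^2\log$; and symmetrically, conditioning on the planner's randomness, apply Bernstein to the i.i.d.\ sum $\{g_\ell(\tilde x_\ell)\}$ to get a lower bound $\|f-f'\|_{\widetilde{\mathcal{D}}_t}^2 \ge \tfrac34 S_t - c_2 B^2 \log(|\mathcal{F}|t/\delta)$, i.e.\ $S_t \le \tfrac43\bigl(\|f-f'\|_{\widetilde{\mathcal{D}}_t}^2 + c_2 B^2\log(|\mathcal{F}|t/\delta)\bigr)$. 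The use of a variance-adaptive bound is essential: the permissible slack here is of order $B^2\log(|\mathcal{F}|t/\delta) \asymp \beta_\mathcal{F}(t,\delta)^2$, whereas a Hoeffding-type slack of order $B\sqrt{t\log}$ would be hopelessly large; the point is that on the event of interest $S_t$ is itself $O(\beta_\mathcal{F}(t,\delta)^2)$, so $\sqrt{B^2 S_t\log}$ is again $O(\beta_\mathcal{F}(t,\delta)^2)$. Chaining the two displays, on the intersection of the two concentration events every pair with $\|f-f'\|_{\widetilde{\mathcal{D}}_t}^2 \le 4\beta_\mathcal{F}(t,\delta)^2$ satisfies $\|f-f'\|_{\mathcal{D}_t}^2 \le \tfrac54\cdot\tfrac43\cdot 4\beta_\mathcal{F}(t,\delta)^2 + c_3 B^2\log(|\mathcal{F}|t/\delta) = \tfrac{20}{3}\beta_\mathcal{F}(t,\delta)^2 + c_3 B^2\log(|\mathcal{F}|t/\delta)$.

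To make the bound hold simultaneously for all $t\in[T]$ I would run a peeling/stitching argument over dyadic ranges of $t$ (the $\log t$ already present inside $\beta_\mathcal{F}(t,\delta)$ in Equation~\ref{equation::confidence_radius_function} is there precisely to absorb this), and note that for the small values of $t$ where this argument is weakest the containment is anyway trivial, since $\|f-f'\|_{\mathcal{D}_t}^2 \le 4(t-1)B^2$ is already below $16\beta_\mathcal{F}(t,\delta)^2$ when $t$ is $O\bigl(\bar C^2 \log(|\mathcal{F}|/\delta)\bigr)$. Then I would union bound over all at most $|\mathcal{F}|^2$ pairs $(f,f')$ and the two concentration events, choosing per-event failure probability $\asymp \delta/|\mathcal{F}|^2$ so the total is at most $\delta/12$ and $\log(1/\delta') \lesssim \log(|\mathcal{F}|/\delta)$. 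Finally, recalling $\beta_\mathcal{F}(t,\delta)^2 = \bar C^2(B+\bar B)^2\log(|\mathcal{F}|t/\delta)$ and $B^2 \le (B+\bar B)^2$, I would invoke the choice of the universal constant $\bar C$ fixed in Proposition~\ref{proposition::conditions_beta_t}, large enough that $c_3 B^2\log(|\mathcal{F}|t/\delta) \le \tfrac{28}{3}\beta_\mathcal{F}(t,\delta)^2$, giving $\|f-f'\|_{\mathcal{D}_t}^2 \le 16\beta_\mathcal{F}(t,\delta)^2$ and hence the claimed inclusion for all $t$ simultaneously.

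The main obstacle, and the step deserving the most care, is the dependence bookkeeping of the first paragraph: one must check that the planner's adaptively collected dataset still admits the martingale decomposition with means $\mu_\ell$ (which works only because $\pi_\ell$ is a function of $x_1,\dots,x_{\ell-1}$ and not of $x_\ell$), and that conditioning on the planner's internal randomness both freezes the policies—turning the sampler's sum into a sum of independent terms—and leaves the means $\mu_\ell$ unchanged, so that the \emph{same} $S_t$ bridges the two sides. The rest—tracking the Bernstein constants so that the two one-sided multiplicative bounds compose into a factor comfortably below $16$, and confirming $\bar C$ can be chosen universally—is routine but must be carried out to verify that inflating the radius from $2\beta_\mathcal{F}$ to $4\beta_\mathcal{F}$ is indeed enough.
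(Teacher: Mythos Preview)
Your proposal is correct and follows essentially the same route as the paper: bridge the planner and sampler data norms through the common population quantity $S_t = \sum_\ell \mathbb{E}_{x\sim\mathcal{P}}[g_\ell(x)]$, apply a Freedman/Bernstein-type bound on each side (the paper packages this as Lemma~\ref{lemma::supporting_bernstein_result}, a direct anytime-Freedman consequence, obtaining multiplicative constants $3/2$ and $2$ in place of your $5/4$ and $4/3$), union bound over all $|\mathcal{F}|^2$ pairs, and absorb the additive $B^2\log(|\mathcal{F}|t/\delta)$ slack into the choice of $\bar C$ via Proposition~\ref{proposition::conditions_beta_t}. The only cosmetic difference is that the paper treats $(x_\ell,\tilde x_\ell,\pi_\ell)$ as a single joint adapted process and invokes an anytime martingale bound directly, rather than conditioning on the planner and then peeling as you describe; the two are equivalent.
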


Let $\widehat{f}_t$ as in Equation~\ref{equation::sampler_ls} and define $\mathcal{E}$ as the event where the Standard Least Squares results (see Lemma~\ref{lemma::new_least_squares_estimator}) hold i.e. $\| \widehat{f}_t - f_\star \|_{\widetilde{\mathcal{D}}_t} \leq \beta_{\mathcal{F}}(t, \delta)$ for all $t \in [T]$ and also Equation~\ref{equation::function_diff_containment} from Lemma~\ref{lemma::radii_lemma} holds for all $t \in [T]$. The results of Lemmas~\ref{lemma::new_least_squares_estimator} and~\ref{lemma::radii_lemma} imply $\mathbb{P}(\mathcal{E}) \geq 1-\frac{\delta}{6}$.

For context $x$ and action $a$ let's denote by $\tilde{f}_{t}^{x,a}$ as a function achieving the inner maximum in the definition of $\tilde \pi_t^{\mathrm{opt}}$ (see Equation~\ref{equation::definition_pi_opt}). When $\mathcal{E}$ holds the policies $\{\tilde \pi_t^{\mathrm{opt}}\}_{t=1}^T$ are optimistic in the sense that $\tilde{f}_{t}^{x,\tilde \pi_t^{\mathrm{opt}}(x)}(x,\tilde \pi_t^{\mathrm{opt}}(x)) \geq  \max_{a \in \mathcal{A}} f_\star(x,a)$ and therefore,
\begin{align*}
\sum_{t=1}^T  \max_{a \in \mathcal{A}} f_\star(\tilde{x}_t, a) - f_\star( \tilde{x}_t,   \tilde \pi_t^{\mathrm{opt}}(\tilde{x}_t)  )    &\stackrel{(i)}{\leq} \sum_{t=1}^T \tilde{f}_{t}^{x,\tilde \pi_t^{\mathrm{opt}}(x)}(\tilde{x}_t,\tilde \pi_t^{\mathrm{opt}}(\tilde{x}_t) ) - f_\star( \tilde{x}_t,   \tilde \pi_t^{\mathrm{opt}}(\tilde{x}_t)  )\\
&\stackrel{(ii)}{\leq} \sum_{t=1}^T \max_{f,f' \in \mathbb{B}_t(2, \widetilde{\mathcal{D}}_t)} f(\tilde{x}_t,\tilde \pi_t^{\mathrm{opt}}(\tilde{x}_t) ) - f'( \tilde{x}_t,   \tilde \pi_t^{\mathrm{opt}}(\tilde{x}_t)  )\\
&\stackrel{(iii)}{\leq} \sum_{t=1}^T \max_{f,f' \in \mathbb{B}_t(4, \mathcal{D}_t) } f(\tilde{x}_t,\tilde \pi_t^{\mathrm{opt}}(\tilde{x}_t) ) - f'( \tilde{x}_t,   \tilde \pi_t^{\mathrm{opt}}(\tilde{x}_t)  ) = (\star).
\end{align*}
Where $\mathbb{B}_t(\gamma, \mathcal{D}) = \{f,f' \in \mathcal{F} \text{ s.t. } \|f-f' \|_{\mathcal{D}}  \leq \gamma\beta_{\mathcal{F}}(\delta, t)\} $. Inequality $(i)$ follows by optimism, $(ii)$ is a consequence of $\tilde{f}_{t}^{x,\tilde \pi_t^{\mathrm{opt}}(x)},f_\star \in \mathbb{B}_t(2, \widetilde{\mathcal{D}}_t)$ when $\mathcal{E}$ holds since in this case $\| \widehat{f}_t - f_\star\|_{\widetilde{\mathcal{D}}_t} \leq \beta_{\mathcal{F}}(t, \delta)$ and $\| \tilde{f}_{t}^{x,\tilde \pi_t^{\mathrm{opt}}(x)}-\widehat{f}_t\|_{\widetilde{\mathcal{D}}_t} \leq \beta_{\mathcal{F}}(t, \delta)$, and $(iii)$ follows because when $\mathcal{E}$ holds, Equation~\ref{equation::function_diff_containment} of Lemma~\ref{lemma::radii_lemma} is satisfied and therefore $\mathbb{B}_t(2, \widetilde{\mathcal{D}}_t) \subseteq \mathbb{B}_t(4, \mathcal{D}_t)$. The RHS $(\star)$ of the equation above satisfies,
\vspace{-.5cm}
\begin{align*}
(\star) =\sum_{t=1}^T \omega( \tilde{x}_t,  \tilde \pi_t^{\mathrm{opt}}(\tilde{x}_t), \mathcal{D}_t)\leq \sum_{t=1}^T \max_{a \in \mathcal{A}} \omega( \tilde{x}_t,  a, \mathcal{D}_t)= \sum_{t=1}^T \omega( \tilde{x}_t,  \pi_t(\tilde{x}_t), \mathcal{D}_t).
\end{align*}
We relate the sum of uncertainty radii $\{\omega( \tilde{x}_t,  \pi_t(\tilde{x}_t), \mathcal{D}_t)\}_{t=1}^T$ with those of the planner $\{\omega( x_t,  \pi_t(x_t), \mathcal{D}_t)\}_{t=1}^T$ via Hoeffding Inequality (see Lemma~\ref{lemma::hoeffding}) and conclude that w.h.p,
\begin{equation*}
\sum_{t=1}^T  \max_{a \in \mathcal{A}} f_\star(\tilde{x}_t, a) - f_\star( \tilde{x}_t,   \tilde \pi_t^{\mathrm{opt}}(\tilde{x}_t)  )    \leq   \sum_{t=1}^T   \omega( {x}_t,  \pi_t({x}_t), \mathcal{D}_t) + \mathcal{O}\left( B \sqrt{ T \log\left( 1/\delta\right)}\right).
\end{equation*}
Lemma~\ref{lemma::eluder_dimension_lemma} allows us to bound the sum of these uncertainty radii as $$\sum_{t=1}^T   \omega( {x}_t,  \pi_t({x}_t), \mathcal{D}_t) \leq  \mathcal{O}\left( \min\left( BT, B\delud(\mathcal{F},B/T) + \beta_\mathcal{F}(T, \delta)\sqrt{ \delud(\mathcal{F}, B/T)  T} \right) \right),$$ and therefore w.h.p,
\begin{small}
\begin{equation*}
\sum_{\ell=1}^T  \max_{a \in \mathcal{A}} f_\star(\tilde{x}_\ell, a) - f_\star( \tilde{x}_\ell,   \tilde \pi_j^{\mathrm{opt}}(\tilde{x}_\ell)  )   \leq \mathcal{O}\left(  \min\left( BT, B\delud(\mathcal{F},B/T) +\beta_\mathcal{F}(T, \delta)\sqrt{ \delud(\mathcal{F}, B/T)  T} \right)\right).
\end{equation*}
\end{small}
Converting this cumulative regret bound into a simple regret one (Lemma~\ref{lemma::online_batch_conversion}) finalizes the result. A detailed version of the proof can be found in Appendix~\ref{section::appendix_proof_main_thm}. 

\section{Uniform Sampling Strategies}\label{section::uniform_sampling}
In this section we show that a uniform sampling strategy can produce an $\epsilon$ optimal policy with probability at least $1-\delta$ after collecting $\mathcal{O}\left( \frac{\max^2(B, \bar{B})|\mathcal{A}| \ln(|\mathcal{F}|/\epsilon \delta)}{\epsilon^2} \right)$ samples. This procedure achieves the same simple regret rate as converting \SquareCB's cumulative regret into simple regret\footnote{The regret bound of the \SquareCB algorithm scales as $\mathcal{O}\left( \sqrt{|\mathcal{A}| \log(\mathcal{F}) T\log(T/\delta) } \right)$.}.  

In contrast with the eluder planning algorithm the uniform sampling strategy does not require a planning phase. Instead it consists of running of the $\mathrm{Sampler}$ (Algorithm~\ref{alg:EluderExperimentDeployment}) setting $\widetilde \pi_j = \mathrm{Uniform}(\mathcal{A})$ for all $j \in [T]$. 
Given a dataset $\widetilde{\mathcal{D}}_{T}$ of contexts, actions and rewards collected during the sampling phase, we solve the least squares problem:
\begin{equation*}
    \widehat{f}_T = \argmin_{f \in \FF} \sum_{(x_i, a_i, r_i) \in \widetilde{D}_T} \left(f(x_i, a_i) - r_i\right)^2 
\end{equation*}
and define the policy $\widehat{\pi}_T$ as $\widehat{\pi}_T(x) = \argmax_{a \in \mathcal{A}} \widehat{f}_T (x,a)$. The main result of this section is,
\begin{restatable}{theorem}{theoremuniformbound}\label{theorem::uniformbound}
    There exists a universal constant $\tilde{c} > 0$ such that if $T \geq \tilde{c} \frac{\max^2(B, \bar{B})|\mathcal{A}|\log(|\mathcal{F}|/{\epsilon\delta})}{\epsilon^2}$ then with probability at least $1-\delta$ the uniform planning algorithm's policy $\widehat{\pi}_T$ is $\epsilon-$optimal.
\end{restatable}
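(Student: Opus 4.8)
The plan is to run the standard offline least-squares argument, exploiting that under uniform sampling the triples $\widetilde{\mathcal{D}}_T = \{(\tilde x_i,\tilde a_i,\tilde r_i)\}_{i=1}^T$ are i.i.d.\ draws from $\mathcal{P}\times\mathrm{Uniform}(\mathcal{A})$ with independent conditionally zero-mean subgaussian noise, so that only i.i.d.\ concentration (no martingale machinery) is needed. Write $V(\pi) \defeq \mathbb{E}_{x\sim\mathcal{P},a\sim\pi(x)}[f_\star(x,a)]$, so the goal is $V(\pi_\star)-V(\widehat\pi_T)\le\epsilon$ with probability $1-\delta$. First I would control the \emph{in-sample} error of $\widehat f_T$: by Lemma~\ref{lemma::new_least_squares_estimator} applied to the regression over $\widetilde{\mathcal{D}}_T$, with probability at least $1-\delta/2$ we get $\|\widehat f_T - f_\star\|_{\widetilde{\mathcal{D}}_T}^2 \le \beta_\mathcal{F}^2(T,\delta) = \bar{C}^2(B+\bar B)^2\log(|\mathcal{F}|T/\delta)$, i.e.\ the empirical squared error $\tfrac1T\sum_{i\le T}(\widehat f_T(\tilde x_i,\tilde a_i)-f_\star(\tilde x_i,\tilde a_i))^2$ is $O\big((B+\bar B)^2\log(|\mathcal{F}|T/\delta)/T\big)$.

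The next, and most delicate, step is to transfer this into a bound on the \emph{population} quantity $\mathbb{E}_{x\sim\mathcal{P},a\sim\mathrm{Unif}}[(\widehat f_T(x,a)-f_\star(x,a))^2]$. Since $\widehat f_T$ is data dependent, I would establish a uniform ``fast-rate'' bound: apply a Bernstein/Freedman-type deviation inequality to each member of the family $\{(f-f_\star)^2 : f\in\mathcal{F}\}$, using the range bound $(f-f_\star)^2\le 4B^2$ (Assumption~\ref{assumption::bounded_range}) together with the self-bounding variance estimate $\mathrm{Var}[(f-f_\star)^2(x,a)]\le 4B^2\,\mathbb{E}[(f-f_\star)^2(x,a)]$, and then union-bound over the finite class $\mathcal{F}$. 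This yields, with probability $1-\delta/2$ and for all $f\in\mathcal{F}$ simultaneously, $\mathbb{E}[(f-f_\star)^2]\le 2\,\tfrac1T\|f-f_\star\|_{\widetilde{\mathcal{D}}_T}^2 + O(B^2\log(|\mathcal{F}|/\delta)/T)$; combined with the in-sample bound, $\mathbb{E}_{x,a\sim\mathrm{Unif}}[(\widehat f_T(x,a)-f_\star(x,a))^2] = O\big((B+\bar B)^2\log(|\mathcal{F}|T/\delta)/T\big)$. From here a change-of-measure step observes that for any deterministic $\pi$ and nonnegative $g$, $\mathbb{E}_{a\sim\mathrm{Unif}}[g(x,a)]\ge \tfrac1{|\mathcal{A}|}g(x,\pi(x))$, hence $\mathbb{E}_{x\sim\mathcal{P}}[(\widehat f_T(x,\pi(x))-f_\star(x,\pi(x)))^2]\le |\mathcal{A}|\,\mathbb{E}_{x,a\sim\mathrm{Unif}}[(\widehat f_T-f_\star)^2]$, which I apply with $\pi=\pi_\star$ and $\pi=\widehat\pi_T$. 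Then decompose $V(\pi_\star)-V(\widehat\pi_T) = \mathbb{E}_x[f_\star(x,\pi_\star(x))-\widehat f_T(x,\pi_\star(x))] + \mathbb{E}_x[\widehat f_T(x,\pi_\star(x))-\widehat f_T(x,\widehat\pi_T(x))] + \mathbb{E}_x[\widehat f_T(x,\widehat\pi_T(x))-f_\star(x,\widehat\pi_T(x))]$, where the middle term is $\le 0$ because $\widehat\pi_T(x)=\argmax_a\widehat f_T(x,a)$, and each outer term is bounded, by Jensen and then the change of measure, by $\sqrt{|\mathcal{A}|\,\mathbb{E}_{x,a\sim\mathrm{Unif}}[(\widehat f_T-f_\star)^2]}$. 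Thus $V(\pi_\star)-V(\widehat\pi_T) = O\big(\sqrt{|\mathcal{A}|(B+\bar B)^2\log(|\mathcal{F}|T/\delta)/T}\big)$; requiring this to be at most $\epsilon$, absorbing the (merely logarithmic) dependence on $T$ inside the log via the choice of $T$ in the statement so that $\log(|\mathcal{F}|T/\delta)=O(\log(|\mathcal{F}|/\epsilon\delta))$, and using $(B+\bar B)^2\le 4\max^2(B,\bar B)$, gives the claimed threshold. A final union bound over the two probability-$\delta/2$ events completes the proof.

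The main obstacle is the empirical-to-population transfer for the data-dependent $\widehat f_T$: a naive single-function concentration bound is not enough, and one must exploit the self-bounding variance property to obtain the multiplicative-additive (``fast-rate'') form of the bound — otherwise the square root in the regret decomposition would only deliver an $O(T^{-1/4})$ rate rather than the desired $O\big(\sqrt{|\mathcal{A}|\log|\mathcal{F}|/T}\big)$. The change of measure, the regret decomposition, and solving for $T$ are all routine, and the analysis is in fact cleaner than that of the eluder planner since the data here are genuinely i.i.d.
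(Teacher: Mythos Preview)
Your proposal is correct and follows essentially the same approach as the paper: the regret decomposition via $\widehat f_T(x,\pi_\star(x))\le \widehat f_T(x,\widehat\pi_T(x))$, the change of measure from the uniform policy incurring the factor $|\mathcal{A}|$, and the least-squares concentration with the self-bounding variance trick are all exactly what the paper does. The only cosmetic difference is that the paper invokes the \emph{second} conclusion of Lemma~\ref{lemma::new_least_squares_estimator} (which already packages the in-sample bound together with the empirical-to-population transfer via Lemma~\ref{lemma::supporting_bernstein_result}), so you can shorten your argument by citing that directly rather than redoing the Bernstein/union-bound step yourself.
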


The proof of Theorem~\ref{theorem::uniformbound} can be found in Appendix~\ref{section::proof_uniform_bound}.

\paragraph{Comparison Between \OLS and \SquareCB.} The regret rate after $T$ steps of the \OLS algorithm applied to a contextual bandit problem with discrete function class $\mathcal{F}$ scales\footnote{We obviate the $T$ dependence on $\delud$ for readability.} (up to logarithmic factors) as $\mathcal{O}\left( \sqrt{ \delud(\mathcal{F}, B/T) \log( |\mathcal{F}|/\delta)  T } \right)$. In contrast, the regret of the \SquareCB algorithm satisfies a regret guarantee (up to logarithmic factors) of order $\mathcal{O}\left( \sqrt{ \left|\mathcal{A}\right| \log( |\mathcal{F}|/\delta)  T } \right)$ where the eluder dimension dependence is substituted by a polynomial dependence on the number of actions $| \mathcal{A}|$. When the number of actions is small, or even constant, the regret rate of \SquareCB can be much smaller than that of \OLS. The opposite is true when the number of actions is large or even infinite. Converting these cumulative regret bounds to simple regret implies the number of samples required to produce an $\epsilon-$optimal policy from the \emph{adaptive} \OLS policy sequence scales as $\frac{\delud(\mathcal{F}, B/T) \log(|\mathcal{F}|/\delta)}{\epsilon^2}$ whereas for the \emph{adaptive} \SquareCB policy sequence  it scales as $\frac{\left|\mathcal{A}\right| \log( |\mathcal{F}|/\delta)}{\epsilon^2}$. The results of Theorems~\ref{theorem::main_result_eluder} and~\ref{theorem::uniformbound} recover these rates in the experiment planning setting.

\section{Gap Between Experiment Planning and Adaptive Learning}\label{section::gap_experiment_planning_adaptive}

The results of 
Sections~\ref{section::planning_eluder_dimension} and~\ref{section::uniform_sampling} imply planning bounds that are comparable to the corresponding online-to-batch guarantees for \OLS~\cite{russo2013eluder} and \SquareCB~\cite{foster2020beyond}. The main result of this section Theorem~\ref{theorem::gap_adaptive_planning} shows there are problems where the number of samples required for experiment planning can be substantially larger than the number of samples required of an adaptive learning algorithm. This result implies the suboptimality of algorithms such as \SquareCB and \OLS for adaptive learning. 

In order to state our results we consider an action set $\mathcal{A}_{\mathrm{tree}}$ indexed by the nodes of a height $L$ binary tree defined here as having $L$ levels and $2^{L}-1$ nodes. 
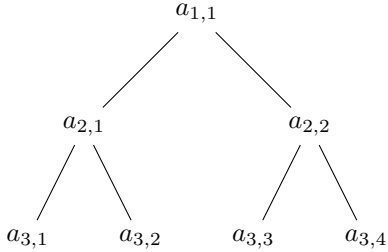
\begin{wrapfigure}{l}{0.5\textwidth}
\vspace{-.3cm}
\begin{tikzpicture}[level distance=1.5cm,
  level 1/.style={sibling distance=3cm},
  level 2/.style={sibling distance=1.5cm}]
  \node {$a_{1,1}$}
    child {node {$a_{2,1}$}
      child {node {$a_{3,1}$}}
      child {node {$a_{3,2}$}}
    }
    child {node {$a_{2,2}$}
    child {node {$a_{3,3}$}}
      child {node {$a_{3,4}$}}
    };
\end{tikzpicture}
\caption{Binary Tree} \label{fig::binary_tree}
\vspace{-.5cm}
\end{wrapfigure}
We call $a_{l, i}$ the $i$-th action of the $l$-th level of the tree. For an example see Figure~\ref{fig::binary_tree}. Let $\epsilon > 0$. We define a function class $\mathcal{F}_{\mathrm{tree}}$ indexed by paths from the root node to a leaf. For any such path $\mathbf{p} = \{ a_{1,1}, a_{2, i_2}, \cdots, a_{L, i_L}\}$ the function $f^{(\mathbf{p})}$ equals,
\begin{equation*}
f^{(\mathbf{p})}(a) = \begin{cases}
    1 &\text{if } a = a_{L, i_L}\\
    1 - 2\epsilon &\text{if } a \in \mathbf{p} \backslash  \{ a_{L, i_L} \} \\
    1-12\epsilon &\text{o.w.}
    \end{cases}
\end{equation*}

The following result fleshes out a separation between planning and adaptive learning with action set $\mathcal{A}_{\mathrm{tree}}$ and function class $\mathcal{F}_{\mathrm{tree}}$ in the setting where at time $T$ the learner will produce a guess for the optimal policy $\widehat{\pi}_T$.

\begin{restatable}{theorem}{mainlowerboundgap}\label{theorem::gap_adaptive_planning}
Let $\epsilon > 0$, $T\in \mathbb{N}$. Consider the action set $\mathcal{A}_{\mathrm{tree}}$ and function class $\mathcal{F}_{\mathrm{tree}}$ and a reward noise process such that $\xi_t \sim \mathcal{N}(0,1)$ conditionally for all $t \in [T]$. For any planning algorithm $\mathrm{Alg}$ there is a function $f_\star \in \mathcal{F}_{\mathrm{tree}}$ such that when $T \leq \frac{ 2^{L-5}}{9\epsilon^2}$ and $\mathrm{Alg}$ interacts with $f_\star$ then,
\begin{equation*}
\mathbb{E}_{\mathrm{Alg}, f_\star}\left[     \mathbb{E}_{a \sim \widehat{\pi}_T}\left[ f_\star(a)\right] \right] < \max_{a \in \mathcal{A}_{\mathrm{tree}} } f_\star(a) - \epsilon.
\end{equation*}
Moreover, there is an adaptive algorithm $\mathrm{Alg}_\mathrm{adaptive}$ such that if $T \geq \frac{2L\log(2L/\epsilon)}{\epsilon^2}$, 
\begin{equation*}
    \mathbb{E}_{\mathrm{Alg}_\mathrm{adaptive}}\left[     \mathbb{E}_{a \sim \widehat{\pi}_T}\left[ f(a)\right] \right]  \geq \max_{a \in \mathcal{A}_{\mathrm{tree}} } f(a) - \epsilon.
\end{equation*}
for all $f \in \mathcal{F}_{\mathrm{tree}}$. Where $\mathbb{E}_{\widetilde{\mathrm{Alg}}, \widetilde f}\left[ \cdot \right]$ is the expectation over the randomness of $\widetilde{\mathrm{Alg}}$ and the environment for target function $\widetilde f$, and $\widehat{\pi}_T$ is the algorithm's final policy guess after the sampling phase.
\end{restatable}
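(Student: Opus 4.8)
\textbf{Overview.} The statement has two halves: a lower bound for any static planning algorithm, and a matching upper bound for a specific adaptive algorithm. The two halves are almost independent and I would prove them separately. The adaptive upper bound is the easier part and I would dispatch it first; the static lower bound is where the real content lies, and it rests on an information-theoretic argument exploiting the tree structure of $\cF_{\mathrm{tree}}$.

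\textbf{Adaptive upper bound.} The key observation is that the functions in $\cF_{\mathrm{tree}}$ encode a root-to-leaf path, and at any node $a_{l,i}$ one can tell whether it lies on the true path by a mean-gap test: nodes on the path have reward $\geq 1-2\epsilon$ while off-path nodes have reward $1-12\epsilon$, a separation of $10\epsilon$. So $\mathrm{Alg}_{\mathrm{adaptive}}$ walks down the tree: at level $l$, having identified the path node $a_{l,i_l}$, it pulls each of its two children $O(\log(2L/\epsilon)/\epsilon^2)$ times, forms empirical means, and declares the child with larger empirical mean to be on the path. A Hoeffding bound shows that with the stated per-node sample budget each test fails with probability $\leq \epsilon/(2L)$ (one must be slightly careful: the two children could be the true path node with value $1-2\epsilon$ versus an off-path node with value $1-12\epsilon$, or the leaf case $1$ versus $1-12\epsilon$; in every case the gap is $\geq 10\epsilon$, so the test is well-separated). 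Union-bounding over the $L$ levels, with probability $\geq 1-\epsilon/2$ the algorithm recovers the entire true path, hence the optimal leaf, and plays it; the total sample cost is $L \cdot O(\log(2L/\epsilon)/\epsilon^2) = O(L\log(2L/\epsilon)/\epsilon^2)$, matching the claimed $T \geq 2L\log(2L/\epsilon)/\epsilon^2$ up to the constant. On the failure event the policy loses at most $O(\epsilon)$ in value (rewards lie in $[1-12\epsilon,1]$), so $\E[\E_{a\sim\widehat\pi_T}[f(a)]] \geq 1 - \epsilon = \max_a f(a) - \epsilon$; adjusting constants absorbs the slack.

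\textbf{Static lower bound.} This is the main obstacle. Fix any planning algorithm, i.e.\ a fixed (possibly randomized) multiset of $T$ arm-pulls $\{a_1,\dots,a_T\}$ together with a mapping from the observed reward vector to a final policy $\widehat\pi_T$. I want to show that for some $f_\star \in \cF_{\mathrm{tree}}$ the planner cannot identify the true leaf. The point is that the planner must commit to its $T$ pulls \emph{before} seeing any rewards, so it cannot adaptively descend the tree; to distinguish which of the $2^{L-1}$ leaves is correct it would need the pulls to be informative simultaneously along all possible paths. I would argue as follows. Consider the uniform prior over the $2^{L-1}$ path-functions $f^{(\mathbf p)}$. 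Because the budget satisfies $T \leq 2^{L-5}/(9\epsilon^2)$, a counting argument shows there is a level $l^\star$ (near the leaves) and many sibling pairs at that level such that the total number of planner pulls landing in the subtree below either sibling is small — on average over leaves, the planner spends $o(1/\epsilon^2)$ pulls in the relevant subtree. Concretely, leaves receive on average $T/2^{L-1} \leq 2^{-4}/(9\epsilon^2) \ll 1/\epsilon^2$ pulls each; more care is needed for internal path nodes, but the key is that off the planner's "favourite" path the signal is the constant $1-12\epsilon$ everywhere, so only pulls at nodes that actually lie on the true path carry any information about where the path bends. I would make this precise with a KL / Le Cam two-point argument: pick a reference path $\mathbf p_0$ and a neighbour $\mathbf p_1$ that agrees with $\mathbf p_0$ down to some level and then diverges; the reward distributions under $f^{(\mathbf p_0)}$ and $f^{(\mathbf p_1)}$ differ only on the $O(L)$ nodes where the two paths disagree, each contributing per-pull KL of order $\epsilon^2$ (Gaussian means differing by $O(\epsilon)$). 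Averaging the divergence contribution over a suitably chosen ensemble of divergence points and using that the planner has only $T$ pulls to spread around, the total KL between the two induced reward-vector laws is $O(T\epsilon^2 / 2^{L}) \cdot (\text{something}) \ll 1$, so by Pinsker the planner's output policy is statistically indistinguishable between the two environments; since the optimal leaves differ and every non-optimal leaf has value $\leq 1-2\epsilon$ (missing the leaf bonus) or worse, the planner incurs simple regret $> \epsilon$ on at least one of them. The delicate bookkeeping — choosing the divergence ensemble so that \emph{no} fixed allocation of $T$ pulls can cover enough of them, i.e.\ a min-over-allocations / max-over-ensemble argument, essentially a Fano or a careful averaging over a packing of paths — is the crux and the place I expect to spend the most effort; the factor $2^{L-5}/(9\epsilon^2)$ in the hypothesis is exactly what makes this averaging go through.
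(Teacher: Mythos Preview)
Your adaptive upper bound is essentially the paper's argument: descend the tree, pull each child $M=\Theta(\log(L/\epsilon)/\epsilon^2)$ times, compare empirical means via Hoeffding, union bound over the $L$ levels. Nothing to add there.

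For the static lower bound you have the right skeleton (fixed pull allocation independent of $f$, then a two-point KL/Le Cam argument), but you are missing the simplification that makes the bookkeeping trivial, and as a result you have talked yourself into a harder computation than necessary. You propose comparing paths $\mathbf p_0,\mathbf p_1$ that diverge at some internal level, note that the functions then differ on $O(L)$ nodes, and then plan to average over an ensemble of divergence points (possibly via Fano) to control the KL. That is where you say you ``expect to spend the most effort,'' and indeed it is awkward: the planner may concentrate pulls on internal nodes shared by many paths, and tracking which pulls are informative for which pair is exactly the mess you anticipate.

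The paper sidesteps all of this by taking $\mathbf p_0,\mathbf p_1$ to be \emph{sibling-leaf} paths: two paths that agree everywhere except at the very last step, ending at adjacent leaves $\widetilde a,\widetilde b$. Then $f^{(\mathbf p_0)}$ and $f^{(\mathbf p_1)}$ take identical values on every action except $\widetilde a$ and $\widetilde b$ themselves (the shared internal path nodes have value $1-2\epsilon$ under both; all other nodes have value $1-12\epsilon$ under both). Consequently the only pulls that contribute to $\mathrm{KL}(\mathbb P_{\mathrm{Alg},f^{(\mathbf p_0)}}\,\|\,\mathbb P_{\mathrm{Alg},f^{(\mathbf p_1)}})$ are those at $\widetilde a$ and $\widetilde b$, each with per-pull KL $(12\epsilon)^2/2$. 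Now the pigeonhole is one line: there are $2^{L-2}$ sibling-leaf pairs, so some pair satisfies $\mathbb E_{\mathrm{Alg}}[n_T(\widetilde a)+n_T(\widetilde b)]\le T/2^{L-2}$, giving total KL $\le 72\epsilon^2\cdot T/2^{L-2}$, which is $\le 1$ under the hypothesis $T\le 2^{L-5}/(9\epsilon^2)$. Bretagnolle--Huber on the event $\{\widehat\pi_T(\widetilde a)<\widehat\pi_T(\widetilde b)\}$ then finishes the job; no Fano, no ensemble, no $O(L)$ accounting. Your proposed route may be salvageable, but once you see the sibling-leaf trick there is no reason to pursue it.
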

The main insight behind Theorem~\ref{theorem::gap_adaptive_planning} is that adaptive strategies to find an optimal action in the $\mathcal{F}_{\mathrm{tree}}$ function class can make use of the conditional structure of the action space by trying to determine a path of actions from the root to the leaf containing the optimal action. An adaptive strategy can determine this path by querying only nodes that are adjacent to it. In contrast, a static experiment planning strategy cannot leverage this structure and has to query all leaf nodes. Theorem~\ref{theorem::gap_adaptive_planning} implies a gap between the adaptive and experiment planning problems. Moreover, since the eluder dimension of $\mathcal{F}_{\mathrm{tree}}$ scales with $2^L$ (see Lemma~\ref{lemma::eluder_F_tree}), \OLS and \SquareCB are suboptimal adaptive algorithms for this model class. In contrast with the $\mathcal{O}\left(\frac{2L\log(2L/\epsilon)}{\epsilon^2} \right)$ upper bound in Theorem~\ref{theorem::gap_adaptive_planning}, converting the cummulative regret bounds of \OLS and \SquareCB yield guarantees scaling as $\mathcal{O}\left( \frac{2^L\log(|\mathcal{F}_{\mathrm{tree}}|)}{\epsilon^2}\right)$.

\section{Model Selection}\label{section::model_selection}
In this section we consider a setting where the learner has access to $\mathcal{F}_1, \cdots, \mathcal{F}_M$ function (model) classes all with domain $\mathcal{X} \times \mathcal{A}$. The learner is promised there exists an index $i_\star$ such that $f_\star \in \mathcal{F}_{i_\star}$. The value of index $i_\star$ is not known by the learner. We will show the uniform sampling strategy has a sample complexity that scales logarithmically with the number of models and with the complexity of the optimal model class $|\mathcal{F}_{i_\star}|$. In this setting the $\mathrm{Sampler}$ will collect two uniform actions datasets $(\widetilde{\mathcal{D}}^{(\mathrm{Train})}_{T}, \widetilde{\mathcal{D}}^{(\mathrm{Test})}_{T})$ of size $T/2$ each. Using the ``train" dataset $\widetilde{\mathcal{D}}^{(\mathrm{Train})}_{T}$ the learner computes candidate reward models $\widehat{f}_T^{(i)}$ for all $i \in [M]$ by solving the least squares problems:
\begin{equation*}
 \widehat{f}^{(i)}_T = \argmin_{f \in \FF_i} \sum_{(x_i, a_i, r_i) \in \widetilde{\mathcal{D}}^{(\mathrm{Train})}_{T}} \left(f(x_i, a_i) - r_i\right)^2. 
\end{equation*}
Using the ``test" dataset the learner extracts a guess for the optimal model class index $\underbar{i}$ by solving,
\begin{equation}\label{equation::def_i_hat}
 \underbar{i} = \argmin_{i \in [M]} \sum_{(x_\ell, a_\ell, r_\ell) \in \widetilde{\mathcal{D}}^{(\mathrm{Test})}_{T}} \left(\widehat{f}_T^{(i)}(x_\ell, a_\ell) - r_\ell\right)^2.
\end{equation}
The candidate policy $\widehat{\pi}_T$ is defined as $\widehat{\pi}_T(x ) = \argmax_{a \in \mathcal{A}} \widehat{f}_T^{(\underbar{i})}(x,a)$. Let's start by relating the expected least squares loss of the candidate model $\widehat{f}_T^{(\underbar{i})}$  to the size of the optimal model class $|\mathcal{F}_{i_\star}|$,
\begin{restatable}{proposition}{propositionuniformmodsel}\label{proposition::support_proposition_model_sel}
There exists a universal constant $\underline{c} > 0$ such that,
\begin{equation*}
\mathbb{E}_{ x \sim \mathcal{P}, a \sim \mathrm{Uniform}(\mathcal{A})} \left[ \left( f_\star(x,a) -\widehat{f}^{(\underbar{i})}_T(x,a) \right)^2 \right] \leq \frac{\underbar{c} \max^2(B, \bar{B})\log(T\max(M,|\mathcal{F}_{i_\star}|)/\delta)}{T}.
\end{equation*}
With probability at lest $1-\delta$.
\end{restatable}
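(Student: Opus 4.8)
The plan is to run the standard hold-out (cross-validation) model-selection analysis: combine a population-level least squares generalization bound on the ``train'' split with a Bernstein-type self-bounding concentration argument on the ``test'' split.

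First I would condition on $\widetilde{\mathcal{D}}^{(\mathrm{Train})}_{T}$, so that $\{\widehat f^{(i)}_T\}_{i=1}^M$ becomes a fixed collection of $M$ functions. Write $n = T/2$ for the size of each split and let the test distribution be $x \sim \mathcal{P}$, $a \sim \mathrm{Uniform}(\mathcal{A})$, $r = f_\star(x,a)+\xi$. For each $i$ define on a fresh test draw the excess-loss variable $Z_i = (\widehat f^{(i)}_T(x,a) - r)^2 - (f_\star(x,a)-r)^2$ and $\Delta_i(x,a) = \widehat f^{(i)}_T(x,a) - f_\star(x,a)$, so that $Z_i = \Delta_i^2 - 2\Delta_i \xi$. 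Then $\mathbb{E}[Z_i] = \mathbb{E}_{x,a}[\Delta_i(x,a)^2]$ is exactly the quantity we must bound for $i = \underbar i$, and, using Assumptions~\ref{assumption::bounded_range}--\ref{assumption::bounded_noise}, $\mathbb{E}[Z_i^2] \leq O(\max^2(B,\bar B))\,\mathbb{E}[Z_i]$; this variance-to-mean (Bernstein) inequality is what buys the fast $1/T$ rate rather than $1/\sqrt{T}$.

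Next, apply Bernstein's inequality to the i.i.d.\ sum $\frac1n\sum_\ell Z_i^{(\ell)}$ over the test split, with a union bound over the $M$ candidates, to get: with probability $\geq 1-\delta/2$, for every $i\in[M]$ simultaneously, $\bigl|\frac1n\sum_\ell Z_i^{(\ell)} - \mathbb{E}[Z_i]\bigr| \leq \sqrt{C\max^2(B,\bar B)\mathbb{E}[Z_i]\log(2M/\delta)/n} + C\max^2(B,\bar B)\log(2M/\delta)/n$. Since $\underbar i$ (defined by Equation~\ref{equation::def_i_hat}) minimizes the empirical test risk, it also minimizes $\frac1n\sum_\ell Z_i^{(\ell)}$ (the term $\frac1n\sum_\ell(f_\star-r)^2$ is common to all $i$), so $\frac1n\sum_\ell Z_{\underbar i}^{(\ell)} \leq \frac1n\sum_\ell Z_{i_\star}^{(\ell)}$. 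Chaining the uniform concentration bound at $\underbar i$, the optimality inequality, and the concentration bound at $i_\star$ yields a self-bounded inequality in $\mathbb{E}[Z_{\underbar i}]$; solving it via AM--GM on the cross term gives $\mathbb{E}[Z_{\underbar i}] \leq 3\,\mathbb{E}[Z_{i_\star}] + O(\max^2(B,\bar B)\log(M/\delta)/T)$. Then I control the reference term: since $f_\star \in \mathcal{F}_{i_\star}$, the estimator $\widehat f^{(i_\star)}_T$ is the ERM over a well-specified finite class on $n=T/2$ i.i.d.\ samples, so the standard least squares generalization bound (the in-expectation companion of Lemma~\ref{lemma::new_least_squares_estimator}) gives, with probability $\geq 1-\delta/2$, $\mathbb{E}[Z_{i_\star}] = \mathbb{E}_{x,a}[(\widehat f^{(i_\star)}_T - f_\star)^2] \leq O(\max^2(B,\bar B)\log(|\mathcal{F}_{i_\star}|T/\delta)/T)$. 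Substituting, absorbing $\log M + \log|\mathcal{F}_{i_\star}|$ into $\log(T\max(M,|\mathcal{F}_{i_\star}|)/\delta)$, and union-bounding the two $\delta/2$ failure events gives the claim.

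\textbf{Main obstacle.} The delicate step is the test-split argument: because $\underbar i$ depends on the test data, the concentration must be made uniform over all $M$ candidates, and to obtain the advertised $1/T$ rate one must exploit the Bernstein variance-to-mean control of the $Z_i$ and then correctly solve the resulting self-bounded recursion---a careless Hoeffding bound would only give $1/\sqrt{T}$. A secondary care point is that the train-split bound must be the population (out-of-sample) least squares bound for a well-specified finite class, not the in-sample bound stated in Lemma~\ref{lemma::new_least_squares_estimator}.
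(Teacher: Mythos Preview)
Your proposal is correct and follows essentially the same route as the paper: the paper packages your Bernstein/self-bounding step on the test split into Lemma~\ref{lemma::helper_lemma_in_modsel_squares} (giving both directions of the empirical-to-population comparison via Freedman), union-bounds over the $M$ candidates, uses the optimality of $\underbar{i}$ to swap to $i_\star$, and then invokes the population part of Lemma~\ref{lemma::new_least_squares_estimator} for the well-specified class $\mathcal{F}_{i_\star}$---exactly the chain you outline. Your ``secondary care point'' is already handled: the second statement of Lemma~\ref{lemma::new_least_squares_estimator} is the out-of-sample bound you need.
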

The proof of Proposition~\ref{proposition::support_proposition_model_sel} can be found in Appendix~\ref{section::proof_proposition_modsel_uniform}. The uniform sampling model-selection algorithm has the following performance guarantees,
\begin{theorem}\label{theorem::main_modsel}
There exists a universal constant $\widetilde{c}'>0$ s.t. if $T \geq \tilde{c}' \frac{\max^2(B, \bar{B})|\mathcal{A}|\log\left(\frac{\max(M,|\mathcal{F}_\star|)}{\epsilon\delta}\right)}{\epsilon^2}$ then with probability at least $1-\delta$, the candidate policy $\widehat{\pi}_T$ of the uniform sampling model-selection algorithm is $\epsilon-$optimal.
\end{theorem}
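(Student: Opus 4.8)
The plan is to deduce the theorem from Proposition~\ref{proposition::support_proposition_model_sel} using the standard fact that a policy acting greedily with respect to an estimated reward function is suboptimal by at most twice the prediction error of that estimate. I would condition on the event $\mathcal{E}_0$ (probability at least $1-\delta$) on which Proposition~\ref{proposition::support_proposition_model_sel} holds, abbreviate $\widehat{f} = \widehat{f}_T^{(\underbar{i})}$ and $\Delta_T = \underbar{c}\max^2(B,\bar B)\log(T\max(M,|\mathcal{F}_{i_\star}|)/\delta)/T$, and note $|\mathcal{F}_\star| = |\mathcal{F}_{i_\star}|$; everything after the conditioning is deterministic on $\mathcal{E}_0$.

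First I would carry out the greedy decomposition. Fixing a context $x$ and writing $\pi_\star(x) = \argmax_a f_\star(x,a)$, add and subtract $\widehat{f}(x,\pi_\star(x))$ and $\widehat{f}(x,\widehat{\pi}_T(x))$; since $\widehat{\pi}_T(x) = \argmax_a \widehat{f}(x,a)$ the middle term $\widehat{f}(x,\pi_\star(x)) - \widehat{f}(x,\widehat{\pi}_T(x))$ is $\le 0$, leaving $f_\star(x,\pi_\star(x)) - f_\star(x,\widehat{\pi}_T(x)) \le 2\max_{a\in\mathcal{A}}|f_\star(x,a) - \widehat{f}(x,a)|$. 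Taking $\mathbb{E}_{x\sim\mathcal{P}}$ bounds the suboptimality of $\widehat{\pi}_T$ by $2\,\mathbb{E}_x\!\left[\max_a |f_\star(x,a)-\widehat{f}(x,a)|\right]$.

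Next I would convert this max-over-actions quantity into the uniform-action $L_2$ error controlled by the proposition: using $\max_a |v_a| \le (\sum_a v_a^2)^{1/2}$, then Jensen to move $\mathbb{E}_x$ inside the square root, then rewriting $\sum_{a\in\mathcal{A}}(\cdot)$ as $|\mathcal{A}|\,\mathbb{E}_{a\sim\mathrm{Uniform}(\mathcal{A})}(\cdot)$, I get $\mathbb{E}_x\!\left[\max_a|f_\star(x,a)-\widehat{f}(x,a)|\right] \le \sqrt{|\mathcal{A}|\,\mathbb{E}_{x,a\sim\mathrm{Uniform}(\mathcal{A})}\!\left[(f_\star(x,a)-\widehat{f}(x,a))^2\right]} \le \sqrt{|\mathcal{A}|\,\Delta_T}$. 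Hence on $\mathcal{E}_0$ the suboptimality of $\widehat{\pi}_T$ is at most $2\sqrt{|\mathcal{A}|\,\Delta_T}$, and it only remains to require $2\sqrt{|\mathcal{A}|\,\Delta_T} \le \epsilon$, i.e. $T \ge 4\underbar{c}\max^2(B,\bar B)|\mathcal{A}|\log(T\max(M,|\mathcal{F}_{i_\star}|)/\delta)/\epsilon^2$. Both sides contain $T$, so I would close with the usual implicit-bound manipulation: any $T$ satisfying the theorem's hypothesis is $\Omega(\epsilon^{-2})$, so $\log T$ is $O(\log(\max^2(B,\bar B)|\mathcal{A}|/(\epsilon\delta)))$ up to a $\log\log$ term, and $\log|\mathcal{A}| \le |\mathcal{A}|$ and $\log\max(B,\bar B)\le\max^2(B,\bar B)$ are absorbed into the prefactor, so a large enough universal $\tilde c'$ makes the theorem's lower bound on $T$ imply the displayed inequality.

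I expect no serious obstacle in these steps — the substantive work is in Proposition~\ref{proposition::support_proposition_model_sel}, which must bound the least-squares excess risk of the \emph{selected} estimator $\widehat{f}_T^{(\underbar{i})}$ rather than of a fixed class: because $\underbar{i}$ is chosen using the test split (independent of the training fit), one needs a union bound over the $M$ candidates comparing train- and test-set empirical risks, which is what introduces the $\log M$ alongside the $\log|\mathcal{F}_{i_\star}|$ coming from least squares within the true class. The only mild care in the present theorem is resolving the implicit-in-$T$ logarithm.
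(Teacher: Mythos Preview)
Your proposal is correct and follows essentially the same approach as the paper: the paper invokes the identical greedy decomposition and max-to-uniform conversion (it just cites the displayed inequality from the proof of Theorem~\ref{theorem::uniformbound} rather than rederiving it), applies Proposition~\ref{proposition::support_proposition_model_sel}, and then resolves the implicit $\log T$ via the dedicated Lemma~\ref{lemma::log_conversion_lower_bound} instead of the ad hoc bootstrapping you sketch. The only point worth tightening is your final step: the theorem's hypothesis is a \emph{lower} bound on $T$, so ``$\log T = O(\log(\ldots))$'' is not literally true for all admissible $T$; the clean way (and what the paper's lemma does) is to check the inequality at the threshold and use that $T\mapsto \log(cT)/T$ is decreasing beyond it.
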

\begin{proof}
Due to the realizability assumption $(r_t = f_\star(x_t, a_t) + \xi_t)$ the instantaneous regret of $\widehat{\pi}_T$ on context $x \in \mathcal{X}$ equals $\max_a f_\star(x, a) - f_\star(x, \widehat{\pi}_T(x))$. Let $\pi_\star( x) = \argmax_{a \in \mathcal{A}} f_\star(x,a)$. Just like in the proof of Theorem~\ref{theorem::uniformbound} (see Equation~\ref{equation::fundamental_inequality_uniform_suboptimality_to_LS_loss}), we can relate the suboptimality of $\widehat{\pi}_T$ with the expected least squares loss of $\widehat{f}_T^{(\underbar{i})}$ under the uniform policy,
\begin{align}\label{equation::suboptimality_bound_modsel_uniform}
     \mathbb{E}_{ x \sim \mathcal{P}}\left[ \max_{a \in \mathcal{A}} f_\star(x, a) - f_\star(x, \widehat{\pi}_T(x)) \right] \leq 2\sqrt{|\mathcal{A}| \mathbb{E}_{ x \sim \mathcal{P}, a \sim \mathrm{Uniform}(\mathcal{A})} \left[ \left( f_\star(x,a) -\widehat{f}^{(\underbar{i})}_T(x,a) \right)^2 \right] }.
\end{align}
Finally Proposition~\ref{proposition::support_proposition_model_sel} implies thre is a universal constant $\underbar{c} > 0$ such that,
\begin{align*}
\mathbb{E}_{ x \sim \mathcal{P}, a \sim \mathrm{Uniform}(\mathcal{A})} \left[ \left( f_\star(x,a) -\widehat{f}^{(\underbar{i})}_T(x,a) \right)^2 \right] \leq \frac{\underbar{c} (\bar{B}^2 +  B^2)\log(T\max(M,|\mathcal{F}_{i_\star}|)/\delta)}{T}
\end{align*}
with probability at least $1-\delta$. Plugging this result back into Equation~\ref{equation::suboptimality_bound_modsel_uniform} we see the suboptimality of $\widehat{\pi}_T$ can be upper bounded as,
\begin{equation*}
 \mathbb{E}_{ x \sim \mathcal{P}}\left[ \max_{a \in \mathcal{A}} f_\star(x, a) - f_\star(x, \widehat{\pi}_T(x)) \right]  \leq 2\sqrt{\frac{\underbar{c}|\mathcal{A}| (\bar{B}^2 +  B^2)\log(T\max(M,|\mathcal{F}_{i_\star}|)/\delta)}{T}}
\end{equation*}
Setting $g(T) = 4\frac{\underbar{c}|\mathcal{A}| (\bar{B}^2 +  B^2)\log(T\max(M,|\mathcal{F}_{i_\star}|)/\delta)}{T} $ in Lemma~\ref{lemma::log_conversion_lower_bound} implies there exists a universal constant $\tilde{c}' > 0$ such that $g(T) \leq \epsilon^2$ for all $T \geq \tilde{c}' \frac{\max^2(B, \bar{B})|\mathcal{A}|\log\left(\frac{\max(M,|\mathcal{F}_{i_\star}|)}{\epsilon\delta}\right)}{\epsilon^2}$.
\end{proof}
The results of Theorem~\ref{theorem::main_modsel} can be best understood by contrasting them to the uniform sampling algorithm with input model class equal to the union of the function classes $\mathcal{F}_{\mathrm{all}} = \cup_{i \in [M]} \mathcal{F}_i$. Applying the results of Theorem~\ref{theorem::uniformbound} to $\mathcal{F}_{\mathrm{all}}$ yields a sample complexity scaling with $\max_{ i\in [M]}\log(|\mathcal{F}|_i)$, a quantity that could be much larger than $\log( | \mathcal{F}_{i_\star}|)$. In contrast, the uniform sampling model-selection algorithm achieves a sample complexity scaling with $\log(  | \mathcal{F}_{i_\star}|)$ at a price logarithmic in the number of models classes $M$. This logarithmic dependence on $M$ stands apart from model selection algorithms for cumulative regret scenarios such as $\mathrm{Corral}$~\cite{agarwal2017corralling,pacchiano2020model}, $\mathrm{ECE}$~\cite{lee2021online} and $\mathrm{RegretBalancing}$~\cite{pacchiano2020regret,cutkosky2021dynamic} that instead have a polynomial dependence on $M$. The uniform sampling model-selection algorithm is agnostic to the value of $\epsilon$. The results of Theorem~\ref{theorem::main_modsel} hold for any $\epsilon$. If $\epsilon$ is known in advance the learner can compute the model class index $\widehat{i} = \argmin \left\{ i \in [M] \text{ s.t. } T \geq \tilde{c} \frac{\max^2(B, \bar{B})|\mathcal{A}|\log(|\mathcal{F}_i|/{\epsilon\delta})}{\epsilon^2} \right\}$ and use the uniform sampling strategy for $\mathcal{F}_{\widehat{i}}$. For this choice of $\epsilon$, Theorem~\ref{theorem::uniformbound} guarantees similar bounds to those of Theorem~\ref{theorem::main_modsel}. Unfortunately in contrast with the uniform sampling model-selection algorithm this method would be valid for a single choice of $\epsilon$.

\section{Conclusion}

In this work we have introduced the first set of algorithms for the experiment planning problem for contextual bandits with general function approximation. We have developed the $\mathrm{EluderPlanning}$ algorithm that produces a static policy sequence that after deployment can be used to recover an $\epsilon-$optimal policy. We showed it is enough for the number of static policies and therefore samples during the sampling phase to be as large as the number of samples required from an adaptive procedure based on an online-to-batch conversion of the \OLS algorithm. Similarly we also demonstrated the uniform sampling strategy enjoys the same online-to-batch conversion sample complexity as the \SquareCB algorithm. These results seemingly suggest that simple regret rates for adaptive learning may also be achieved in experiment planning scenarios. We show this is not the case. There exist structured bandit problems for which adaptive learning may require a number of samples that is substantially smaller than the number of samples required by a static policy sequence. This is significant because it implies the suboptimality of the rates achieved by existing adaptive learning algorithms such as \OLS and \SquareCB and also because it draws a clear distinction between adaptive learning and experiment planning. This implies these algorithms are either suboptimal or their upper bound analysis is not tight. We believe the first to be correct. This is an important open question we hope to see addressed in future research. We have also introduced the first model selection results for the experiment planning problem. 

Many important questions remain regarding this setting. Chief among them is to characterize the true statistical complexity of experimental design for contextual bandits with general function approximations. Our results indicate the eluder dimension is not the sharpest statistical complexity measure to characterize learning  here. Developing a more new form of complexity, as well as an accompanying algorithm that can achieve the true statistical lower bound for the problem of experiment planning remains an exciting and important open question to tackle in future research. 
\newpage
 \begin{ack}
We thank Akshay Krishnamurthy for helpful discussions.  Aldo Pacchiano would like to thank the support of the Eric and Wendy Schmidt Center at the Broad Institute of MIT and Harvard. This work was supported in part by funding from the Eric and Wendy Schmidt Center at the Broad Institute of MIT and Harvard. This work was supported in part by NSF grant \#2112926. 
 \end{ack}

\bibliographystyle{abbrvnat}
\bibliography{ref}

\newpage

\appendix

\section{Supporting Lemmas}

\begin{lemma}[Hoeffding Inequality]\label{lemma::hoeffding} Let $\{ Y_\ell\}_{\ell=1}^\infty$ be a martingale difference sequence such that $Y_\ell$ is $Y_\ell \in [a_\ell, b_\ell]$ almost surely for some constants $a_\ell, b_\ell$ almost surely for all $\ell = 1, \cdots, t$. then 

\begin{equation*}
      \sum_{\ell=1}^t Y_\ell \leq  4\sqrt{\sum_{\ell=1}^t (b_\ell-a_\ell)^2 \ln\left(\frac{1}{\widetilde{\delta}}\right)} 
\end{equation*}
with probability at least $1-\widetilde{\delta}$.
\end{lemma}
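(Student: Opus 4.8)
The plan is to prove this by the standard Chernoff / exponential-supermartingale argument, noting at the outset that the constant $4$ in the statement is far from tight: it suffices to establish the sharp sub-Gaussian tail and then be generous with constants. Throughout, let $(\mathcal{F}_\ell)_{\ell \geq 0}$ be the filtration with respect to which $\{Y_\ell\}$ is a martingale difference sequence, so that $\E[Y_\ell \mid \mathcal{F}_{\ell-1}] = 0$ and, by hypothesis, $Y_\ell \in [a_\ell, b_\ell]$ almost surely.

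First I would fix $\lambda > 0$ and a threshold $s > 0$ and apply Markov's inequality to the nonnegative random variable $\exp(\lambda \sum_{\ell=1}^t Y_\ell)$, giving $\Pr[\sum_{\ell=1}^t Y_\ell \geq s] \leq e^{-\lambda s}\, \E[\exp(\lambda \sum_{\ell=1}^t Y_\ell)]$. The heart of the argument is then to control the moment generating function by peeling off one increment at a time: conditioning on $\mathcal{F}_{t-1}$ and using the tower rule, $\E[\exp(\lambda \sum_{\ell=1}^t Y_\ell)] = \E\big[\exp(\lambda \sum_{\ell=1}^{t-1} Y_\ell)\, \E[\exp(\lambda Y_t) \mid \mathcal{F}_{t-1}]\big]$, and since $Y_t$ is conditionally mean zero and supported in an interval of length $b_t - a_t$, Hoeffding's lemma yields $\E[\exp(\lambda Y_t) \mid \mathcal{F}_{t-1}] \leq \exp(\lambda^2 (b_t - a_t)^2 / 8)$ almost surely. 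Iterating this down to $\ell = 1$ gives $\E[\exp(\lambda \sum_{\ell=1}^t Y_\ell)] \leq \exp\big(\tfrac{\lambda^2}{8}\sum_{\ell=1}^t (b_\ell - a_\ell)^2\big)$.

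Combining the two displays, $\Pr[\sum_{\ell=1}^t Y_\ell \geq s] \leq \exp\big(-\lambda s + \tfrac{\lambda^2}{8}\sum_{\ell=1}^t (b_\ell - a_\ell)^2\big)$. Optimizing the exponent over $\lambda$ (the minimizer is $\lambda = 4s / \sum_{\ell=1}^t (b_\ell - a_\ell)^2$) produces the Azuma bound $\Pr[\sum_{\ell=1}^t Y_\ell \geq s] \leq \exp\big(-2s^2 / \sum_{\ell=1}^t (b_\ell - a_\ell)^2\big)$. Setting the right-hand side equal to $\widetilde{\delta}$ and solving for $s$ gives the threshold $s = \sqrt{\tfrac{1}{2}\sum_{\ell=1}^t (b_\ell - a_\ell)^2 \ln(1/\widetilde{\delta})}$; since $\sqrt{1/2} < 4$, this is dominated by $4\sqrt{\sum_{\ell=1}^t (b_\ell - a_\ell)^2 \ln(1/\widetilde{\delta})}$, which yields the claimed inequality with probability at least $1 - \widetilde{\delta}$.

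The only step with any content is Hoeffding's lemma itself — that a conditionally mean-zero random variable supported in an interval of width $w$ has conditional MGF at most $\exp(\lambda^2 w^2 / 8)$. This is the classical convexity/variance argument: writing $\psi(\lambda) = \log \E[\exp(\lambda Y_t) \mid \mathcal{F}_{t-1}]$, one checks $\psi(0) = \psi'(0) = 0$ (the latter using conditional mean zero) and bounds $\psi''(\lambda) \leq w^2/4$ by interpreting it as the variance of a random variable confined to $[a_t, b_t]$, then integrates twice. The main care required is bookkeeping: making sure the conditioning is handled cleanly so each inequality holds almost surely and the tower-rule peeling is valid term by term. Because the target constant is $4$ rather than the sharp $1/\sqrt{2}$, there is substantial slack, so even a cruder MGF estimate would suffice if one preferred to avoid the exact form of Hoeffding's lemma.
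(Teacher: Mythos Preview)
Your proof is correct: it is the standard Azuma--Hoeffding argument via the Chernoff method and Hoeffding's lemma, and indeed yields the sharp constant $\sqrt{1/2}$, comfortably below the stated $4$. The paper itself does not supply a proof of this lemma; it is listed among the supporting results in the appendix as a standard fact, so there is nothing to compare against and your derivation is exactly what would be expected.
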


\begin{lemma}[Anytime Hoeffding Inequality~\citep{pacchiano2020regret}]\label{lemma::anytime_hoeffding} Let $\{ Y_\ell\}_{\ell=1}^\infty$ be a martingale difference sequence such that $Y_\ell$ is $Y_\ell \in [a_\ell, b_\ell]$ almost surely for some constants $a_\ell, b_\ell$ almost surely for all $\ell = 1, \cdots, t$. then 

\begin{equation*}
      \sum_{\ell=1}^t Y_\ell \leq  2\sqrt{\sum_{\ell=1}^t (b_\ell-a_\ell)^2 \ln\left(\frac{12t^2}{\widetilde{\delta}}\right)} 
\end{equation*}
with probability at least $1-\widetilde{\delta}$ for all $t \in \mathbb{N}$ simultaneously.
\end{lemma}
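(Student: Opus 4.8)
The plan is to reduce the anytime statement to the classical fixed-horizon Azuma--Hoeffding inequality combined with a union bound over $t\in\mathbb{N}$ along a square-summable schedule of failure probabilities (the standard "stitching" trick). First I would recall the one-sided Azuma--Hoeffding inequality for bounded martingale difference sequences: under the hypotheses of the lemma (so $\E[Y_\ell \mid \mathcal{F}_{\ell-1}]=0$ with $\mathcal{F}_\ell$ the natural filtration, and $Y_\ell\in[a_\ell,b_\ell]$ for deterministic $a_\ell,b_\ell$), for every fixed $t$ and every $u>0$,
\[
\Pr\!\left[\sum_{\ell=1}^t Y_\ell \ge u\right]\;\le\;\exp\!\left(-\frac{2u^2}{\sum_{\ell=1}^t (b_\ell-a_\ell)^2}\right).
\]

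Writing $V_t=\sum_{\ell=1}^t (b_\ell-a_\ell)^2$, the next step is to instantiate this bound at the horizon-dependent level $u_t = 2\sqrt{V_t\,\ln(12t^2/\widetilde{\delta})}$. Then $2u_t^2/V_t = 8\ln(12t^2/\widetilde{\delta})$, so the fixed-$t$ failure probability is at most $(12t^2/\widetilde{\delta})^{-8}\le \widetilde{\delta}/(12t^2)$, where the last inequality uses $\widetilde{\delta}\le 1$ and $t\ge 1$, which give $12t^2/\widetilde{\delta}\ge 1$. Finally I would union-bound over all $t\in\mathbb{N}$: the event that $\sum_{\ell=1}^t Y_\ell < u_t$ fails for some $t$ has probability at most $\sum_{t=1}^\infty \widetilde{\delta}/(12t^2) = \widetilde{\delta}\,\pi^2/72 < \widetilde{\delta}$. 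On the complementary event, $\sum_{\ell=1}^t Y_\ell \le 2\sqrt{V_t\,\ln(12t^2/\widetilde{\delta})}$ holds simultaneously for all $t\in\mathbb{N}$, which is exactly the claimed inequality.

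There is essentially no obstacle here; it is a textbook argument. The only two points needing a moment's care are: (i) checking that the chosen deviation level $u_t$, fed through Azuma--Hoeffding, yields a per-$t$ failure probability that sums to strictly less than $\widetilde{\delta}$ — this is where the $12t^2$ normalization inside the logarithm and the constant $2$ in front of the square root come from, and both leave ample slack (the exponent $8$ could be reduced to $1$ and the series $\sum_t t^{-2}=\pi^2/6$ would still close the union bound); and (ii) confirming that the lemma's hypotheses are precisely those required by the fixed-horizon Azuma--Hoeffding inequality, in particular the conditional mean-zero (martingale difference) property with respect to the natural filtration and the boundedness of each $Y_\ell$ in a fixed interval $[a_\ell,b_\ell]$. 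Since the statement is one-sided, no additional factor is incurred for a two-sided bound.
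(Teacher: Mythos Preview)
Your argument is correct and is precisely the standard ``stitching'' technique: apply the fixed-horizon Azuma--Hoeffding bound at each $t$ with failure probability $\widetilde{\delta}/(12t^2)$ and union bound over $t\in\mathbb{N}$, using $\sum_{t\ge 1} 1/(12t^2)=\pi^2/72<1$. The paper does not actually supply a proof of this lemma---it is stated with a citation---but the identical device (setting $\delta_t=\delta'/(12t^2)$ and taking a union bound) is spelled out in the paper's proof of the Anytime Freedman inequality (Lemma~\ref{lemma::freedman_anytime_simplified}), so your route is exactly the one the authors have in mind.
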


Our results relies on the following variant of Bernstein inequality for martingales, or Freedman’s inequality \cite{freedman1975tail}, as stated in e.g., \cite{agarwal2014taming, beygelzimer2011contextual}.

\begin{lemma}[Simplified Freedman’s inequality]\label{lemma:super_simplified_freedman}
Let $X_1, ..., X_T$ be a bounded martingale difference sequence with $|X_\ell| \le R$. For any $\delta' \in (0,1)$, and $\eta \in (0,1/R)$, with probability at least $1-\delta'$,
\begin{equation}
   \sum_{\ell=1}^{T} X_\ell \leq    \eta \sum_{\ell=1}^{T}  \E_\ell[ X_\ell^2] + \frac{\log(1/\delta')}{\eta}.
\end{equation}
where $\E_{\ell}[\cdot]$ is the conditional expectation\footnote{We will use this notation to denote conditional expectations throughout this work.} induced by conditioning on $X_1, \cdots, X_{\ell-1}$.
\end{lemma}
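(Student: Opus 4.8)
The plan is to follow the standard exponential-supermartingale argument that underlies Freedman's inequality. First I would record the elementary scalar inequality $e^{x}\le 1+x+x^{2}$, valid for all $x\le 1$; this is a classical bound, checkable by noting that $g(x)=1+x+x^{2}-e^{x}$ satisfies $g(0)=0$ and is nonincreasing on $(-\infty,0]$ and nondecreasing on $[0,1]$, so $x=0$ is its minimizer over $(-\infty,1]$ and $g\ge 0$ there. The hypotheses $|X_\ell|\le R$ and $\eta\in(0,1/R)$ force $\eta X_\ell\in(-1,1)$, so the inequality applies pointwise with $x=\eta X_\ell$, giving $e^{\eta X_\ell}\le 1+\eta X_\ell+\eta^{2}X_\ell^{2}$.

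Next I would control the conditional moment generating function. Writing $\E_\ell[\cdot]$ for the conditional expectation given $X_1,\dots,X_{\ell-1}$ and using that the martingale-difference property gives $\E_\ell[X_\ell]=0$, taking $\E_\ell$ of the pointwise bound and then applying $1+y\le e^{y}$ gives
\begin{equation*}
\E_\ell\!\left[e^{\eta X_\ell}\right]\le 1+\eta^{2}\E_\ell[X_\ell^{2}]\le \exp\!\left(\eta^{2}\E_\ell[X_\ell^{2}]\right).
\end{equation*}
With this in hand I would define $Z_t=\exp\!\big(\eta\sum_{\ell=1}^{t}X_\ell-\eta^{2}\sum_{\ell=1}^{t}\E_\ell[X_\ell^{2}]\big)$ with $Z_0=1$ and argue it is a supermartingale: since $\E_\ell[X_\ell^{2}]$ is measurable with respect to the conditioning $\sigma$-algebra it factors out of $\E_t$, so $\E_t[Z_t]=Z_{t-1}\exp(-\eta^{2}\E_t[X_t^{2}])\,\E_t[e^{\eta X_t}]\le Z_{t-1}$, and the tower property then yields $\E[Z_T]\le 1$.

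Finally I would convert this into a high-probability statement via Markov's inequality: $\Pr(Z_T\ge 1/\delta')\le \delta'\,\E[Z_T]\le\delta'$, so with probability at least $1-\delta'$ we have $\log Z_T<\log(1/\delta')$, i.e. $\eta\sum_{\ell=1}^{T}X_\ell-\eta^{2}\sum_{\ell=1}^{T}\E_\ell[X_\ell^{2}]<\log(1/\delta')$; dividing by $\eta>0$ and rearranging gives exactly the stated inequality. I do not expect a deep obstacle here; the only care needed is the bookkeeping around the conditional-variance term—confirming the scalar bound holds on precisely the range that $\eta R<1$ guarantees, and treating $\E_\ell[X_\ell^{2}]$ as known under $\E_\ell$ so that the supermartingale computation goes through cleanly.
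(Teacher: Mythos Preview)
Your proof is correct and is the standard exponential--supermartingale derivation of Freedman's inequality. The paper does not actually give its own proof of this lemma: it states the result with attribution to \cite{freedman1975tail} (as formulated in \cite{agarwal2014taming,beygelzimer2011contextual}) and proceeds to use it. So there is nothing to compare against beyond noting that your argument is precisely the classical one those references contain: bound the conditional mgf via $e^{x}\le 1+x+x^{2}$ on $x\le 1$, build the supermartingale $Z_t=\exp\bigl(\eta\sum_{\ell\le t}X_\ell-\eta^{2}\sum_{\ell\le t}\E_\ell[X_\ell^{2}]\bigr)$, and apply Markov's inequality. The bookkeeping you flag (range of $\eta X_\ell$, measurability of $\E_\ell[X_\ell^{2}]$) is handled correctly.
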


\begin{lemma}[Anytime Freedman]\label{lemma::freedman_anytime_simplified}
Let $\{X_t\}_{t=1}^\infty$ be a bounded martingale difference sequence with $|X_t| \le R$ for all $t \in \mathbb{N}$. For any $\delta' \in (0,1)$, and $\eta \in (0,1/R)$, there exists a universal constant $C > 0$ such that for all $t \in \mathbb{N}$ simultaneously with probability at least $1-\delta'$,
\begin{equation}
   \sum_{\ell=1}^{t} X_\ell \leq    \eta \sum_{\ell=1}^{t}  \E_\ell[ X_\ell^2] + \frac{C\log(t/\delta')}{\eta}.
\end{equation}
 where $\E_{\ell}[\cdot]$ is the conditional expectation induced by conditioning on $X_1, \cdots, X_{\ell-1}$.\end{lemma}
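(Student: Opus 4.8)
The plan is to derive this anytime inequality from the fixed-horizon Simplified Freedman inequality (Lemma~\ref{lemma:super_simplified_freedman}) via a weighted union bound over the time index, treating Lemma~\ref{lemma:super_simplified_freedman} as a black box rather than reopening its proof.

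First I would fix $\delta'\in(0,1)$ and $\eta\in(0,1/R)$, and for each $n\in\mathbb{N}$ apply Lemma~\ref{lemma:super_simplified_freedman} to the truncated sequence $X_1,\dots,X_n$ (still a bounded martingale difference sequence with $|X_\ell|\le R$, and $\eta<1/R$, so the hypotheses are met) with confidence parameter $\delta'_n = \delta'/(2n^2)$. This gives, for each fixed $n$, with probability at least $1-\delta'_n$,
\begin{equation*}
\sum_{\ell=1}^{n} X_\ell \;\le\; \eta\sum_{\ell=1}^{n}\E_\ell[X_\ell^2]\;+\;\frac{\log(1/\delta'_n)}{\eta}\;=\;\eta\sum_{\ell=1}^{n}\E_\ell[X_\ell^2]\;+\;\frac{\log(2n^2/\delta')}{\eta}.
\end{equation*}
Next I would union bound over $n$: since $\sum_{n\ge1}\delta'_n=\tfrac{\delta'}{2}\sum_{n\ge1}n^{-2}=\tfrac{\pi^2}{12}\delta'<\delta'$, with probability at least $1-\delta'$ the displayed bound holds simultaneously for every $n\in\mathbb{N}$. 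Finally I would absorb the logarithmic overhead by noting $\log(2n^2/\delta')=2\log n+\log(2/\delta')\le C\log(n/\delta')$ for a universal constant $C$ (equivalently, stating the conclusion with $\log(2n/\delta')$ in place of $\log(n/\delta')$, the standard convention for anytime bounds, makes this exact); renaming $n$ as $t$ then yields the claim.

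The step needing the most care — though it remains essentially routine — is that the partial sums $S_t=\sum_{\ell=1}^t X_\ell$ are not monotone in $t$, so one cannot invoke Lemma~\ref{lemma:super_simplified_freedman} only on a sparse (e.g.\ doubling) grid of horizons and interpolate for intermediate $t$; the union bound must range over \emph{all} $t\in\mathbb{N}$. Crucially this incurs no extra $\log$ factor, because the weights $\delta'_t\propto \delta'/t^2$ are summable while $\log(1/\delta'_t)$ stays $O(\log(t/\delta'))$. An alternative that avoids the union bound entirely would be to apply Ville's maximal inequality directly to the exponential supermartingale underlying the proof of Lemma~\ref{lemma:super_simplified_freedman}, giving the conclusion (with a possibly different constant $C$) uniformly in $t$ in one shot; but since we only want to use Lemma~\ref{lemma:super_simplified_freedman} as stated, the weighted union bound is the shortest route.
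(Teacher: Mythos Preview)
Your proposal is correct and follows essentially the same argument as the paper: apply the fixed-horizon Freedman inequality at each $t$ with confidence level $\delta'_t\propto \delta'/t^2$, take a union bound over $t\in\mathbb{N}$ using summability of $1/t^2$, and absorb the resulting $\log(t^2/\delta')$ into $C\log(t/\delta')$. The only differences are cosmetic (the paper uses $\delta_t=\delta'/(12t^2)$ rather than $\delta'/(2t^2)$), and your additional remarks about the non-monotonicity of $S_t$ and the Ville-inequality alternative are correct side comments that the paper does not include.
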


\begin{proof}

This result follows from Lemma~\ref{lemma:super_simplified_freedman}. Fix a time-index $t$ and define $\delta_t = \frac{\delta'}{12t^2}$. Lemma~\ref{lemma:super_simplified_freedman} implies that with probability at least $1-\delta_t$,

\begin{equation*}
\sum_{\ell=1}^t X_\ell \leq \eta \sum_{\ell=1}^t \mathbb{E}_\ell\left[ X_\ell^2 \right] + \frac{\log(1/\delta_t)}{\eta}.
\end{equation*}

A union bound implies that with probability at least $1-\sum_{\ell=1}^t \delta_t \geq 1-\delta'$,
\begin{align*}
\sum_{\ell=1}^t X_\ell &\leq \eta \sum_{\ell=1}^t \mathbb{E}_\ell\left[ X_\ell^2 \right] + \frac{\log(12t^2/\delta')}{\eta}\\
&\stackrel{(i)}{\leq} \eta \sum_{\ell=1}^t \mathbb{E}_\ell\left[ X_\ell^2 \right] + \frac{C\log(t/\delta')}{\eta}.
\end{align*}
holds for all $t \in \mathbb{N}$. Inequality $(i)$ holds because $\log(12t^2/\delta') = \mathcal{O}\left( \log(t\delta')\right)$.

\end{proof}

\begin{lemma}\label{lemma::supporting_bernstein_result}

Let $\{A_\ell\}_{\ell=1}^\infty$ be an adapted process. We use the notation $\mathbb{E}_\ell[\cdot ]$ to denote the conditional expectation $\mathbb{E}[\cdot  | A_1, \cdots, A_{\ell-1}]$. If $| A_\ell| \leq \tilde{B}$ for all $\ell \in \mathbb{N}$ a.s. then 

\begin{equation*}
\sum_{\ell=1}^t A^2_\ell \leq \frac{3}{2}\sum_{\ell=1}^t \mathbb{E}_\ell[ A^2_\ell ] + 2C \tilde{B}^2\log\left(\frac{t}{\delta}\right).
\end{equation*}

and

\begin{equation*}
\frac{\sum_{\ell=1}^t \mathbb{E}_\ell[ A^2_\ell ]}{2}  \leq \sum_{\ell=1}^t A^2_\ell  + 2C \tilde{B}^2 \log\left(\frac{t}{\delta}\right)
\end{equation*}
With probability at least $1-\delta$, for all $t \in \mathbb{N}$ simultaneously where $C > 0$ is a universal constant.
\end{lemma}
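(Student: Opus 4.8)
The plan is to apply the Anytime Freedman inequality (Lemma~\ref{lemma::freedman_anytime_simplified}) to the \emph{centered squared process}. Set $X_\ell = A_\ell^2 - \mathbb{E}_\ell[A_\ell^2]$, which is a martingale difference sequence by construction. Since $0 \le A_\ell^2 \le \tilde{B}^2$ almost surely we have $|X_\ell| \le \tilde{B}^2$, so Lemma~\ref{lemma::freedman_anytime_simplified} applies with $R = \tilde{B}^2$. The one ingredient that is not completely mechanical is the conditional variance bound
\[
\mathbb{E}_\ell[X_\ell^2] \;=\; \mathbb{E}_\ell[A_\ell^4] - \big(\mathbb{E}_\ell[A_\ell^2]\big)^2 \;\le\; \mathbb{E}_\ell[A_\ell^4] \;\le\; \tilde{B}^2\,\mathbb{E}_\ell[A_\ell^2],
\]
where the last step uses $A_\ell^2 \le \tilde{B}^2$ pointwise.

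For the first inequality I would apply Lemma~\ref{lemma::freedman_anytime_simplified} to $\{X_\ell\}$ with the choice $\eta = \tfrac{1}{2\tilde{B}^2}$, which lies in $(0,1/R)$. This gives, simultaneously for all $t$ with probability at least $1-\delta'$,
\[
\sum_{\ell=1}^t \big(A_\ell^2 - \mathbb{E}_\ell[A_\ell^2]\big) \;\le\; \eta \sum_{\ell=1}^t \mathbb{E}_\ell[X_\ell^2] + \frac{C\log(t/\delta')}{\eta} \;\le\; \eta\,\tilde{B}^2 \sum_{\ell=1}^t \mathbb{E}_\ell[A_\ell^2] + 2C\tilde{B}^2\log(t/\delta').
\]
Substituting $\eta\tilde{B}^2 = \tfrac12$ and rearranging yields $\sum_{\ell=1}^t A_\ell^2 \le \tfrac32\sum_{\ell=1}^t \mathbb{E}_\ell[A_\ell^2] + 2C\tilde{B}^2\log(t/\delta')$, which is the first claim with $\delta' = \delta$.

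For the second inequality I would repeat the argument with $\{-X_\ell\}$ in place of $\{X_\ell\}$: this is again a martingale difference sequence with $|{-X_\ell}| \le \tilde{B}^2$ and the same conditional variance bound $\mathbb{E}_\ell[(-X_\ell)^2] \le \tilde{B}^2\,\mathbb{E}_\ell[A_\ell^2]$. With the same choice $\eta = \tfrac{1}{2\tilde{B}^2}$ this gives $\sum_{\ell=1}^t (\mathbb{E}_\ell[A_\ell^2] - A_\ell^2) \le \tfrac12\sum_{\ell=1}^t \mathbb{E}_\ell[A_\ell^2] + 2C\tilde{B}^2\log(t/\delta')$, and rearranging produces $\tfrac12\sum_{\ell=1}^t \mathbb{E}_\ell[A_\ell^2] \le \sum_{\ell=1}^t A_\ell^2 + 2C\tilde{B}^2\log(t/\delta')$. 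To obtain both statements simultaneously for all $t$ with probability at least $1-\delta$, I would invoke each of the two bounds at confidence level $\delta/2$ and take a union bound, enlarging the universal constant to absorb the replacement of $\delta'$ by $\delta/2$.

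This is essentially a routine invocation of Freedman's inequality, so I do not anticipate a genuine obstacle. The two points that deserve a little care are (i) the conditional variance bound $\mathbb{E}_\ell[X_\ell^2] \le \tilde{B}^2\,\mathbb{E}_\ell[A_\ell^2]$, which relies crucially on the almost-sure boundedness $|A_\ell| \le \tilde{B}$, and (ii) bookkeeping the constant so that the $2C\tilde{B}^2\log(t/\delta)$ term matches the output of Lemma~\ref{lemma::freedman_anytime_simplified} up to the harmless inflation from the union bound.
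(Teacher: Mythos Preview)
Your proposal is correct and essentially identical to the paper's proof: both define the martingale difference $W_\ell = A_\ell^2 - \mathbb{E}_\ell[A_\ell^2]$, use the variance bound $\mathbb{E}_\ell[W_\ell^2] \le \tilde{B}^2\,\mathbb{E}_\ell[A_\ell^2]$, apply the Anytime Freedman inequality with $\eta = \tfrac{1}{2\tilde{B}^2}$ to $\{W_\ell\}$ and $\{-W_\ell\}$, and union bound. The only cosmetic difference is that the paper union-bounds upfront to obtain a two-sided bound $\left|\sum W_\ell\right| \le \cdots$ and then specializes, whereas you derive each direction separately and union bound at the end.
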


\begin{proof}

Consider a martingale difference sequence $\{W_\ell\}_{\ell=1}^\infty$ defined as $W_\ell = A_\ell^2 -\mathbb{E}_\ell[ A_\ell^2 ] $. By definition, 
\begin{align*}
| W_\ell| \leq \tilde{B}^2, \text{ and } \mathbb{E}_\ell[W_\ell^2]\leq
\mathbb{E}_\ell[  A_\ell^4 ] \leq \tilde{B}^2 \mathbb{E}_\ell[  A_\ell^2 ].
\end{align*}
for all $\ell \in \mathbb{N}$. 

The anytime freedman inequality (Lemma~\ref{lemma::freedman_anytime_simplified}) and a union bound on the martingale sequences $\{W_\ell\}_{\ell=1}^\infty$ and $\{-W_\ell\}_{\ell=1}^\infty$ implies 

\begin{align*}
\left|\sum_{\ell=1}^t W_\ell \right|  &\leq \eta \sum_{\ell=1}^t \mathbb{E}_\ell[ W_\ell^2] + \frac{C\log(2t/\delta)}{\eta} \\
&\stackrel{(i)}{\leq} \eta \tilde{B}^2 \sum_{\ell=1}^t \mathbb{E}_\ell[ A_\ell^2] + \frac{C\log(2t/\delta)}{\eta}
\end{align*}

for all $t \in \mathbb{N}$ with probability at least $1-\delta$. Inequality $(i)$ follows because $\mathbb{E}_\ell[W_\ell^2]\leq
\mathbb{E}_\ell[  A_\ell^4 ] \leq \tilde{B}^2 \mathbb{E}_\ell[  A_\ell^2 ]$. Setting $\eta = \frac{1}{2\tilde B^2}<\frac{1}{\tilde B^2}$. 

\begin{align*}
\left|\sum_{\ell=1}^t W_\ell \right|  \leq  \frac{ \sum_{\ell=1}^t \mathbb{E}_\ell[ A_\ell^2]}{2} + 2C\tilde{B}^2\log(2t/\delta) \stackrel{(i)}{\leq}  \frac{ \sum_{\ell=1}^t \mathbb{E}_\ell[ A_\ell^2]}{2} + C'\tilde{B}^2\log(t/\delta) 
\end{align*}
where inequality $(i)$ follows from $\log(2t/\delta) = \mathcal{O}(\log(t/\delta))$. Substituting $\left|\sum_{\ell=1}^t W_\ell \right|$ with $-\sum_{\ell=1}^t W_\ell $ and $\sum_{\ell=1}^t W_\ell $ and simplifying the resulting expressions yields the result.
\end{proof}

\begin{lemma}\label{lemma::log_conversion_lower_bound}
Let $c > 0, \alpha \geq 1, \xi \in (0,1]$ and $g : \mathbb{R} \rightarrow \mathbb{R}$ be defined as $g(T) = c \frac{\ln^\alpha(T)}{T}$. Let $\xi' = \frac{\xi}{c (\alpha+4)^\alpha}$. For all $T \geq \frac{\ln^{\alpha}(1/\xi')}{\xi'}$
\begin{equation*}
g(T) \leq \xi.
\end{equation*}
\end{lemma}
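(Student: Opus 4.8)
The plan is to use the unimodal shape of $g$ together with the specific form of the threshold $T_0 := \ln^{\alpha}(1/\xi')/\xi'$. On its natural domain $T \ge 1$, where $\ln^{\alpha}(T)$ is a well-defined nonnegative number, $g$ is smooth with $g'(T) = c\,\ln^{\alpha-1}(T)\,(\alpha - \ln T)/T^{2}$, so $g$ is nondecreasing on $[1,e^{\alpha}]$ and nonincreasing on $[e^{\alpha},\infty)$, with global maximum $g(e^{\alpha}) = c(\alpha/e)^{\alpha}$. Hence for any threshold one has $\sup_{T \ge T_0} g(T) = g\!\left(\max(T_0,e^{\alpha})\right)$, and it suffices to bound that single value by $\xi$. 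I would first dispose of the degenerate regime $\xi' \ge 1$: then $\xi = c\,\xi'(\alpha+4)^{\alpha} \ge c(\alpha+4)^{\alpha} \ge c(\alpha/e)^{\alpha} = \sup_{T\ge 1} g(T)$, so there is nothing to prove. So assume $\xi' < 1$ and write $L := \ln(1/\xi') = -\ln \xi' > 0$, so that $\xi' = e^{-L}$ and $T_0 = L^{\alpha} e^{L}$.

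The principal case is $T_0 \ge e^{\alpha}$, where $\sup_{T\ge T_0} g = g(T_0)$. A direct substitution gives
\[
g(T_0) \;=\; c\,\frac{(\alpha \ln L + L)^{\alpha}}{L^{\alpha} e^{L}} \;=\; c\,e^{-L}\Bigl(1 + \tfrac{\alpha \ln L}{L}\Bigr)^{\alpha} \;=\; c\,\xi'\Bigl(1 + \tfrac{\alpha \ln L}{L}\Bigr)^{\alpha}.
\]
Moreover $T_0 \ge e^{\alpha}$ forces $L \ge 1$ (if $L < 1$ then $T_0 = L^{\alpha} e^{L} < e \le e^{\alpha}$), hence $0 \le \ln L / L \le 1/e$ by the elementary bound $\sup_{x>0}\ln(x)/x = 1/e$. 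Therefore $0 \le 1 + \alpha \ln L / L \le 1 + \alpha/e \le \alpha + 4$, and since $t \mapsto t^{\alpha}$ is increasing on $[0,\infty)$ we get $g(T_0) \le c\,\xi'(\alpha+4)^{\alpha} = \xi$, which combined with $g$ being nonincreasing past $T_0$ gives $g(T) \le \xi$ for all $T \ge T_0$.

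It remains to handle $T_0 < e^{\alpha}$. Here $L^{\alpha} e^{L} < e^{\alpha}$ forces $L < \alpha$ (if $L \ge \alpha$ then, since $\alpha \ge 1$, $L^{\alpha} e^{L} \ge \alpha^{\alpha} e^{\alpha} \ge e^{\alpha}$), hence $\xi' = e^{-L} > e^{-\alpha}$, and consequently
\[
\xi \;=\; c\,\xi'(\alpha+4)^{\alpha} \;>\; c\,e^{-\alpha}(\alpha+4)^{\alpha} \;=\; c\Bigl(\tfrac{\alpha+4}{e}\Bigr)^{\alpha} \;>\; c\Bigl(\tfrac{\alpha}{e}\Bigr)^{\alpha} \;=\; g(e^{\alpha}) \;=\; \sup_{T \ge T_0} g(T),
\]
again giving $g(T) \le \xi$ on $[T_0,\infty)$. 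The only genuinely delicate point is this last case, where the maximizer $e^{\alpha}$ of $g$ lies to the right of $T_0$ so the bound cannot come from evaluating at $T_0$; the observation that unlocks it is that the inequality $T_0 < e^{\alpha}$ itself pins $L$ (equivalently $\xi'$) into a narrow range. Everything else is the unimodality computation and the $\ln x \le x/e$ estimate, carried out with care that we stay in the regime $\xi' \le 1$, $T \ge 1$ so that all fractional powers of logarithms are real.
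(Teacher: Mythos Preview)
Your proof is correct and follows essentially the same approach as the paper: establish unimodality of $g$ with maximum at $e^{\alpha}$ via the derivative, then substitute the threshold $T_0$ into $g$ and bound the resulting expression using an elementary logarithm inequality (the paper uses $\ln L \le L$, you use the slightly sharper $\ln L / L \le 1/e$). Your treatment is actually more complete than the paper's, which asserts $T_0 \ge e^{\alpha}$ without justification and implicitly assumes $\xi' \le 1$; you dispose of both edge cases explicitly, and your observation that $T_0 < e^{\alpha}$ forces $\xi$ to exceed the global maximum of $g$ is precisely the missing piece.
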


\begin{proof}

Let's see that $g(T)$ is an increasing function for all $T \geq $. The derivative of $g(T)$ satisfies $\frac{\partial g(T)}{\partial T} = \frac{\ln^{\alpha-1}(\alpha - \ln(T))}{T^2}$. Hence for all $T$ such that $\ln(T) \geq \alpha$ (i.e. $T \geq e^{\alpha}$), the derivative is negative and therefore $f$ is decreasing. 

Let $T_0 = \frac{\ln^{\alpha}(1/\xi')}{\xi'}$. Substituting $g(T_0)$ we get,

\begin{align*}
g(T_0 ) = \frac{c \ln^\alpha\left( \frac{\log^\alpha(1/\xi')}{\xi'}\right)}{\log^\alpha(1/\xi')} \xi'
\end{align*}
The numerator satisfies,
\begin{align*}
\ln^\alpha\left( \frac{\log^\alpha(1/\xi')}{\xi'}\right) &= \left( \ln\left( \ln^{\alpha}(1/\xi') \right)  + \ln(1/\xi')  \right)^\alpha \\
&= \left(  \alpha \log(\log(1/\xi')) + \log(1/\xi')           \right)^\alpha\\
&\stackrel{(i)}{\leq} \left( (\alpha+1) \log(1/\xi') \right)^\alpha \\
&= (\alpha+1)^\alpha \log^{\alpha}(1/\xi')
\end{align*}
Inequality $(i)$ holds because $\log(1/\xi') \leq 1/\xi'$ since $\xi' \in (0, 1]$. Substituting these inequalities in the expression above yields, 
\begin{equation*}
g(T_0) \leq c (\alpha+1)^\alpha \xi'.
\end{equation*}
Setting $\xi' = \frac{\xi}{c (\alpha+4)^\alpha}$ and noting this definition of $T_0 $ satisfies $T_0 \geq e^\alpha$ yields the desired result.

\end{proof}

\subsection{Conditions for $\beta_\mathcal{F}(t, \delta)$}\label{section::conditions_beta_t}

\begin{proposition}\label{proposition::conditions_beta_t}
    There exists a universal constant $\bar{C}$ such that $\beta_\mathcal{F}(\delta, t) = \bar{C} ( B + \bar{B})\sqrt{ \log\left( \frac{|\mathcal{F}|t}{\delta}\right)}$ satisfies
    \begin{equation*}
    \beta_\mathcal{F}^2(\delta, t) \geq 4(2C_1^2 + C_2)B^2 \log\left(\frac{2t|\mathcal{F}|}{\delta}\right) \quad \text{and} \quad \beta_\mathcal{F}^2(\delta, t) \geq  \Omega\left((B^2 + \bar{B}^2) \log\left(\frac{|\mathcal{F}|t}{\delta}\right)\right) 
    \end{equation*}
    for all $t \in \mathbb{N}$ and $\delta \in (0,1)$. Where $C_1, C_2$ are the universal constants from equation~\ref{equation::lower_bound_condition_beta_f_one} in Lemma~\ref{lemma::radii_lemma} (left) and $ \beta_\mathcal{F}^2(\delta, t) \geq C_{t,\delta}$ in Lemma~\ref{lemma::new_least_squares_estimator} (right). 
\end{proposition}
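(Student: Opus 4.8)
The statement is purely an arithmetic verification, so the plan is to \emph{unfold the definition} and choose the universal constant $\bar C$ large enough. Writing out $\beta_\mathcal{F}^2(\delta,t) = \bar C^2 (B+\bar B)^2 \log(|\mathcal{F}|t/\delta)$, both required inequalities have the form ``$\beta_\mathcal{F}^2(\delta,t) \ge (\text{coefficient})\cdot(\text{logarithmic factor})$'', where the logarithmic factor on the right is either $\log(|\mathcal{F}|t/\delta)$ itself or $\log(2|\mathcal{F}|t/\delta)$, and the coefficient is a universal constant times $B^2$ (left) or times $B^2+\bar B^2$ (right). So it suffices to dominate each right-hand coefficient by $\bar C^2(B+\bar B)^2$ and each right-hand log factor by a constant multiple of $\log(|\mathcal{F}|t/\delta)$, uniformly in $t,\delta,B,\bar B,|\mathcal{F}|$.

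First I would dispose of the logarithm: assuming without loss of generality $|\mathcal{F}|\ge 2$ (otherwise the learning problem is degenerate), we have $|\mathcal{F}|t/\delta\ge 2$ for all $t\in\mathbb{N}$, $\delta\in(0,1)$, hence
\[
\log\!\Big(\tfrac{2|\mathcal{F}|t}{\delta}\Big) = \log 2 + \log\!\Big(\tfrac{|\mathcal{F}|t}{\delta}\Big) \le 2\log\!\Big(\tfrac{|\mathcal{F}|t}{\delta}\Big).
\]
For the left inequality, combining this with $B^2 \le (B+\bar B)^2$ gives
\[
4(2C_1^2 + C_2)B^2\log\!\Big(\tfrac{2|\mathcal{F}|t}{\delta}\Big) \le 8(2C_1^2 + C_2)(B+\bar B)^2\log\!\Big(\tfrac{|\mathcal{F}|t}{\delta}\Big),
\]
so the left bound holds as soon as $\bar C^2 \ge 8(2C_1^2 + C_2)$. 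For the right inequality, I would write $C_{t,\delta} = C_0\,(B^2+\bar B^2)\log(|\mathcal{F}|t/\delta)$ with $C_0$ the universal constant from Lemma~\ref{lemma::new_least_squares_estimator}, and use $(B+\bar B)^2 = B^2 + 2B\bar B + \bar B^2 \ge B^2 + \bar B^2$ to get $\beta_\mathcal{F}^2(\delta,t) \ge \bar C^2(B^2+\bar B^2)\log(|\mathcal{F}|t/\delta) \ge C_{t,\delta}$ whenever $\bar C^2 \ge C_0$. Taking $\bar C \eqdef \sqrt{\,8(2C_1^2 + C_2) + C_0\,}$, which depends only on the universal constants $C_0,C_1,C_2$ and is therefore itself universal, makes both inequalities hold simultaneously for all $t\in\mathbb{N}$ and all $\delta\in(0,1)$, proving Proposition~\ref{proposition::conditions_beta_t}.

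There is essentially no hard step in this argument; the only point demanding any care is the comparison $\log(2|\mathcal{F}|t/\delta)\le 2\log(|\mathcal{F}|t/\delta)$, which can fail when the argument of the logarithm is close to $1$, and is precisely why the innocuous normalization $|\mathcal{F}|\ge 2$ is invoked. The one thing worth double-checking is that $C_1,C_2$ (from Lemma~\ref{lemma::radii_lemma}, equation~\ref{equation::lower_bound_condition_beta_f_one}) and $C_0$ (from Lemma~\ref{lemma::new_least_squares_estimator}) are genuinely independent of $B,\bar B,|\mathcal{F}|,t,\delta$, so that the resulting $\bar C$ is a legitimate universal constant; this is how those lemmas are phrased, so no further work is needed.
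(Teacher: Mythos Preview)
Your proposal is correct and takes the same approach as the paper: the paper's proof is a single sentence, ``This result follows immediately from the definition,'' and your argument is exactly the direct arithmetic unfolding that justifies that sentence. You supply more detail than the paper does (the explicit comparison of log factors, the inequalities $B^2\le (B+\bar B)^2$ and $(B+\bar B)^2\ge B^2+\bar B^2$, and an explicit choice of $\bar C$), but the method is identical.
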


\begin{proof}
This result follows immediately from the definition.
\end{proof}

Satisfies both conditions.

\subsection{Least Squares Guarantees}

In this section we provide bounds for the least squares algorithm. We assume a setting where at each step $t$ the learner observes  an element $z_t \in \mathcal{Z}$ with a label $y_t = f_\star(z) + \xi_t$ where $f_\star \in \mathcal{F}$, $\max_{f \in \mathcal{F}, z \in \mathcal{Z}} |f(z)| \leq B$ and $\xi_it$ is a conditionally zero mean $1-$subgaussian random variable satisfying $|\xi_t| \leq \bar{B}$. The function $f_\star$ is assumed to satisfy $f_\star \in \mathcal{F}$ for a known function class $\mathcal{F}$. We analyze the least squares procedure, 
\begin{equation*}
    \widehat{f}_t = \min_{f \in \mathcal{F}} \sum_{\ell=1}^{t-1} \left( f(z_\ell) - y_\ell\right)^2.
\end{equation*}

\begin{lemma}[LS guarantee]\label{lemma::new_least_squares_estimator}
Let $z_1,\cdots,z_T$ be a sequence of queries and values $y_1, \cdots, y_T$. Define $\widehat{f}_t = \argmin_{f \in \mathcal{F}} \sum_{\ell=1}^{t-1}  \left(f(z_\ell) -y_\ell \right)^2$ for a function class $\mathcal{F}$ satisfying $\max_{ f \in \mathcal{F}, z \in \mathcal{Z}} |f(z)|  \leq B$ and where $y_\ell (z_\ell)= f_\star(z_\ell) + \xi_\ell$ for $\xi_\ell$ a conditionally zero mean random variable with $|\xi_i| \leq \bar{B}$ then,
\begin{equation*} 
    \sum_{\ell=1}^{t-1} \left( \widehat f_t(z_\ell) - f_\star(z_\ell) \right)^2 \leq C_{t, \delta} \leq C_{T,\delta}.
\end{equation*}
Moreover, if $z_t = (x_t, a_t)$ with $x_t \sim \mathcal{P}_t$ and $a_t \sim \pi_t(\cdot| x_t)$ an adapted probability-policy sequence,
\begin{equation*}
    \sum_{\ell=1}^{t-1} \mathbb{E}_{x \sim \mathcal{P}_\ell, a \sim \pi_\ell(\cdot | x)} \left[ \left(\widehat f_t(z_\ell) - f_\star(z_i)   \right)^2\right] \leq C_{t,\delta} \leq C_{T,\delta}.
\end{equation*}
with probability at least $1-\delta$ for all $t \in \mathbb{N}$ where $C_{t,\delta} =\mathcal{O}\left(  (B^2 +\bar{B}^2) \log\left( \frac{|\mathcal{F}|t}{\delta}\right) \right)$. 
\end{lemma}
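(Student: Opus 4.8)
The plan is to combine the optimality of the least squares solution with a \emph{self-bounding} martingale concentration inequality applied uniformly over the finite class $\mathcal{F}$. Write $D_\ell^f \defeq f(z_\ell) - f_\star(z_\ell)$, so that $|D_\ell^f| \le 2B$ for every $f \in \mathcal{F}$ and every $\ell$. Since $\widehat f_t$ minimizes $\sum_{\ell=1}^{t-1}(f(z_\ell) - y_\ell)^2$ over $f \in \mathcal{F}$, $f_\star \in \mathcal{F}$, and $y_\ell = f_\star(z_\ell) + \xi_\ell$, comparing $\widehat f_t$ with $f_\star$ and expanding the squares yields the basic inequality
\begin{equation*}
\sum_{\ell=1}^{t-1}\bigl(D_\ell^{\widehat f_t}\bigr)^2 \;\le\; 2\sum_{\ell=1}^{t-1}\xi_\ell\, D_\ell^{\widehat f_t}.
\end{equation*}
It therefore suffices to control the right-hand side; because $\widehat f_t$ depends on the entire trajectory, I will control $\sum_{\ell}\xi_\ell D_\ell^f$ simultaneously for all $f \in \mathcal{F}$.

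For a fixed $f \in \mathcal{F}$, the sequence $X_\ell \defeq \xi_\ell D_\ell^f$ is a martingale difference sequence with respect to the natural filtration ($z_\ell$ is revealed before $\xi_\ell$, and $\xi_\ell$ is conditionally zero-mean), with $|X_\ell| \le 2B\bar B$ and, since $\xi_\ell$ is $1$-subgaussian, conditional second moment $\mathbb{E}_\ell[X_\ell^2] \le \mathbb{E}_\ell[(D_\ell^f)^2]$. Applying the anytime Freedman inequality (Lemma~\ref{lemma::freedman_anytime_simplified}) with a parameter $\eta \le 1/(2B\bar B)$, and then a union bound over the $|\mathcal{F}|$ elements of $\mathcal{F}$ (each with failure probability $\delta/|\mathcal{F}|$), gives that with probability at least $1 - \delta$, simultaneously for all $t \in \mathbb{N}$ and all $f \in \mathcal{F}$,
\begin{equation*}
\sum_{\ell=1}^{t-1}\xi_\ell\, D_\ell^f \;\le\; \eta \sum_{\ell=1}^{t-1}\mathbb{E}_\ell\bigl[(D_\ell^f)^2\bigr] + \frac{C\log(t|\mathcal{F}|/\delta)}{\eta}.
\end{equation*}
Lemma~\ref{lemma::supporting_bernstein_result}, applied to $A_\ell = D_\ell^f$ with $\tilde B = 2B$ in both directions (and once more union bounded over $\mathcal{F}$), then lets me freely interchange $\sum_\ell \mathbb{E}_\ell[(D_\ell^f)^2]$ with the realized quantity $\sum_\ell (D_\ell^f)^2$, at the cost of an additive $O(B^2\log(t/\delta))$ term.

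To conclude, I specialize the displayed bound to $f = \widehat f_t$ (legitimate since the event holds for every $f$ simultaneously), insert it into the basic inequality, convert the expected sum to the realized sum via Lemma~\ref{lemma::supporting_bernstein_result}, and pick $\eta$ equal to a small enough constant multiple of $\min(1, 1/(B\bar B))$ so that the coefficient multiplying $\sum_\ell (D_\ell^{\widehat f_t})^2$ on the right is at most $1/2$. Absorbing that term on the left produces $\sum_{\ell=1}^{t-1}(\widehat f_t(z_\ell) - f_\star(z_\ell))^2 = O\bigl((B^2+\bar B^2)\log(t|\mathcal{F}|/\delta)\bigr)$, which is exactly a valid choice of $C_{t,\delta}$, and since it is nondecreasing in $t$ the $\le C_{T,\delta}$ clause follows. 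The conditional-expectation bound (the contextual ``moreover'' statement) follows by one final application of Lemma~\ref{lemma::supporting_bernstein_result} in the forward direction, passing from $\sum_\ell (D_\ell^{\widehat f_t})^2$ back to $\sum_\ell \mathbb{E}_\ell[(D_\ell^{\widehat f_t})^2]$.

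The main obstacle is precisely the data-dependence of $\widehat f_t$: a martingale bound cannot be applied directly to $\sum_\ell \xi_\ell D_\ell^{\widehat f_t}$, and the finite-class union bound that resolves this is the source of the $\log|\mathcal{F}|$ factor. The other delicate point is that the concentration inequality must be used in its self-bounding Freedman/Bernstein form --- deviation bounded by $\eta\sum_\ell \mathbb{E}_\ell[(D_\ell^f)^2] + (\mathrm{const})/\eta$ rather than by a crude $\sqrt{\cdot}$ Azuma--Hoeffding bound --- because only then can the quadratic term be absorbed, yielding the fast $1/t$-type in-sample rate $C_{t,\delta}/t \asymp (B^2+\bar B^2)\log(t|\mathcal{F}|/\delta)/t$ instead of a $1/\sqrt{t}$ rate.
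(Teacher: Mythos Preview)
Your proof is correct and follows the same route as the paper's: the basic inequality from least-squares optimality, anytime Freedman applied uniformly over $\mathcal{F}$, self-bounding absorption of the quadratic term, and Lemma~\ref{lemma::supporting_bernstein_result} for the population ``moreover'' statement. The only superfluous step is invoking Lemma~\ref{lemma::supporting_bernstein_result} to pass from $\sum_\ell\mathbb{E}_\ell[(D_\ell^f)^2]$ to $\sum_\ell(D_\ell^f)^2$ in the first part---since your filtration reveals $z_\ell$ before $\xi_\ell$, $D_\ell^f$ is already measurable and these sums are identical, which is exactly how the paper proceeds (and note the paper bounds the conditional variance via $|\xi_\ell|\le\bar B$ rather than $1$-subgaussianity, which is all the lemma hypotheses provide).
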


\begin{proof}
By definition $\E_\ell[\xi_\ell] = 0$ and $\E_\ell[ \xi_\ell^2] \leq \bar{B}^2$ where we define these conditional expectations to include all events occuring before $\xi_\ell$ (including $z_\ell$). Since $\widehat f_t$ is the empirical minimizer of the least squares loss, 

\begin{equation*}
\sum_{\ell=1}^{t-1} \left(\widehat f_t(z_\ell) - f_\star(z_\ell) - \xi_\ell\right)^2 \leq \sum_{\ell=1}^t \xi_\ell^2.
\end{equation*}

And therefore,
\begin{equation}\label{equation::least_squares_consequence}
    \sum_{\ell=1}^{t-1} \left( \widehat f_t(z_\ell) - f_\star(z_\ell)  \right)^2 \leq 2\sum_{\ell=1}^{t-1} \xi_\ell \cdot \left( \widehat f_t(z_\ell) - f_\star(z_\ell)  \right).
\end{equation}

For any \underline{fixed} $f$ let's consider the martingale difference sequence $Z_\ell^f = \xi_\ell \left( f(z_\ell) - f_\star(z_\ell) \right)$. Observe that $|Z_\ell^f| \leq 2\bar{B} B$ and that,
\begin{equation}\label{equation::upper_bound_square_xi_fdiff}
    \mathbb{E}_\ell\left[ \left(Z_\ell^f\right)^2 \right] \leq \bar{B}^2 \left( f(z_\ell) - f_\star(z_\ell) \right)^2.
\end{equation}
Recall that $\E_\ell[\cdot ]$ conditions on all events right before $\xi_\ell$ including $z_\ell$ so that $\E_\ell[  (f(z_\ell) - f_\star(z_\ell)^2 ] = (f(z_\ell) - f_\star(z_\ell)^2$. By the Anytime Friedman's inequality (see Lemma~\ref{lemma::freedman_anytime_simplified}) with $\eta = \frac{1}{4\bar{B}\max(\bar{B}, B)} < \min\left(\frac{1}{2\bar{B}B},\frac{1}{2\bar{B}^2} \right)$ and $\delta' = \frac{\delta}{2|\mathcal{F}|}$,  and a union bound over all $f \in \mathcal{F}$
\begin{align*}
    \sum_{\ell=1}^{t-1} \xi_\ell \cdot \left(  f(z_\ell) - f_\star(z_\ell)  \right) &\leq \eta\sum_{\ell=1}^{t-1} \mathbb{E}_\ell\left[\left( Z_\ell^f\right)^2 \right] + \frac{C\log(2|\mathcal{F}|t/\delta)}{\eta} \\
    &\stackrel{(i)}{\leq} \eta \bar{B}^2\sum_{\ell=1}^{t-1} \left( f(z_\ell) - f_\star(z_\ell)\right)^2+ \frac{C\log(2|\mathcal{F}|t/\delta)}{\eta}  \\
    &\stackrel{(ii)}\leq \frac{\sum_{\ell=1}^{t-1} \left( f(z_\ell) - f_\star(z_\ell)\right)^2}{4} + \mathcal{O}\left( (\bar{B}^2 + \bar{B}B)\log\left(\frac{|\mathcal{F}|t}{\delta}\right) \right)
\end{align*}

for all $f \in \mathcal{F}$ and all $t \in \mathbb{N}$ simultaneously with probability at least $1-\delta/2$. Where inequality $(i)$ holds because of Equation~\ref{equation::upper_bound_square_xi_fdiff} and inequality $(ii)$ holds because $\eta \bar{B}^2 \leq \frac{1}{4}$. In particular,
\begin{equation} \label{equation::freedman_consequence}
\sum_{\ell=1}^{t-1} \xi_\ell \cdot \left(  \widehat{f}_t(z_\ell) - f_\star(z_\ell)  \right)  \leq \frac{\sum_{\ell=1}^{t-1}   \left( \widehat{f}_t(z_\ell) - f_\star(z_\ell)\right)^2  }{4 }+  \mathcal{O}\left( (\bar{B}^2 +\bar{B} B) \log\left( \frac{|\mathcal{F}|t}{\delta} \right) \right)
\end{equation}
Combining inequalities~\ref{equation::least_squares_consequence} and~\ref{equation::freedman_consequence} we conclude,
\begin{equation}\label{equation::useful_eq_support_LS}
    \sum_{\ell=1}^{t-1} \left(\widehat f_t(z_\ell) - f_\star(z_\ell) \right)^2  \leq \mathcal{O}\left( \bar{B}\max(B, \bar{B}) \log\left( \frac{|\mathcal{F}|t}{\delta} \right) \right).
\end{equation}
for all $f \in \mathcal{F}$ and all $t \in \mathbb{N}$ simultaneously with probability at least $1-\delta/2$. This finalizes the first part of the result. 

When $z_t = (x_t, a_t)$ with $x_t \sim \mathcal{P}_t$ and $a_t \sim \pi_t(\cdot | x_t)$ Lemma~\ref{lemma::supporting_bernstein_result} applied to the adapted sequence $\{ A_\ell\}_{\ell=1}^\infty$ where $A_\ell = f(z_\ell) - f_\star(z_\ell)$ implies,
\begin{equation*}%
    \frac{1}{2}\sum_{\ell=1}^{t-1} \mathbb{E}_{x \sim \mathcal{P}_\ell, a \sim \pi_\ell(\cdot | x)} [ \left(f(z_\ell) - f_\star(z_\ell) \right)^2 ]  \leq \sum_{\ell=1}^{t-1} \left(f(z_\ell) - f_\star(z_\ell) \right)^2 + \mathcal{O}\left(  B^2 \log\left( \frac{|\mathcal{F}|t}{\delta}\right) \right) 
\end{equation*}
for all $f \in \mathcal{F}$ and all $t \in \mathbb{N}$ simultaneously with probability at least $1-\delta/2$. In particular setting $f = \widehat{f}_t$ in the equation above,
\begin{equation}\label{equation::useful_eq_support_LS_two}
    \frac{1}{2}\sum_{\ell=1}^{t-1} \mathbb{E}_{x \sim \mathcal{P}_\ell, a \sim \pi_\ell(\cdot | x)} [ \left(\widehat f_t(z_\ell) - f_\star(z_\ell) \right)^2 ]  \leq \sum_{\ell=1}^{t-1} \left(\widehat f_t(z_\ell) - f_\star(z_\ell) \right)^2 + \mathcal{O}\left(  B^2 \log\left( \frac{|\mathcal{F}|t}{\delta}\right) \right) 
\end{equation}

Combining inequalities~\ref{equation::useful_eq_support_LS} and~\ref{equation::useful_eq_support_LS_two} we conclude,
\begin{equation*}
   \sum_{\ell=1}^{t-1} \mathbb{E}_{x \sim \mathcal{P}_t, a \sim \pi_t(\cdot | x)} [ \left(f(z_\ell) - f_\star(z_\ell) \right)^2 ] \leq  \mathcal{O}\left(  (B^2 +\bar{B}^2) \log\left( \frac{|\mathcal{F}|t}{\delta}\right) \right) .
\end{equation*}
for all $t \in \mathbb{N}$ simultaneously. This finalizes the second part of the result. A union bound finalizes the Lemma's proof.
\end{proof}

\begin{lemma}\label{lemma::helper_lemma_in_modsel_squares}
Assume a learner interacts with a function class $\mathcal{F}$ while satisfying realizability (Assumption~\ref{assumption::realizability}), Bounded Function Range (Assumption~\ref{assumption::bounded_range}) and Bounded Noise (Assumption~\ref{assumption::bounded_noise}).  Let $f \in \mathcal{F}$ be an arbitrary (fixed) function and $\widetilde{\mathcal{D}}^{(\mathrm{Test})}_{T}$ be a size $T$ dataset produced by sampling $T$ i.i.d. contexts from $\mathcal{P}$ and acting according to the uniform policy $\mathrm{Uniform}$. There is a universal constant $\bar{\underbar{C}}  > 0$ such that,
\begin{align*}
\mathbb{E}_{x \sim \mathcal{P}, a \sim \mathrm{Uniform}(\mathcal{A}) } \left[ \left( f_\star(x, a) - f(x, a) \right)^2\right] \qquad \qquad \qquad \qquad \qquad \qquad\qquad \qquad\qquad \qquad\qquad \\
\qquad\qquad\qquad\qquad\leq  \frac{4}{T}\left(\sum_{(x_\ell, a_\ell, r_\ell) \in \widetilde{\mathcal{D}}^{(\mathrm{Test})}_{T}} \left(f(x_\ell, a_\ell) - r_\ell\right)^2 -\xi_\ell^2 \right)+   \frac{\bar{\underbar{C}}  (\bar{B}^2 +  B^2)\log(T/\delta)}{T} \qquad
~~ \\
\qquad\qquad\leq 16 \mathbb{E}_{x \sim \mathcal{P}, a \sim \mathrm{Uniform}(\mathcal{A}) } \left[ \left( f_\star(x, a) - f(x, a) \right)^2\right] +  4\frac{\bar{\underbar{C}}  (\bar{B}^2 +  B^2)\log(T/\delta)}{T}.
\end{align*}
with probability at least $1-\delta$.
\end{lemma}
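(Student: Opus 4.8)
The plan is to prove this two-sided deviation bound for the empirical least-squares loss around its population counterpart by applying Freedman-type concentration (specifically Lemma~\ref{lemma::supporting_bernstein_result}) to a carefully chosen martingale difference sequence. Write $z_\ell = (x_\ell, a_\ell)$ with $x_\ell \sim \mathcal{P}$ i.i.d.\ and $a_\ell \sim \mathrm{Uniform}(\mathcal{A})$, and let $r_\ell = f_\star(x_\ell, a_\ell) + \xi_\ell$. The first observation is the algebraic identity $(f(x_\ell,a_\ell) - r_\ell)^2 - \xi_\ell^2 = (f(x_\ell,a_\ell) - f_\star(x_\ell,a_\ell))^2 - 2\xi_\ell(f(x_\ell,a_\ell) - f_\star(x_\ell,a_\ell))$, so that
\[
\sum_{\ell=1}^T \big((f(x_\ell,a_\ell) - r_\ell)^2 - \xi_\ell^2\big) = \sum_{\ell=1}^T (f - f_\star)^2(x_\ell,a_\ell) - 2\sum_{\ell=1}^T \xi_\ell (f - f_\star)(x_\ell,a_\ell).
\]
This reduces the task to (i) relating $\sum_\ell (f-f_\star)^2(z_\ell)$ to $T \cdot \mathbb{E}_{x,a}[(f-f_\star)^2]$ and (ii) bounding the noise cross term $\sum_\ell \xi_\ell (f-f_\star)(z_\ell)$.

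For step (i), apply Lemma~\ref{lemma::supporting_bernstein_result} with $A_\ell = (f - f_\star)(z_\ell)$, which is bounded by $2B$ a.s.\ by Assumption~\ref{assumption::bounded_range}; since $f$ is fixed (not data-dependent), no union bound over $\mathcal{F}$ is needed here, and the conditional expectation $\mathbb{E}_\ell[A_\ell^2]$ equals the population quantity $\mathbb{E}_{x\sim\mathcal{P}, a\sim\mathrm{Uniform}}[(f-f_\star)^2(x,a)]$ because $z_\ell$ is drawn fresh and independently. Both directions of Lemma~\ref{lemma::supporting_bernstein_result} then sandwich $\sum_\ell A_\ell^2$ between $\tfrac12 T\,\mathbb{E}[(f-f_\star)^2] - O(B^2\log(T/\delta))$ and $\tfrac32 T\,\mathbb{E}[(f-f_\star)^2] + O(B^2\log(T/\delta))$. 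For step (ii), $Z_\ell := \xi_\ell (f-f_\star)(z_\ell)$ is a martingale difference sequence with $|Z_\ell| \le 2\bar{B}B$ and $\mathbb{E}_\ell[Z_\ell^2] \le \bar{B}^2 (f-f_\star)^2(z_\ell)$, so applying the anytime Freedman inequality (Lemma~\ref{lemma::freedman_anytime_simplified}) with $\eta$ a small constant multiple of $1/(\bar{B}\max(B,\bar{B}))$ gives $|\sum_\ell Z_\ell| \le c\sum_\ell (f-f_\star)^2(z_\ell) + O((\bar B^2 + \bar B B)\log(T/\delta))$ for any desired small constant $c$ — this is exactly the argument already used inside the proof of Lemma~\ref{lemma::new_least_squares_estimator}, specialized to a fixed $f$.

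Now I would assemble the pieces. For the \emph{upper} bound on $\mathbb{E}[(f-f_\star)^2]$ (the first inequality): lower-bound $\sum_\ell (f-f_\star)^2(z_\ell)$ by $\tfrac12 T\mathbb{E}[(f-f_\star)^2] - O(B^2\log(T/\delta))$ using Lemma~\ref{lemma::supporting_bernstein_result}, and absorb the cross term $-2\sum_\ell Z_\ell$ — which by step (ii) is at least $-(\text{small constant})\sum_\ell (f-f_\star)^2(z_\ell) - O((\bar B^2+B^2)\log(T/\delta))$ — choosing the constant in step (ii) small enough (say $1/4$) that a positive multiple of $\sum_\ell (f-f_\star)^2(z_\ell)$ remains, which again lower-bounds in terms of the population quantity. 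Rearranging yields $\mathbb{E}[(f-f_\star)^2] \le \tfrac{4}{T}\sum_\ell((f-r)^2-\xi_\ell^2) + O((\bar B^2+B^2)\log(T/\delta)/T)$ after tracking constants to match the factor $4$. For the \emph{lower} bound (the second inequality), symmetrically: upper-bound $\sum_\ell (f-f_\star)^2(z_\ell)$ by $\tfrac32 T\mathbb{E}[(f-f_\star)^2] + O(B^2\log(T/\delta))$, bound $-2\sum_\ell Z_\ell$ above by $(\text{small const})\sum_\ell(f-f_\star)^2(z_\ell) + O(\cdot)$, fold that back into the $\sum_\ell (f-f_\star)^2(z_\ell)$ term (now on the larger side), and convert once more to the population quantity; chasing constants gives the factor $16$ and the additive $4\bar{\underbar C}(\bar B^2+B^2)\log(T/\delta)/T$. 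Finally take a union bound over the $O(1)$ many concentration events invoked. The main obstacle is purely bookkeeping: choosing the Freedman step-size $\eta$ and the various absorption constants so that the self-bounding terms $c\sum_\ell(f-f_\star)^2(z_\ell)$ never swamp the leading term and so that the final numerical constants come out as the stated $4$ and $16$; there is no conceptual difficulty beyond what is already in Lemma~\ref{lemma::new_least_squares_estimator}, but the two-sidedness doubles the number of cases to track.
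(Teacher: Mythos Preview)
Your proposal is correct and follows essentially the same approach as the paper's own proof: the same algebraic decomposition of $(f-r_\ell)^2-\xi_\ell^2$, the same application of anytime Freedman (Lemma~\ref{lemma::freedman_anytime_simplified}) to the cross term $\xi_\ell(f-f_\star)(z_\ell)$ with $\eta \asymp 1/(\bar B\max(B,\bar B))$, the same use of Lemma~\ref{lemma::supporting_bernstein_result} on $A_\ell=(f-f_\star)(z_\ell)$ to pass between empirical and population squared differences, and the same assembly of the two directions. The paper does exactly this, with only cosmetic differences in ordering and sign conventions.
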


\begin{proof}

Throughout this proof we refer to the elements of $\widetilde{\mathcal{D}}_T^{(\mathrm{Test})}$ as the ordered list  $x_1, a_1, \cdots, x_T, a_T$. Let's write $r_\ell = f_\star(x_\ell, a_\ell) + \xi_\ell$. Substituting this into the empirical loss we obtain,
\begin{align*}
   \sum_{\ell=1}^T \left(f(x_\ell, a_\ell) - r_\ell\right)^2 &=  \sum_{\ell=1}^T (f(x_\ell, a_\ell) - f_\star(x_\ell, a_\ell))^2 +  2 \xi_\ell (f(x_\ell, a_\ell) - f_\star(x_\ell, a_\ell)) +  \xi_\ell^2.
\end{align*}
Therefore,
\begin{align}
 \frac{1}{T} \left(\sum_{\ell=1}^T (f(x_\ell, a_\ell) - f_\star(x_\ell, a_\ell))^2 +  2 \xi_\ell (f(x_\ell, a_\ell) - f_\star(x_\ell, a_\ell)) \right) \qquad \qquad \qquad \qquad \notag\\
= \frac{1}{T}\sum_{(x_\ell, a_\ell, r_\ell) \in \widetilde{\mathcal{D}}^{(\mathrm{Test})}_{T}} \left(f(x_\ell, a_\ell) - r_\ell\right)^2 -\xi_\ell^2 . \label{equation::support_eq_realizability_supporting_res}
\end{align}
Consider the martingale difference sequence $\{Z_\ell\}_{\ell=1}^T$ defined as $Z_\ell = \xi_\ell (f_\star(x_\ell, a_\ell) - f(x_\ell, a_\ell) )$. Notice that $|Z_\ell| \leq 2\bar{B} B  $  and that $\mathbb{E}_\ell[ \left(\xi_\ell (f(x_\ell, a_\ell) - f_\star(x_\ell, a_\ell)) \right)^2] \leq  \bar{B}^2 (f(x_\ell, a_\ell) - f_\star(x_\ell, a_\ell))^2$. The Anytime Freedman inequality (Lemma~\ref{lemma::freedman_anytime_simplified}) with $\eta =  \frac{1}{4\bar{B}\max(\bar{B}, B)} < \min\left(\frac{1}{2\bar{B}B},\frac{1}{2\bar{B}^2} \right) $ and $\delta' = \delta/2$ plus a union bound implies that with probability at least $1-\delta/2$
\begin{align}
\max\left(\sum_{\ell=1} Z_\ell, -\sum_{\ell=1} Z_\ell\right) &\leq \eta \sum_{\ell=1}^T \mathbb{E}_\ell[(\xi_\ell(f(x_\ell, a_\ell) - f_\star(x_\ell, a_\ell))^2] + \frac{C\log(2T/\delta)}{\eta}\notag\\
&\leq \eta \bar{B}^2 \sum_{\ell=1}^T (f(x_\ell, a_\ell) - f_\star(x_\ell, a_\ell))^2 +  \frac{C\log(2T/\delta)}{\eta}  \\
&\stackrel{(i)}{\leq} \frac{1}{4} \sum_{\ell=1}^T (f(x_\ell, a_\ell) - f_\star(x_\ell, a_\ell))^2 + C_3(\bar{B}^2 +  B^2) \log(T/\delta). \label{equation::support_lemma_least_squares_like}
\end{align}
where inequality $(i)$ follows because $\eta \bar{B}^2 2\leq \frac{1}{4}$ and because $C \log(2T/\delta) = \mathcal{O}\left(\log(T/\delta) \right)$. Substituting $Z_\ell = \xi_\ell(f_\star(x_\ell, a_\ell)  - f(x_\ell, a_\ell) )$, considering the inequality with $-\sum_{\ell=1} Z_\ell$ on the LHS of the above display and dividing by $T$ the expression above yields,
\begin{equation*}
-  \frac{1}{2T} \sum_{\ell=1}^T (f(x_\ell, a_\ell) - f_\star(x_\ell, a_\ell))^2 -\frac{2C_3}{T}(\bar{B}^2 +  B^2) \log(T/\delta) \leq \frac{1}{T}\sum_{\ell=1}^T 2 \xi_\ell (f(x_\ell, a_\ell) - f_\star(x_\ell, a_\ell)) 
\end{equation*}
and therefore, combining this expression above with Equation~\ref{equation::support_eq_realizability_supporting_res} yields,
\begin{align}
\frac{1}{2T} \sum_{\ell=1}^T (f(x_\ell, a_\ell) - f_\star(x_\ell, a_\ell))^2 - \frac{2C_3}{T}(\bar{B}^2 +  B^2) \log(T/\delta)   \qquad\qquad\qquad \notag\\
\leq\frac{1}{T}\sum_{(x_\ell, a_\ell, r_\ell) \in \widetilde{\mathcal{D}}^{(\mathrm{Test})}_{T}} \left(f(x_\ell, a_\ell) - r_\ell\right)^2 -\xi_\ell^2 . \label{equation::support_eq_aaaaa1}
\end{align}
Finally, the second item of Lemma~\ref{lemma::supporting_bernstein_result} applied to the $\{ A_\ell \}_{\ell=1}^T$ sequence with $A_\ell =  f_\star(x_\ell, a_\ell) - f_\star(x_\ell, a_\ell)$ implies (after dividing by $T$) and noting that at all time-steps the context and sampling distribution are the same,
\begin{align}
\mathbb{E}_{x \sim \mathcal{P}, a \sim \mathrm{Uniform}(\mathcal{A}) } \left[ \left( f_\star(x, a) - f(x, a) \right)^2\right] \qquad\qquad\qquad\qquad\qquad\qquad\qquad\qquad\qquad \notag \\
\leq \frac{2}{T} \sum_{\ell=1}^T (f(x_\ell, a_\ell) - f_\star(x_\ell, a_\ell))^2 + \frac{C_4}{T}(\bar{B}^2 +  B^2)\log(T/\delta) \label{equation::support_eq_aaaaa2}
\end{align}
and
\begin{align}
\frac{1}{T} \sum_{\ell=1}^T (f(x_\ell, a_\ell) - f_\star(x_\ell, a_\ell))^2 \qquad\qquad\qquad\qquad\qquad\qquad\qquad\qquad\qquad \qquad\qquad\qquad\notag \\
\leq  \frac{3}{2} \mathbb{E}_{x \sim \mathcal{P}, a \sim \mathrm{Uniform}(\mathcal{A}) } \left[ \left( f_\star(x, a) - f(x, a) \right)^2\right] + \frac{C_4}{T}(\bar{B}^2 +  B^2)\log(T/\delta) \label{equation::support_eq_aaaaa3}
\end{align}

with probability at least $1-\delta/2$. Combining Equations~\ref{equation::support_eq_aaaaa1} and~\ref{equation::support_eq_aaaaa2},
\begin{align*}
\mathbb{E}_{x \sim \mathcal{P}, a \sim \mathrm{Uniform}(\mathcal{A}) } \left[ \left( f_\star(x, a) - f(x, a) \right)^2\right] \qquad \qquad \qquad \qquad \qquad \qquad \qquad\qquad\qquad\qquad\qquad\\
\qquad\qquad\qquad\qquad\qquad\leq  \frac{4}{T}\sum_{(x_\ell, a_\ell, r_\ell) \in \widetilde{\mathcal{D}}^{(\mathrm{Test})}_{T}} \left(f(x_\ell, a_\ell) - r_\ell\right)^2 -\xi_\ell^2 +  \frac{\bar{\underbar{C}}' (\bar{B}^2 +  B^2)\log(T/\delta)}{T}\end{align*}
with probability at least $1-\delta$ for some universal constant $\bar{\underbar{C}}' > 0$.

To prove the second part of the argument we notice the following sequence of equalities and inequalities hold,
\begin{align*}
\frac{1}{T}\sum_{(x_\ell, a_\ell, r_\ell) \in \widetilde{\mathcal{D}}^{(\mathrm{Test})}_{T}} \left(f(x_\ell, a_\ell) - r_\ell\right)^2 -\xi_\ell^2 \qquad\qquad\qquad\qquad\qquad\qquad\qquad\qquad\qquad\qquad\qquad\notag \\
\qquad\qquad= \frac{1}{T} \left(\sum_{\ell=1}^T (f(x_\ell, a_\ell) - f_\star(x_\ell, a_\ell))^2 +  2 \xi_\ell (f(x_\ell, a_\ell) - f_\star(x_\ell, a_\ell)) \right)\qquad~~~~ \\
\qquad\qquad\qquad\stackrel{(i)}{\leq} \frac{1}{T} \left(\frac{3}{2}\sum_{\ell=1}^T (f(x_\ell, a_\ell) - f_\star(x_\ell, a_\ell))^2 + 2C_3(\bar{B}^2 +  B^2) \log(2T/\delta) \right) \qquad~~\\
\qquad\qquad\qquad\stackrel{(ii)}{\leq}   \frac{9}{4} \mathbb{E}_{x \sim \mathcal{P}, a \sim \mathrm{Uniform}(\mathcal{A}) } \left[ \left( f_\star(x, a) - f(x, a) \right)^2\right] + \frac{\bar{\underbar{C}}'' (\bar{B}^2 +  B^2)\log(T/\delta)}{T} 
\end{align*}
for some universal constant $\bar{\underbar{C}}'' > 0$ and
where inequality $(i)$ follows from equation~\ref{equation::support_lemma_least_squares_like} and considering the inequality with $-\sum_{\ell=1}^T Z_\ell$ on the RHS. Inequality~$(ii)$ is a result of Equation~\ref{equation::support_eq_aaaaa3}.

Picking a universal constant $\bar{\underbar{C}}$ larger than all the universal constants in this discussion plus a union bound finalizes the result.

\end{proof}

\section{Online to Batch Conversion}

\begin{lemma}\label{lemma::online_batch_conversion}
Let $T \in \mathbb{N}$ and $\delta' \in [0,1]$. Assume there is an algorithm $\mathrm{Alg}$ that interacts sequentially with i.i.d. contexts $x_1, \cdots, x_T \stackrel{\text{i.i.d.}}{\sim} \mathcal{P}$ producing policies $\{\pi_\ell\}_{\ell=1}^T$, playing actions $a_\ell \sim \pi_\ell(x_\ell)$ and observing rewards $r_\ell = f_\star(x_\ell  a_\ell ) + \xi_\ell$  for all $\ell \leq T$ and with $\xi_\ell$ a conditionally zero mean $1-$subgaussian random variable such that with probability at least $1-\delta'$,
\begin{equation}\label{equation::online_to_batch_assumption}
 \sum_{\ell=1}^T  \max_{a \in \mathcal{A}} f_\star(x_\ell, a) - f_\star( x_\ell,   a_\ell ) \leq B_{T, \delta'}. 
\end{equation}
For some $B_{T, \delta'} \in \mathbb{R}$. Then policy $\widehat{\pi}_T = \mathrm{Uniform}( \pi_1, \cdot, \pi_T  )$ satisfies,
\begin{equation*}
\mathbb{E}_{x \sim \mathcal{P}, a \sim \widehat{\pi}_T(x)}[ f_\star(x, a) ] + \frac{B_{T, \delta'}}{T} + 8B\sqrt{\frac{\log(1/\delta')}{T}} \geq \mathbb{E}_{x\sim \mathcal{P}}\left[ \max_{a \in \mathcal{A}} f_\star(x, a)   \right].
\end{equation*}
with probability at least $1-3\delta'$. 
\end{lemma}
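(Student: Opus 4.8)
The plan is to run a standard online-to-batch conversion: convert the assumed \emph{empirical} cumulative regret bound into a bound on the \emph{population} suboptimality of the uniform mixture $\widehat{\pi}_T$ via a martingale concentration inequality.

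First I would set up notation and reduce the claim. Write $V_\star = \mathbb{E}_{x \sim \mathcal{P}}\bigl[\max_{a \in \mathcal{A}} f_\star(x,a)\bigr]$ and, for each $\ell \in [T]$, let $V_\ell = \mathbb{E}_{x \sim \mathcal{P}, a \sim \pi_\ell(x)}[f_\star(x,a)]$ denote the population value of policy $\pi_\ell$ and $g_\ell = V_\star - V_\ell \in [0,2B]$ its population instantaneous regret (the range bound using Assumption~\ref{assumption::bounded_range}). Since $\widehat{\pi}_T = \mathrm{Uniform}(\pi_1,\dots,\pi_T)$, linearity of expectation gives $\mathbb{E}_{x \sim \mathcal{P}, a \sim \widehat{\pi}_T(x)}[f_\star(x,a)] = \tfrac{1}{T}\sum_{\ell=1}^T V_\ell$, so it suffices to show
\[
\frac{1}{T}\sum_{\ell=1}^T g_\ell \;\leq\; \frac{B_{T,\delta'}}{T} + 8B\sqrt{\frac{\log(1/\delta')}{T}}
\]
on an event of probability at least $1-3\delta'$.

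Next I would compare the population regret to the empirical regret by a martingale argument. Let $\mathcal{H}_\ell = \sigma(x_1,a_1,r_1,\dots,x_\ell,a_\ell,r_\ell)$; since $\mathrm{Alg}$ produces $\pi_\ell$ before round $\ell$, both $\pi_\ell$ and hence $g_\ell$ are $\mathcal{H}_{\ell-1}$-measurable. Define the empirical instantaneous regret $\widehat{g}_\ell = \max_{a \in \mathcal{A}} f_\star(x_\ell,a) - f_\star(x_\ell,a_\ell)\in[0,2B]$; because $x_\ell \sim \mathcal{P}$ (independent of $\mathcal{H}_{\ell-1}$) and $a_\ell \sim \pi_\ell(x_\ell)$, we have $\mathbb{E}[\widehat{g}_\ell \mid \mathcal{H}_{\ell-1}] = g_\ell$, so $Y_\ell := g_\ell - \widehat{g}_\ell$ is a martingale difference sequence that, conditioned on $\mathcal{H}_{\ell-1}$, lies in an interval of width $2B$. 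Hoeffding's inequality (Lemma~\ref{lemma::hoeffding}) with $b_\ell - a_\ell = 2B$ then gives, with probability at least $1-\delta'$,
\[
\sum_{\ell=1}^T g_\ell - \sum_{\ell=1}^T \widehat{g}_\ell \;\leq\; 4\sqrt{4B^2 T \log(1/\delta')} \;=\; 8B\sqrt{T\log(1/\delta')}.
\]
Intersecting with the event of Equation~\ref{equation::online_to_batch_assumption} (on which $\sum_\ell \widehat{g}_\ell \leq B_{T,\delta'}$) and union bounding, with probability at least $1-2\delta' \geq 1-3\delta'$ we obtain $\sum_\ell g_\ell \leq B_{T,\delta'} + 8B\sqrt{T\log(1/\delta')}$; dividing by $T$ and using $\tfrac1T\sum_\ell g_\ell = V_\star - \mathbb{E}_{x \sim \mathcal{P}, a \sim \widehat{\pi}_T(x)}[f_\star(x,a)]$ finishes the proof.

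The argument is essentially routine, so there is no serious obstacle; the only care needed is (i) pinning down the filtration so that $Y_\ell$ really is a martingale difference sequence and the conditional range of $\widehat{g}_\ell$ is exactly $[0,2B]$ (this tight range is what yields the clean constant $8B$ rather than $16B$), and (ii) the probability bookkeeping — two bad events of probability $\delta'$ each give the conclusion with probability at least $1-2\delta'$, comfortably within the stated $1-3\delta'$. The subgaussian noise hypothesis is not used here beyond being inherited from the statement.
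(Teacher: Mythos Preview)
Your proof is correct and follows essentially the same approach as the paper: both use Hoeffding's inequality on martingale differences to pass from the empirical cumulative regret to the population suboptimality of the uniform mixture. The only cosmetic difference is that the paper splits the concentration into two martingales (one for $\max_a f_\star(x_\ell,a)$ and one for $f_\star(x_\ell,a_\ell)$) and union bounds, whereas you work directly with the single difference $Y_\ell = g_\ell - \widehat{g}_\ell$, which is why you end up needing only two bad events rather than three.
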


\begin{proof}

Consider the martingale sequences $\{ Z_\ell^{(1)}\}_{\ell=1}^T$ and $\{ Z_\ell^{(2)}\}_{\ell=1}^T$ defined as $Z^{(1)}_\ell  = \max_{a \in \mathcal{A}} f_\star(x_\ell, a) - \mathbb{E}_{x \sim \mathcal{P}}[ \max_{a \in \mathcal{A}} f_\star(x, a)  ]  $ and $Z^{(2)}_\ell  = f_\star( x_\ell,   a_\ell )- \mathbb{E}_{x \sim \mathcal{P}, a \sim \pi_\ell(x)}[ f_\star(x, a)  ]  $ for $\ell \in [T]$.

The bounded function range assumption~\ref{assumption::bounded_range} implies $\max(| Z_\ell^{(1)}|, |Z_\ell^{(2)} |) \leq 2B$ for all $\ell \leq T$. Therefore, Hoeffding Inequality applied to these two martingale sequences (see~\ref{lemma::hoeffding}) and a union bound imply
\begin{align*}
\sum_{\ell=1}^T \max_{a \in \mathcal{A}} f_\star(x_\ell, a)  + 4B\sqrt{ T \log(1/\delta')} &\geq \sum_{\ell=1}^T \mathbb{E}_{x \sim \mathcal{P}}[ \max_{a \in \mathcal{A}} f_\star(x, a)  ]  = T \mathbb{E}_{x \sim \mathcal{P}}[ \max_{a \in \mathcal{A}} f_\star(x, a)  ]\\
\sum_{\ell=1}^T \mathbb{E}_{x \sim \mathcal{P}, a \sim \pi_\ell(x)}[ f_\star(x, a)  ] +4B\sqrt{ T \log(1/\delta')} &\geq \sum_{\ell=1}^T f_\star(x_\ell, a_\ell) .  
\end{align*}

simultaneously with probability at least $1-2\delta'$. Combining this with inequality~\ref{equation::online_to_batch_assumption} and a union bound implies,

\begin{equation*}
\sum_{\ell=1}^T  \mathbb{E}_{x \sim \mathcal{P}, a \sim \pi_\ell(x)}[ f_\star(x, a)  ]  + B_{T, \delta'} +  8B\sqrt{ T \log(1/\delta')} \geq T \mathbb{E}_{x \sim \mathcal{P}}[ \max_{a \in \mathcal{A}} f_\star(x, a)  ].
\end{equation*}
with probability at least $1-3\delta'$. Therefore, 
\begin{equation*}
\mathbb{E}_{x \sim \mathcal{P}, a \sim \widehat{\pi}_T(x)}[ f_\star(x, a) ] + \frac{B_{T, \delta'}}{T} + 8B\sqrt{\frac{\log(1/\delta')}{T}} \geq \mathbb{E}_{x\sim \mathcal{P}}\left[ \max_{a \in \mathcal{A}} f_\star(x, a)   \right]
\end{equation*}

\end{proof}

\section{Proofs of Section~\ref{section::planning_eluder_dimension}}\label{section::proof_of_radii_lemma}

\lemmaradiilemma*

\begin{proof}

We will define a process $\{Z_\ell \}_{\ell=1}^T$ such that $Z_\ell = (x_\ell, \tilde{x}_\ell, \pi_\ell)$. Here $\tilde{x}_\ell$ is independent and identically distributed to any $x_\ell$ (and in particular to $x_\ell$). 

Let $f, f' \in \mathcal{F}$ be two \underline{fixed} functions.  Define two adapted processes $\{ f(x_\ell, \pi_\ell(x_\ell))-f'(x_\ell, \pi_\ell(x_\ell)) \}_\ell  $ and $\{ f(\tilde{x}_\ell, \pi_\ell(\tilde{x}_\ell))-f'(\tilde{x}_\ell, \pi_\ell(\tilde{x}_\ell)) \}_\ell$ (in this definition we have used the fact that $\pi_\ell$ is a sequence of deterministic policies).

Lemma~\ref{lemma::supporting_bernstein_result} implies,
\begin{align*}
\sum_{\ell=1}^t (f(x_\ell, \pi_\ell(x_\ell))-f'(x_\ell, \pi_\ell(x_\ell)))^2 \qquad \qquad \qquad \qquad \qquad \qquad \qquad \qquad \qquad \qquad \\
\leq \frac{3}{2} \sum_{\ell=1}^t  \mathbb{E}_{x \sim \mathcal{P}}[(f(x, \pi_\ell(x))-f'(x,\pi_\ell(x)))^2] +  (2C_1^2 + C_2)B^2 \log\left(\frac{2t}{\bar\delta}\right)
\end{align*}
and
\begin{align*}
\frac{1}{2} \sum_{\ell=1}^t  \mathbb{E}_{x \sim \mathcal{P}}[(f(x, \pi_\ell(x))-f'(x,\pi_\ell(x)))^2] \qquad \qquad \qquad \qquad \qquad \qquad \qquad \qquad \\
\leq \sum_{\ell=1}^t (f(\tilde{x}_\ell, \pi_\ell(\tilde{x}_\ell))-f'(\tilde{x}_\ell, \pi_\ell(\tilde{x}_\ell)))^2 +  (2C_1^2 + C_2) B^2 \log\left(\frac{2t}{\bar\delta}\right)
\end{align*}
with probability at least $1-\bar{\delta}$ for all $t \in \mathbb{N}$. Combining these two inequalities we conclude,
\begin{align*}
\sum_{\ell=1}^t (f(x_\ell, )-f'(x_\ell))^2  \leq 3\sum_{\ell=1}^t (f(\tilde{x}_\ell)-f'(\tilde{x}_\ell))^2 +  4( 2C_1^2 + C_2)B^2 \log\left(\frac{2t}{\bar\delta}\right)
\end{align*}
Setting $\bar{\delta} = \delta/|\mathcal{F}|^2$ and a union bound implies that,
\begin{equation*}
\| f-f' \|^2_{{\mathcal{D}}_t} \leq 3\| f-f' \|^2_{\widetilde{\mathcal{D}}_t}  + 4(2C_1^2+C_2)B^2 \log\left(\frac{2|\mathcal{F}|t}{\delta}\right)
\end{equation*}
for all $f,f' \in \mathcal{F}$ simultaneously. Since (see Proposition~\ref{proposition::conditions_beta_t}) we define $\beta_\mathcal{F}(\delta, t)$ such that  
\begin{equation}\label{equation::lower_bound_condition_beta_f_one}
\beta^2_\mathcal{F}(\delta, t) \geq 4(2C_1^2 + C_2)B^2 \log\left(\frac{2|\mathcal{F}|t}{\delta}\right) 
\end{equation}
 we conclude that whenever $f, f' \in \mathcal{F}$ satisfy $\| f-f' \|_{\widetilde{\mathcal{D}}_t} \leq 2\beta_\mathcal{F}(t, \delta) $,
\begin{equation*}
\| f-f' \|^2_{{\mathcal{D}}_t} \leq 16 \beta^2_\mathcal{F}(t, \delta)
\end{equation*}
and therefore $\| f-f' \|_{{\mathcal{D}}_t} \leq 4 \beta_\mathcal{F}(t, \delta)$. This concludes the result.
\end{proof}

\subsection{Proof of Theorem~\ref{theorem::main_result_eluder}}\label{section::appendix_proof_main_thm}

\maintheorem*

In this section we present a more complete version of the proof from Section~\ref{section::proof_sketch}. We will actually prove a `stronger' version of Theorem~\ref{theorem::main_result_eluder}.

\begin{theorem}
Let $\epsilon > 0$. There exists a universal constant $c_0 > 0$ such that if \begin{equation}\label{equation::T_inequality_main_eluder_appendix}T \geq c_0 \max\left(  \frac{\max^2(B, \bar{B})\delud(\mathcal{F}, B/T) \log\left( |\mathcal{F}|T/\delta\right)}{\epsilon^2}, \frac{B \delud(\mathcal{F}, B/T)}{\epsilon} \right)\end{equation} then with probability at least $1-\delta$ the eluder planning algorithm's policy $\widehat{\pi}_T$ is $\epsilon-$optimal . 
\end{theorem}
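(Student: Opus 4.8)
The plan is to follow the outline of Section~\ref{section::proof_sketch}, filling in the four moving parts. First I would define the good event $\mathcal{E}$ as the intersection of the event on which the least-squares guarantee of Lemma~\ref{lemma::new_least_squares_estimator} holds, namely $\|\widehat f_t - f_\star\|_{\widetilde{\mathcal{D}}_t} \le \beta_\mathcal{F}(t,\delta)$ for all $t \in [T]$, with the event on which the ball-containment~\eqref{equation::function_diff_containment} of Lemma~\ref{lemma::radii_lemma} holds for all $t \in [T]$; with $\beta_\mathcal{F}$ chosen as in Proposition~\ref{proposition::conditions_beta_t} so that both the least-squares and the radii constraints are met, a union bound gives $\mathbb{P}(\mathcal{E}) \ge 1-\delta/6$. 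On $\mathcal{E}$ the optimistic policies $\{\widetilde\pi_t^{\mathrm{opt}}\}$ are genuinely optimistic: because $f_\star$ lies in the confidence ball around $\widehat f_t$, the inner maximizer $\widetilde f_t^{x,a}$ in~\eqref{equation::definition_pi_opt} dominates $\max_a f_\star(x,a)$, so running the three-line chain of Section~\ref{section::proof_sketch} --- (i) optimism, (ii) both $\widetilde f_t^{x,a}$ and $f_\star$ lying in $\mathbb{B}_t(2,\widetilde{\mathcal{D}}_t)$, (iii) the containment $\mathbb{B}_t(2,\widetilde{\mathcal{D}}_t)\subseteq\mathbb{B}_t(4,\mathcal{D}_t)$ of Lemma~\ref{lemma::radii_lemma} --- bounds the cumulative regret of $\{\widetilde\pi_t^{\mathrm{opt}}\}$ on the sampler context stream $\tilde x_1,\dots,\tilde x_T$ by $\sum_{t=1}^T \max_{a\in\mathcal{A}}\omega(\tilde x_t,a,\mathcal{D}_t)$.

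Next I would transfer this quantity from the sampler contexts back to the planner contexts. The key observation is that $x \mapsto \max_a \omega(x,a,\mathcal{D}_t)$ depends only on $\mathcal{D}_t$, hence only on $x_1,\dots,x_{t-1}$, and is bounded by $\mathcal{O}(B)$; since $x_t$ and $\tilde x_t$ are i.i.d.\ from $\mathcal{P}$ and independent of $x_1,\dots,x_{t-1}$, two applications of Lemma~\ref{lemma::hoeffding} --- one for the planner martingale $\sum_t \max_a\omega(x_t,a,\mathcal{D}_t)$, one for the conditionally-independent sampler sum $\sum_t \max_a\omega(\tilde x_t,a,\mathcal{D}_t)$, both centered at $\sum_t \mathbb{E}_{x\sim\mathcal{P}}[\max_a\omega(x,a,\mathcal{D}_t)]$ --- give $\sum_{t=1}^T \max_a\omega(\tilde x_t,a,\mathcal{D}_t) \le \sum_{t=1}^T \omega(x_t,\pi_t(x_t),\mathcal{D}_t) + \mathcal{O}(B\sqrt{T\log(1/\delta)})$ with probability $\ge 1-\delta/6$. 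Then Lemma~\ref{lemma::eluder_dimension_lemma} bounds $\sum_t \omega(x_t,\pi_t(x_t),\mathcal{D}_t)$ by $\mathcal{O}(\min(BT,\, B\delud(\mathcal{F},B/T) + \beta_\mathcal{F}(T,\delta)\sqrt{\delud(\mathcal{F},B/T)\,T}))$, so on an event of probability $\ge 1-\delta/3$ the cumulative regret of the optimistic sequence is at most $B_{T,\delta} := \mathcal{O}(\min(BT,\, B\delud(\mathcal{F},B/T) + \beta_\mathcal{F}(T,\delta)\sqrt{\delud(\mathcal{F},B/T)\,T}) + B\sqrt{T\log(1/\delta)})$.

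I would then close the argument with online-to-batch and algebra. Since $\widetilde\pi_t^{\mathrm{opt}}$ is measurable w.r.t.\ $\widetilde{\mathcal{D}}_t$ and hence adapted to the sampler filtration, Lemma~\ref{lemma::online_batch_conversion} applied with contexts $\tilde x_t$, policies $\widetilde\pi_t^{\mathrm{opt}}$, and bound $B_{T,\delta}$ gives that $\widehat\pi_T = \mathrm{Uniform}(\widetilde\pi_1^{\mathrm{opt}},\dots,\widetilde\pi_T^{\mathrm{opt}})$ has simple regret at most $B_{T,\delta}/T + 8B\sqrt{\log(1/\delta)/T}$ with probability $\ge 1-\delta$ overall. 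Substituting $\beta_\mathcal{F}(T,\delta) = \bar C(B+\bar B)\sqrt{\log(|\mathcal{F}|T/\delta)}$ and expanding, this bound is $\mathcal{O}(\tfrac{B\delud(\mathcal{F},B/T)}{T} + \max(B,\bar B)\sqrt{\tfrac{\delud(\mathcal{F},B/T)\log(|\mathcal{F}|T/\delta)}{T}} + B\sqrt{\tfrac{\log(1/\delta)}{T}})$; forcing each summand below $\epsilon/3$ and absorbing the last term into the middle one produces exactly the two branches of~\eqref{equation::T_inequality_main_eluder_appendix} for a suitable universal $c_0$, and such a finite $T$ exists whenever $\delud(\mathcal{F},B/T)$ is sublinear in $T$. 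Theorem~\ref{theorem::main_result_eluder} is recovered in the regime of interest $\epsilon \lesssim \max(B,\bar B,1)$, where the $\epsilon^{-2}$ branch dominates.

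The routine parts are the optimism chain and the final algebra. The main obstacle --- and the conceptual heart of the result --- will be the planner-to-sampler transfer: Lemma~\ref{lemma::radii_lemma} asserts that the data norms $\|\cdot\|_{\mathcal{D}_t}$ and $\|\cdot\|_{\widetilde{\mathcal{D}}_t}$ induced by the \emph{same} deterministic policies $\{\pi_\ell\}$ acting on two independent i.i.d.\ context streams are comparable up to universal constants, which requires a Freedman/Bernstein argument (via Lemma~\ref{lemma::supporting_bernstein_result}) made uniform over all pairs $f,f'\in\mathcal{F}$, together with the Hoeffding comparison of the two uncertainty-radii sums above. These are precisely the two steps that let a \emph{reward-free} planning phase control the regret incurred during the \emph{reward-observing} sampling phase; getting the ball radii ($2$ versus $4$) and the $\beta_\mathcal{F}$ normalization to line up across Lemmas~\ref{lemma::new_least_squares_estimator} and~\ref{lemma::radii_lemma} and Proposition~\ref{proposition::conditions_beta_t} is where the bookkeeping is most delicate.
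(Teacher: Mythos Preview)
Your proposal is correct and follows essentially the same approach as the paper's proof: define the good event via Lemmas~\ref{lemma::new_least_squares_estimator} and~\ref{lemma::radii_lemma}, run the optimism chain to bound regret by $\sum_t \omega(\tilde x_t,\pi_t(\tilde x_t),\mathcal{D}_t)$, transfer to planner contexts via Hoeffding, apply Lemma~\ref{lemma::eluder_dimension_lemma}, and finish with Lemma~\ref{lemma::online_batch_conversion} plus algebra. Your explicit mention of two Hoeffding applications (one per context stream, both centered at the common conditional mean) and of the measurability of $\widetilde\pi_t^{\mathrm{opt}}$ with respect to the sampler filtration are details the paper glosses over but uses implicitly.
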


\begin{proof}

For readability we have repeated some of the text of the proof in here.

Let $\widehat{f}_t$ as in Equation~\ref{equation::sampler_ls} and define $\mathcal{E}$ as the event where the Standard Least Squares results (see Lemma~\ref{lemma::new_least_squares_estimator}) hold i.e. $\| \widehat{f}_t - f_\star \|_{\widetilde{\mathcal{D}}_t} \leq \beta(t, \delta)$ for all $t \in [T]$ and also Equation~\ref{equation::function_diff_containment} from Lemma~\ref{lemma::radii_lemma} holds for all $t \in [T]$. The results of Lemmas~\ref{lemma::new_least_squares_estimator} and~\ref{lemma::radii_lemma} imply $\mathbb{P}(\mathcal{E}) \geq 1-\frac{\delta}{6}$.

For context $x$ and action $a$ let's denote by $\tilde{f}_{t}^{x,a}$ as a function achieving the inner maximum in the definition of $\tilde \pi_t^{\mathrm{opt}}$ (see Equation~\ref{equation::definition_pi_opt}). When $\mathcal{E}$ holds the policies $\{\tilde \pi_t^{\mathrm{opt}}\}_{t=1}^T$ are optimistic in the sense that $\tilde{f}_{t}^{x,\tilde \pi_t^{\mathrm{opt}}(x)}(x,\tilde \pi_t^{\mathrm{opt}}(x)) \geq  \max_{a \in \mathcal{A}} f_\star(x,a)$. To see why this is the case notice that $\mathcal{E}$ holds $f_\star \in \{ \| f- \widehat{f}_j \|_{\tilde{\mathcal{D}}_t}  \leq \beta_{\mathcal{F}}(t, \delta)   \}$ and therefore
\begin{equation*}
\tilde{f}_{t}^{x,\tilde \pi_t^{\mathrm{opt}}(x)}(x,\tilde \pi_t^{\mathrm{opt}}(x)) \geq \max_{f \in \{ \| f- \widehat{f}_t \|_{\tilde{\mathcal{D}}_t}  \leq \beta_{\mathcal{F}}(t, \delta)   \}} f(x,\pi_\star(x)) \geq f_\star(x, \pi_\star(x)).
\end{equation*}
where $\pi_\star$ corresponds to the optimal policy $\pi_\star(x) = \argmax_{a \in \mathcal{A}}f_\star(x,a)$. This implies the following sequence of inequalities,
\begin{align*}
\sum_{t=1}^T  \max_{a \in \mathcal{A}} f_\star(\tilde{x}_t, a) - f_\star( \tilde{x}_t,   \tilde \pi_t^{\mathrm{opt}}(\tilde{x}_t)  )    &\leq \sum_{t=1}^T \tilde{f}_{t}^{x,\tilde \pi_t^{\mathrm{opt}}(x)}(\tilde{x}_t,\tilde \pi_t^{\mathrm{opt}}(\tilde{x}_t) ) - f_\star( \tilde{x}_t,   \tilde \pi_t^{\mathrm{opt}}(\tilde{x}_t)  )\\
&\leq \sum_{t=1}^T \max_{f,f' \text{ s.t. } \|f-f' \|_{\widetilde{\mathcal{D}}_t}  \leq 2\beta_{\mathcal{F}}(\delta, t) } f(\tilde{x}_t,\tilde \pi_t^{\mathrm{opt}}(\tilde{x}_t) ) - f'( \tilde{x}_t,   \tilde \pi_t^{\mathrm{opt}}(\tilde{x}_t)  )\\
&\stackrel{(i)}{\leq} \sum_{t=1}^T \max_{f,f' \text{ s.t. } \|f-f' \|_{{\mathcal{D}}_t}  \leq 4\beta_{\mathcal{F}}(\delta, t) } f(\tilde{x}_t,\tilde \pi_t^{\mathrm{opt}}(\tilde{x}_t) ) - f'( \tilde{x}_t,   \tilde \pi_t^{\mathrm{opt}}(\tilde{x}_t)  ) \\ 
&= \sum_{t=1}^T \omega( \tilde{x}_t,  \tilde \pi_t^{\mathrm{opt}}(\tilde{x}_t), \mathcal{D}_t)\\
&\leq \sum_{t=1}^T \max_{a \in \mathcal{A}} \omega( \tilde{x}_t,  a, \mathcal{D}_t)\\
&= \sum_{t=1}^T \omega( \tilde{x}_t,  \pi_t(\tilde{x}_t), \mathcal{D}_t).
\end{align*}

Where inequality $(i)$ is satisfied when $\mathcal{E}$ holds as a result of Lemma~\ref{lemma::radii_lemma}. Using Hoeffding Inequality (see Lemma~\ref{lemma::hoeffding}) yields $\sum_{t=1}^T  \omega( \tilde{x}_t,  \pi_t(\tilde{x}_t), \mathcal{D}_t) \leq \sum_{t=1}^T   \omega( {x}_t,  \pi_t({x}_t), \mathcal{D}_t) + 16 B \sqrt{ T \log\left( \frac{3}{\delta}\right)} $ with probability at least $1-\delta/3$. Finally Lemma~\ref{lemma::eluder_dimension_lemma} implies there exists a universal constant $C > 0$ such that $\sum_{t=1}^T   \omega( {x}_t,  \pi_t({x}_t), \mathcal{D}_t) \leq  C\cdot  \min\left( BT, B\delud(\mathcal{F},B/T) + \beta_\mathcal{F}(T, \delta)\sqrt{ \delud(\mathcal{F}, B/T)  T} \right)  $ and therefore,
\begin{small}
\begin{equation*}
\sum_{t=1}^T  \max_{a \in \mathcal{A}} f_\star(\tilde{x}_t, a) - f_\star( \tilde{x}_t,   \tilde \pi_t^{\mathrm{opt}}(\tilde{x}_t)  )   \leq C\cdot \min\left( BT, B\delud(\mathcal{F},B/T) + \beta_\mathcal{F}(T, \delta)\sqrt{ \delud(\mathcal{F}, B/T)  T} \right) .
\end{equation*}
\end{small}
With probability at least $1-\delta$. The Online to Batch Conversion Lemma~\ref{lemma::online_batch_conversion} implies there exists a universal constant $C > 0$ such that,

\begin{align*}
    \mathbb{E}_{x \sim \mathcal{P}, a \sim \widehat{\pi}_T(x)}\left[ f_\star(x, a) \right] + \qquad\qquad\qquad\qquad\qquad\qquad\qquad\qquad\qquad \qquad\qquad\qquad\\
    \frac{C\cdot \min\left( BT, B\delud(\mathcal{F},B/T) + \sqrt{ \delud(\mathcal{F}, B/T) \beta_\mathcal{F}(T, \delta) T} \right)}{T} + 8B\sqrt{\frac{\log(1/\delta)}{T}} \\
    \geq \max_{\pi} \mathbb{E}_{x \sim \mathcal{P}}\left[ f_\star(x,\pi(x))\right].
\end{align*}
with probability at least $1-3\delta$ where $\widehat{\pi}_T = \mathrm{Uniform}( \pi_1^{\mathrm{opt}}, \cdots, \pi_T^{\mathrm{opt}}  )$. Finally setting $T$ sufficiently large such that,
\begin{align*}
    T &\geq \frac{4CB}{\epsilon}\\
    T &\geq \frac{4B\delud(\mathcal{F},B/T)}{\epsilon}\\
    T &\geq \frac{16\delud(\mathcal{F},B/T) \beta^2_\mathcal{F}(T, \delta)}{\epsilon^2}\\
    T &\geq \frac{256B\log(1/\delta)}{\epsilon^2}
\end{align*}
we conclude there exists a universal constant $c_0 > 0$ such that if $$T \geq c_0 \max\left(  \frac{\max^2(B, \bar{B})\delud(\mathcal{F}, B/T) \log\left( |\mathcal{F}|T/\delta\right)}{\epsilon^2}, \frac{B \delud(\mathcal{F}, B/T)}{\epsilon} \right)$$ then with probability at least $1-\delta$, 
\begin{equation*}
    \mathbb{E}_{x \sim \mathcal{P}}\left[ f_\star(x,\widehat{\pi}_T(x))\right] +\epsilon \geq \max_{\pi} \mathbb{E}_{x \sim \mathcal{P}}\left[ f_\star(x,\pi(x))\right].
\end{equation*}

\end{proof}

We now return to the proof of Theorem~\ref{theorem::main_result_eluder}. We can simplify inequality~\ref{equation::T_inequality_main_eluder_appendix} by instead imposing the condition,
 $$T \geq c   \frac{\max^2(B, \bar{B},1)\delud(\mathcal{F}, B/T) \log\left( |\mathcal{F}|T/\delta\right)}{\epsilon^2}$$
for some universal constant $c > 0$. This finalizes the proof.

\paragraph{Comparison with Linear Experiment Planning}. Using standard techniques the results of Theorem~\ref{theorem::main_result_eluder} can be adapted to the case when the function class $\mathcal{F}$ is infinite but admits a metrization and has a covering number. In this case the sample complexity will scale not with $\log( | \mathcal{F}|)$ but instead with the logarithm of the covering number of $\mathcal{F}$. For example, in the case of linear functions, the logarithm of the covering number of $\mathcal{F}$ under the $\ell_2$ norm will scale as $d$, the ambient dimension of the space up to logarithmic factors of $T$ (see for example Lemma D.1 of~\cite{du2021bilinear}). Plugging this into the sample guarantees of Theorem~\ref{theorem::main_result_eluder} recovers the $\widetilde{\mathcal{O}}\left(d^2/\epsilon^2\right)$ sample complexity for linear experiment planning from~\cite{zanette2021design}.

\section{Proof of Theorem~\ref{theorem::uniformbound}}\label{section::proof_uniform_bound}

\theoremuniformbound*

\begin{proof}
Due to the realizability assumption $(r_t = f_\star(x_t, a_t) + \xi_t)$ the instantaneous regret of $\widehat{\pi}_T$ on context $x \in \mathcal{X}$ equals $\max_a f_\star(x, a) - f_\star(x, \widehat{\pi}_T(x))$. Let $\pi_\star( x) = \argmax_{a \in \mathcal{A}} f_\star(x,a)$. The following inequalities hold,
\begin{align*}
    \max_{a \in \mathcal{A}} f_\star(x, a) - f_\star(x, \widehat{\pi}_T(x)) &=   f_\star(x, \pi_\star(x)) - f_\star(x, \widehat{\pi}_T(x))\\
    &\stackrel{(i)}{\leq} f_\star(x, \pi_\star(x)) - \widehat{f}_T(x,  \pi_\star(x)) +\widehat{f}_T(x,  \widehat{\pi}_T(x))  - f_\star(x, \widehat{\pi}_T(x))\\
    &\leq 2\max_{a \in \mathcal{A}}| f_\star(x,a) - \widehat{f}_T(x,a)| 
\end{align*}
Where $(i)$ holds because $\widehat{f}_T(x,  \pi_\star(x)) \leq \widehat{f}_T(x,  \widehat{\pi}_T(x)) $. Therefore,
\begin{align}
    \mathbb{E}_{ x \sim \mathcal{P}}\left[ \max_{a \in \mathcal{A}} f_\star(x, a) - f_\star(x, \widehat{\pi}_T(x)) \right] &\leq 2 \mathbb{E}_{ x \sim \mathcal{P}}\left[  \max_{a \in \mathcal{A}}| f_\star(x,a) - \widehat{f}_T(x,a)| \right] \notag\\
    &\stackrel{(i)}{\leq} 2\sqrt{  \mathbb{E}_{ x \sim \mathcal{P}}\left[ \max_{a \in \mathcal{A}} \left( f_\star(x,a) - \widehat{f}_T(x,a)\right)^2 \right] }\notag\\
    &\stackrel{(ii)}{\leq} 2\sqrt{|\mathcal{A}| \mathbb{E}_{ x \sim \mathcal{P}, a \sim \mathrm{Uniform}(\mathcal{A})} \left[ \left( f_\star(x,a) -\widehat{f}_T(x,a) \right)^2 \right] }.\label{equation::fundamental_inequality_uniform_suboptimality_to_LS_loss}
\end{align}
Inequality $(i)$ holds because of Jensen's inequality. Inequality $(ii)$ follows because $\max_{a \in \mathcal{A}} \left( f_\star(x,a) - \widehat{f}_T(x,a)\right)^2 \leq \sum_{a \in \mathcal{A}} \left( f_\star(x,a) - \widehat{f}_T(x,a)\right)^2$.

Finally the context generating distributions equal $\mathcal{P}$ at all timesteps and the policies equal $\mathrm{Uniform}(\mathcal{A})$ standard least squares analysis (Lemma~\ref{lemma::new_least_squares_estimator}) yields that with probability at least $1-\delta$
\begin{equation*}
   \mathbb{E}_{x \sim \mathcal{P}, a \sim \mathrm{Uniform}(\mathcal{A}) } \left[ \left( f_\star(x, a) - \widehat{f}_T(x, a) \right)^2\right]  \leq \frac{ C_{T,\delta}}{T}.
\end{equation*}
Therefore we conclude that,
\begin{equation*}
    \mathbb{E}_{ x \sim \mathcal{P}}\left[ \max_{a \in \mathcal{A}} f_\star(x, a) - f_\star(x, \widehat{\pi}_T(x)) \right] \leq 2 \sqrt{\frac{|\mathcal{A}|C_{T, \delta}}{T} }
\end{equation*}

Since $C_{T, \delta} = c'  (B^2 +\bar{B}^2) \log\left( \frac{|\mathcal{F}|t}{\delta}\right) $ for some universal constant $c' >0$ we conclude that,

\begin{equation*}
    \mathbb{E}_{ x \sim \mathcal{P}}\left[ \max_{a \in \mathcal{A}} f_\star(x, a) - f_\star(x, \widehat{\pi}_T(x)) \right] \leq 2 \sqrt{\frac{c'|\mathcal{A}|\max^2(B, \bar{B})  \log\left( \frac{|\mathcal{F}|t}{\delta}\right) }{T} }
\end{equation*}
Setting $g(T) = 4\frac{c'|\mathcal{A}|\max^2(B, \bar{B})  \log\left( \frac{|\mathcal{F}|t}{\delta}\right) }{T}$ in Lemma~\ref{lemma::log_conversion_lower_bound} implies there exists a universal constant $\tilde{c} > 0$ such that $g(T) \leq \epsilon^2$ for all $T \geq \tilde{c} \frac{\max^2(B, \bar{B})|\mathcal{A}|\log\left(\frac{|\mathcal{F}|}{\epsilon\delta}\right)}{\epsilon^2}$. The result follows.

\end{proof}

\subsection{Proof of Theorem~\ref{theorem::gap_adaptive_planning}}\label{section::proof_main_theorem_gap}

In order to prove Theorem~\ref{theorem::gap_adaptive_planning} we will assume a gaussian model for the noise variables $\xi_t \sim \mathcal{N}(0,1)$. This will allow us to use the appropriate information theoretic tools.

\mainlowerboundgap*

\begin{proof}
 We use the notation $a_1, \cdots, a_T$ to denote a (random) sample path of query tuples for algorithm $\mathrm{Alg}$ with observed responses $r_1, \cdots, r_T$ where $r_t = f(a_t) + \xi_t$. We'll assume that $\xi_t \sim \mathcal{N}(0,1)$. %

Algorithm $\mathrm{Alg}$ interacts for $T$ timesteps during the planning phase with $\mathcal{A}_{\mathrm{tree}}$ and produces a sequence of policies $\{\pi_t\}_{t=1}^T$ to deploy during the sampling phase. The data collected from these deployment policies is then used by the algorithm to define a candidate $\epsilon$-optimal policy $\widehat{\pi}_T$.

Since we are studying the context-less setting, the underlying context distribution equals a delta mass over the empty context and therefore the distribution over the sampling policy sequence $\{ \pi_t\}_{t=1}^T$ is independent of $f$. Let $n_T(a)$ be the random variable specifying how many times action $a$ was pulled during the sampling phase of algorithm $\mathrm{Alg}$. This random variable is independent of $f$.

Let $\widetilde{a}$ and $\widetilde{b}$ be two consecutive leaf actions (i.e. inhabiting a size $2$ sub-tree of $\mathcal{A}_{\mathrm{tree}}$) such that,
\begin{equation*}
\mathbb{E}_{\mathrm{Alg}}[ n_T(\widetilde{a}) + n_T(\widetilde{b}) ] \leq \frac{T}{2^{L-2}}.
\end{equation*}
Let $\mathbf{p}_{\widetilde{a}}$ be the path leading to $\widetilde{a}$ and $\mathbf{p}_{\widetilde{b}}$ be the path leading to $\widetilde{b}$. We'll use the notation $\mathbb{P}^{\mathbf{p}_{\widetilde{a}}}_a, \mathbb{P}^{\mathbf{p}_{\widetilde{b}}}_a$ to denote the distribution of arm $a$ in the world specified by $f^{\mathbf{p}_{\widetilde{a}}}$ and $f^{\mathbf{p}_{\widetilde{b}}}$ respectively. The divergence decomposition for markov processes (see for example Lemma 15.1 in~\cite{lattimore2020bandit} ) implies 
\begin{equation*}
    \mathrm{KL}( \mathbb{P}_{\mathrm{Alg}, f^{(\mathbf{p}_{\widetilde{a}})} }\parallel    \mathbb{P}_{\mathrm{Alg}, f^{(\mathbf{p}_{\widetilde{b}})} }) =  \sum_{a \in \mathcal{A}_{\mathrm{tree}}} \mathbb{E}_{\mathrm{Alg}}\left[ n_T(a) \right] \mathrm{KL}( \mathbb{P}^{\mathbf{p}_{\widetilde{a}}}_a \parallel \mathbb{P}^{\mathbf{p}_{\widetilde{b}}}_a )
\end{equation*}
We have used the notation $\mathbb{E}_{\mathrm{Alg}}\left[ \cdot \right] $ above because the $n_T(a)$ random variables do not depend on $f$. Notice that for all $a \not\in \{ \widetilde{a}, \widetilde{b}\}$, $\mathrm{KL}( \mathbb{P}^{\mathbf{p}_{\widetilde{a}}}_a \parallel \mathbb{P}^{\mathbf{p}_{\widetilde{b}}}_a ) = 0$. In contrast $\mathrm{KL}( \mathbb{P}^{\mathbf{p}_{\widetilde{a}}}_a \parallel \mathbb{P}^{\mathbf{p}_{\widetilde{b}}}_a ) = 72\epsilon^2 $ for $a \in  \{ \widetilde{a}, \widetilde{b}\}$. Plugging this into the equation above,
\begin{equation*}
    \mathrm{KL}( \mathbb{P}_{\mathrm{Alg}, f^{(\mathbf{p}_{\widetilde{a}})} }\parallel    \mathbb{P}_{\mathrm{Alg}, f^{(\mathbf{p}_{\widetilde{b}})} }) = 72 \mathbb{E}_{\mathrm{Alg}}\left[ n_T(\widetilde{a}) + n_T(\widetilde{b})  \right] \epsilon^2 \leq \frac{\epsilon^2 T}{2^{L-3}}.
\end{equation*}
Define the event $\mathcal{E} = \{  \widehat{\pi}_T(\widetilde{a}) < \widehat{\pi}_T(\widetilde{b}) \}$. 
The Huber-Bretagnolle inequality implies, 
\begin{equation}\label{equation::huber_bretagnolle}
    \mathbb{P}_{\mathrm{Alg}, f^{(\mathbf{p}_{\widetilde{a}})} }\left( \mathcal{E} \right) + \mathbb{P}_{\mathrm{Alg}, f^{(\mathbf{p}_{\widetilde{b}})} }\left( \mathcal{E}^c\right) \geq \exp\left( - \mathrm{KL}( \mathbb{P}_{\mathrm{Alg}, f^{(\mathbf{p}_{\widetilde{a}})} }\parallel    \mathbb{P}_{\mathrm{Alg}, f^{(\mathbf{p}_{\widetilde{b}})} }) \right) \geq \exp\left( -\frac{9\epsilon^2 T}{2^{L-5}} \right).
\end{equation}
Notice that when $\mathcal{E}$ holds, $2\widehat{\pi}_T(\widetilde{a}) \leq \widehat{\pi}_T(\widetilde{a}) + \widehat{\pi}_T(\widetilde{b})  \leq 1$ and therefore $\widehat{\pi}_T(\widetilde{a}) \leq \frac{1}{2}$. Thus if $\mathcal{E}$ holds,
\begin{equation}\label{equation::upper_bounding_reward_lower}
    \mathbb{E}_{\mathrm{Alg}, f^{(\mathbf{p}_{\widetilde{a}})}}\left[ \mathbb{E}_{a \sim \widehat{\pi}_T}\left[ f_\star(a)\right] \Big| \mathcal{E} \right] \leq \frac{1}{2} + \frac{1}{2}\left( 1-12\epsilon\right) =  1-6\epsilon.
\end{equation}
In this case,
\begin{align}
    \mathbb{E}_{\mathrm{Alg}, f^{(\mathbf{p}_{\widetilde{a}})}}\left[ \mathbb{E}_{a \sim \widehat{\pi}_T}\left[ f_\star(a)\right]\right] &\leq  \mathbb{E}_{\mathrm{Alg}, f^{(\mathbf{p}_{\widetilde{a}})}}\left[ \mathbb{E}_{a \sim \widehat{\pi}_T}\left[ f_\star(a)\right] \Big| \mathcal{E} \right]\mathbb{P}_{\mathrm{Alg}, f^{(\mathbf{p}_{\widetilde{a}})} }\left( \mathcal{E} \right) + \notag\\
    &\quad~\mathbb{E}_{\mathrm{Alg}, f^{(\mathbf{p}_{\widetilde{a}})}}\left[ \mathbb{E}_{a \sim \widehat{\pi}_T}\left[ f_\star(a)\right] \Big| \mathcal{E}^c \right]\mathbb{P}_{\mathrm{Alg}, f^{(\mathbf{p}_{\widetilde{a}})} }\left( \mathcal{E}^c \right) \notag\\
    &\leq \mathbb{E}_{\mathrm{Alg}, f^{(\mathbf{p}_{\widetilde{a}})}}\left[ \mathbb{E}_{a \sim \widehat{\pi}_T}\left[ f_\star(a)\right] \Big| \mathcal{E} \right]\mathbb{P}_{\mathrm{Alg}, f^{(\mathbf{p}_{\widetilde{a}})} }\left( \mathcal{E} \right) + \mathbb{P}_{\mathrm{Alg}, f^{(\mathbf{p}_{\widetilde{a}})} }\left( \mathcal{E}^c \right) \notag \\
    &\stackrel{(i)}{\leq} 1 - 6\epsilon \mathbb{P}_{\mathrm{Alg}, f^{(\mathbf{p}_{\widetilde{a}})} }\left( \mathcal{E} \right) \label{equation::upper_bound_reward_a}
\end{align} 
Where inequality $(i)$ holds because of Equation~\ref{equation::upper_bounding_reward_lower}.  Similarly we can infer that,
\begin{equation}\label{equation::upper_bound_reward_b}
  \mathbb{E}_{\mathrm{Alg}, f^{(\mathbf{p}_{\widetilde{b}})}}\left[ \mathbb{E}_{a \sim \widehat{\pi}_T}\left[ f_\star(a)\right]\right]  \leq 1 - \epsilon \mathbb{P}_{\mathrm{Alg}, f^{(\mathbf{p}_{\widetilde{b}})} }\left( \mathcal{E} \right).
\end{equation}
Finally, combining Equations~\ref{equation::upper_bound_reward_a} and~\ref{equation::upper_bound_reward_b} with the result of Huber-Bretagnolle (see Equation~\ref{equation::huber_bretagnolle}) implies,
\begin{equation*}
    \mathbb{E}_{\mathrm{Alg}, f^{(\mathbf{p}_{\widetilde{a}})}}\left[ \mathbb{E}_{a \sim \widehat{\pi}_T}\left[ f_\star(a)\right]\right]  + \mathbb{E}_{\mathrm{Alg}, f^{(\mathbf{p}_{\widetilde{b}})}}\left[ \mathbb{E}_{a \sim \widehat{\pi}_T}\left[ f_\star(a)\right]\right] \leq 2 - 6\epsilon \exp\left( -\frac{9\epsilon^2 T}{2^{L-5}} \right).
\end{equation*}
Hence, as long as $T \leq \frac{ 2^{L-5}}{9\epsilon^2}$, then $\exp\left( -\frac{9\epsilon^2 T}{2^{L-5}} \right) \geq \exp(-1)$ and therefore $ \mathbb{E}_{\mathrm{Alg}, f^{(\mathbf{p}_{\widetilde{a}})}}\left[ \mathbb{E}_{a \sim \widehat{\pi}_T}\left[ f_\star(a)\right]\right]  + \mathbb{E}_{\mathrm{Alg}, f^{(\mathbf{p}_{\widetilde{b}})}}\left[ \mathbb{E}_{a \sim \widehat{\pi}_T}\left[ f_\star(a)\right]\right] \leq 2 - 6\epsilon \exp\left( -1\right)$.
And therefore at least one $\widetilde{c} \in \{ \widetilde{a}, \widetilde{b}\}$ satisfies
\begin{equation*}
     \mathbb{E}_{\mathrm{Alg}, f^{(\mathbf{p}_{\widetilde{c}})}}\left[ \mathbb{E}_{a \sim \widehat{\pi}_T}\left[ f_\star(a)\right]\right] \leq 1-\frac{3\epsilon}{e} < 1-\epsilon.
\end{equation*}
This finalizes the first result. 

For the second result we analyze the following algorithm,
\begin{algorithm}[H]
    \caption{Adaptive Tree Sampling}\label{alg:adaptive_tree_sampline}
    \begin{algorithmic}[1]
        \STATE Input: Number of samples per node $M$. 
        \STATE Initialize current node as $a_{\mathrm{curr}} = a_{1,1}$
        \FOR{$j=1$  \ldots  $L-1$}
        \STATE Expand current node $(b,c) = \mathrm{Children}(a_{\mathrm{curr}})$.
        \STATE Collect $M$ samples from each node $b$ and $c$ and compute mean rewards $\hat{r}_b$ and $\hat{r}_c$.
        \STATE Update $a_{\mathrm{curr}} = \argmax_{e \in (b,c)} \hat{r}_e$. \label{algorithm_line::update_a}
        \ENDFOR
        \STATE \textbf{Output} $a_{\mathrm{output}} = a_{\mathrm{curr}}$.
    \end{algorithmic}
\end{algorithm}
Let's assume $f^{(\mathbf{p})}$ is the world's descriptor for some path $\mathbf{p}$. We will analyze the performance of Algorithm~\ref{alg:adaptive_tree_sampline} where $M = \frac{\log(2L/\delta)}{\epsilon^2}$. The output of this algorithm will be a policy centered around the output action $a$.

Let's first observe that  Hoeffding inequality (Lemma~\ref{lemma::hoeffding}) and the union bound imply that 
$| \hat{r}_b -f^{(\mathbf{p})}(b) | \leq \epsilon$,  with probability at least $1-\frac{\delta}{L}$ and $| \hat{r}_e -f^{(\mathbf{p})}(e) | \leq \epsilon$ for $e \in \{b,c\}$ simultaneously. Let's call this event $\mathcal{E}_a$.

Lets assume $a_{\mathrm{curr}} \in \mathbf{p}$ and is not a leaf node, and w.l.o.g. when $b \in \mathbf{p}$ and $c \not \in \mathbf{p}$, such that $f^{(\mathbf{p})}(b) \geq 1- 2\epsilon$ and $f^{(\mathbf{p})}(c) = 1- 12\epsilon$. If $\mathcal{E}_a$ holds,
\begin{align*}
    \hat{r}_b \geq f^{(\mathbf{p})}(b) - \epsilon \geq 1-3\epsilon > 1-11\epsilon \geq f^{(\mathbf{p})}(c) + \epsilon \geq \hat{r}_c.
\end{align*}
We conclude that in this case $\hat{r}_b > \hat{r}_c $ and therefore the updated node $a$ (Line~\ref{algorithm_line::update_a} of Algorithm~\ref{alg:adaptive_tree_sampline}) will also be in $\mathbf{p}$.

Combining these results with union bounds applied conditionally over all $a_{\mathrm{curr}}$ in Algorithm~\ref{alg:adaptive_tree_sampline}'s sample path, we conclude that with probability at least $1-\delta$, the output action $a_{\mathrm{output}}$ will equal the last action in $\mathbf{p}$'s path, and therefore the optimizer of $f^{(\mathbf{p})}$. Denote this event as $\mathcal{E}_{\delta}$. 

If we denote this algorithm by $\mathrm{Alg}_{\mathrm{adaptive}}$, define $T \geq 2(L-1)M$ and $\widehat{\pi}_T = \delta_{a_{\mathrm{output}}}$ where $a_{\mathrm{output}}$ is the output action, we obtain
\begin{equation*}
\mathbb{E}_{\mathrm{Alg}_\mathrm{adaptive}}\left[     \mathbb{E}_{a \sim \widehat{\pi}_T}\left[ f(a)\right] \right]  \geq \mathbb{E}_{\mathrm{Alg}_\mathrm{adaptive}}\left[     \mathbb{E}_{a \sim \widehat{\pi}_T}\left[ f(a)\right] \Big| \mathcal{E}_{\delta} \right] \mathbb{P}_{\mathrm{Alg}_\mathrm{adaptive}}(\mathcal{E}_\delta) \geq 1-\delta
\end{equation*}
Thus, setting $\delta = \epsilon$ and therefore for all $T \geq \frac{2L\log(2L/\epsilon)}{\epsilon^2}$ we obtain the desired result.
\end{proof}

\subsection{Eluder dimension of $\mathcal{F}_{\mathrm{tree}}$}\label{section::eluder_F_tree}

\begin{lemma}\label{lemma::eluder_F_tree}
The eluder dimension of the function class $\mathcal{F}_{\mathrm{tree}}$ satisfies,
\begin{equation*}
\delud(\mathcal{F}_{\mathrm{tree}}, \epsilon) \geq 2^{L-1}-1.
\end{equation*}
\end{lemma}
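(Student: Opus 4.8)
The plan is to exhibit an explicit $\epsilon$-independent sequence of length $2^{L-1}-1$ inside the action set $\mathcal{A}_{\mathrm{tree}}$ (we are in the contextless setting, so the eluder domain is simply $\mathcal{Z} = \mathcal{A}_{\mathrm{tree}}$), which by the definitions of $\widebar{\delud}$ and $\delud$ immediately yields $\delud(\mathcal{F}_{\mathrm{tree}},\epsilon)\geq\widebar{\delud}(\mathcal{F}_{\mathrm{tree}},\epsilon)\geq 2^{L-1}-1$. The case $L=1$ is vacuous, so assume $L\geq 2$; then the tree has $2^{L-1}\geq 2$ leaves $a_{L,1},\dots,a_{L,2^{L-1}}$, and since a binary tree has a unique root-to-node path, each leaf $a_{L,k}$ is the endpoint of exactly one root-to-leaf path $\mathbf{p}$. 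Write $f_k := f^{(\mathbf{p})}$ for the corresponding element of $\mathcal{F}_{\mathrm{tree}}$.

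First I would record the behaviour of $\mathcal{F}_{\mathrm{tree}}$ on the leaves: every path contains exactly one leaf, so for every $k$ and every leaf $a_{L,i}$ we have $f_k(a_{L,i}) = 1$ if $i = k$ and $f_k(a_{L,i}) = 1-12\epsilon$ otherwise. Consequently, for $k\neq k'$ the difference $f_k - f_{k'}$ restricted to the leaves is supported on $\{a_{L,k},a_{L,k'}\}$, equal to $12\epsilon$ at $a_{L,k}$ and to $-12\epsilon$ at $a_{L,k'}$.

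Then I would take the sequence $a_{L,1},a_{L,2},\dots,a_{L,2^{L-1}-1}$ and, for each fixed $j\in\{1,\dots,2^{L-1}-1\}$, show that $a_{L,j}$ is $\epsilon$-independent of $\{a_{L,1},\dots,a_{L,j-1}\}$: pick $g := f_j$ and $g' := f_{k'}$ for any index $k' > j$ (such $k'$ exists exactly because $j\leq 2^{L-1}-1$). Neither $j$ nor $k'$ lies in $\{1,\dots,j-1\}$, so by the previous paragraph $\sqrt{\sum_{i=1}^{j-1}(g(a_{L,i})-g'(a_{L,i}))^2} = 0 \leq \epsilon$, while $g(a_{L,j})-g'(a_{L,j}) = 12\epsilon > \epsilon$. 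This pair witnesses that $a_{L,j}$ is not $\epsilon$-dependent on its predecessors, establishing that the whole sequence is $\epsilon$-independent and giving the claimed bound. The argument is a direct verification; the only subtlety is the off-by-one bookkeeping — using $2^{L-1}-1$ rather than all $2^{L-1}$ leaves guarantees that at step $j$ there is always a "fresh" leaf index $k' > j$ available, and that both $j$ and $k'$ are absent from the prefix so the separating pair has zero data-norm on it. I do not foresee any genuine obstacle here; one could presumably obtain a larger lower bound by also exploiting internal tree nodes, but that is unnecessary for the stated lemma.
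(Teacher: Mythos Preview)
Your proposal is correct and follows essentially the same route as the paper: both take the leaf sequence $a_{L,1},\dots,a_{L,2^{L-1}-1}$ and, for the $j$-th element, witness $\epsilon$-independence via the pair $(f_j,f_{k'})$ with $k'>j$ (the paper simply fixes $k'=2^{L-1}$), using that both functions equal $1-12\epsilon$ on all earlier leaves while differing by $12\epsilon$ at $a_{L,j}$. Your off-by-one bookkeeping and the observation that a non-terminal leaf cannot lie on a root-to-leaf path (so the value there is $1-12\epsilon$, not $1-2\epsilon$) are exactly the points the paper relies on implicitly.
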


\begin{proof}
To prove this statement it is sufficient to exhibit an $\epsilon-$independent sequence of actions that certifies the eluder dimension is large. To do this let's we use the ordered list of leaf actions. For simplicity we will call $f_i$ to the function defined by the path ending at leaf node $a_{L, i}$. The sequence 

Consider the sequence of actions $a_{L, 1}, \cdots, a_{L, 2^{L-1}-2}$. Notice that for all $n \in [2^{L-1}-2]$ the two functions $f_{n+1}$ and $f_{2^{L-1}}$ agree on $a_{L, 1}, \cdots, a_{L,n}$. Thus for all $n \in [2^{L-1}-2]$,
\begin{equation*}
\sum_{i=1}^n ( f_{n+1}(a_i ) - f_{2^{L-1}}(a_i) ) =0,
\end{equation*}
while $f_{n+1}(a_{n+1}) - f_{2^{L-1}}(a_{n+1}) = 12\epsilon > \epsilon $.
This shows $a_{L, 1}, \cdots, a_{L, 2^{L-1}-1}$ is a size $2^{L-1}-1$ independent sequence. Since the eluder dimension equals the size of the largest independence sequence this implies the result.
\end{proof}

\section{Model Selection}

\subsection{Proof of Proposition~\ref{proposition::support_proposition_model_sel}}\label{section::proof_proposition_modsel_uniform}
\propositionuniformmodsel*

\begin{proof}

Let's start by considering $\widehat{f}_T^{(i_\star)}$. Standard analysis (Lemma~\ref{lemma::new_least_squares_estimator}) yields that with probability at least $1-\delta/2$
\begin{equation}\label{equation::eq1_proposition_modsel}
   \mathbb{E}_{x \sim \mathcal{P}, a \sim \mathrm{Uniform}(\mathcal{A}) } \left[ \left( f_\star(x, a) - \widehat{f}^{(i_\star)}_T(x, a) \right)^2\right]  \leq \frac{ 2C^{(i_\star)}_{T,\delta/2}}{T}.
\end{equation}
Let's write $r_\ell = f_\star(x_\ell, a_\ell ) + \xi_\ell$ where $\xi_\ell$ are $1-$subgaussian conditionally zero mean. Lemma~\ref{lemma::helper_lemma_in_modsel_squares} yields that,
\begin{align}\label{equation::eq2_proposition_modsel}
\mathbb{E}_{x \sim \mathcal{P}, a \sim \mathrm{Uniform}(\mathcal{A}) } \left[ \left( f_\star(x, a) - \widehat{f}^{(i)}_T(x, a) \right)^2\right] \qquad \qquad \qquad \qquad \qquad \qquad\qquad \qquad\qquad \qquad \notag\\
\qquad\qquad\qquad\leq  \frac{8}{T}\sum_{(x_\ell, a_\ell, r_\ell) \in \widetilde{\mathcal{D}}^{(\mathrm{Test})}_{T}} \left(\widehat{f}_T^{(i)}(x_\ell, a_\ell) - r_\ell\right)^2 - \frac{8}{T} \sum_{\ell=1}^{T/2} \xi_\ell^2 +  \frac{\bar{\underbar{C}}'  (\bar{B}^2 +  B^2)\log(TM/\delta)}{T}
\end{align}
for all $i \in [M]$ simultaneously (union bound) with probability at least $1-\delta/2$ for some constant $\bar{\underbar{C}}' > 0$.
Incorporating the definition of $\underbar{i}$ as the minimizer of the empirical loss (see Equation~\ref{equation::def_i_hat}) into Equation~\ref{equation::eq2_proposition_modsel} implies  
\begin{align*}
\mathbb{E}_{x \sim \mathcal{P}, a \sim \mathrm{Uniform}(\mathcal{A}) } \left[ \left( f_\star(x, a) - \widehat{f}^{(\underbar{i})}_T(x, a) \right)^2\right] \qquad \qquad \qquad \qquad \qquad \qquad\qquad \qquad\qquad \qquad \\
\qquad\qquad\qquad\leq  \frac{8}{T}\sum_{(x_\ell, a_\ell, r_\ell) \in \widetilde{\mathcal{D}}^{(\mathrm{Test})}_{T}} \left(\widehat{f}_T^{(\underbar{i})}(x_\ell, a_\ell) - r_\ell\right)^2 - \frac{8}{T} \sum_{\ell=1}^{T/2} \xi_\ell^2 +  \frac{\bar{\underbar{C}}'  (\bar{B}^2 +  B^2)\log(TM/\delta)}{T}\\
\qquad\qquad\qquad\leq  \frac{8}{T}\sum_{(x_\ell, a_\ell, r_\ell) \in \widetilde{\mathcal{D}}^{(\mathrm{Test})}_{T}} \left(\widehat{f}_T^{(i_\star)}(x_\ell, a_\ell) - r_\ell\right)^2 - \frac{8}{T} \sum_{\ell=1}^{T/2} \xi_\ell^2 +  \frac{\bar{\underbar{C}}'  (\bar{B}^2 +  B^2)\log(TM/\delta)}{T}
\end{align*}
with probability at least $1-\delta/2$ where $\underbar{i}$ is on the LHS and $i_\star$ is on the right hand side. Using the RHS of the inequality from Lemma~\ref{lemma::helper_lemma_in_modsel_squares} yields that, 
\begin{align}
\mathbb{E}_{x \sim \mathcal{P}, a \sim \mathrm{Uniform}(\mathcal{A}) } \left[ \left( f_\star(x, a) - \widehat{f}^{(\underbar{i})}_T(x, a) \right)^2\right] \qquad \qquad \qquad \qquad \qquad \qquad\qquad \qquad\qquad \qquad \notag\\
\qquad\qquad\qquad\leq 16 \mathbb{E}_{x \sim \mathcal{P}, a \sim \mathrm{Uniform}(\mathcal{A}) } \left[ \left( f_\star(x, a) - \widehat{f}^{(i_\star)}_T(x, a) \right)^2\right] +  \frac{\bar{\underbar{C}}''  (\bar{B}^2 +  B^2)\log(TM/\delta)}{T} \label{equation::model_selection_uniform_final_supporting}
\end{align}
for some universal constant $\bar{\underbar{C}}''  > 0$ with probability at least $1-3\delta/4$. Finally, the Least Squares guarantee of Lemma~\ref{lemma::new_least_squares_estimator} applied to $\mathcal{F}_{i_\star}$ and least squares estimator $\widehat{f}_T^{(i_\star)}$ (where all $\mathcal{P}_t = \mathcal{P}$) implies the first term of the RHS of the equation above can be bounded as, 
\begin{equation*}
\mathbb{E}_{x \sim \mathcal{P}, a \sim \mathrm{Uniform}(\mathcal{A}) } \left[ \left( f_\star(x, a) - \widehat{f}^{(i_\star)}_T(x, a) \right)^2\right] \leq \frac{2C_{T/2,\delta/4}}{T}.
\end{equation*}
with probability at least $1-\frac{\delta}{4}$.  Since in this case $C_{T/2,\delta/4} = \mathcal{O}\left(  (B^2 +\bar{B}^2) \log\left( \frac{T |\mathcal{F}_{i_\star}|}{\delta}\right) \right)$, combining this with Equation~\ref{equation::model_selection_uniform_final_supporting} and a union bound implies
\begin{align*}
\mathbb{E}_{x \sim \mathcal{P}, a \sim \mathrm{Uniform}(\mathcal{A}) } \left[ \left( f_\star(x, a) - \widehat{f}^{(\underbar{i})}_T(x, a) \right)^2\right] \leq  \frac{\underbar{c}  (\bar{B}^2 +  B^2)\log(T\max(M,|\mathcal{F}_{i_\star}|)/\delta)}{T}.
\end{align*}
for some universal constant $\underbar{c}  > 0$ and with probability at least $1-\delta$. The result follows.
\end{proof}

\end{document}